\newcommand*{\ARXIV}{} 
\ifdefined\ARXIV
    \documentclass[numbib]{imaiai_arxiv}
\else
    \documentclass[numbib]{imaiai}
\fi
\usepackage{amsfonts}
\usepackage{amsmath}
\usepackage{amsthm}
\usepackage{cite}
\usepackage{graphics}
\usepackage{subfig}
\usepackage{url}

\usepackage{caption}
\usepackage{color}
\usepackage{algorithm}
\usepackage{algorithmic}
\usepackage{graphicx}
\usepackage{nicefrac}
\usepackage{bbm}
\usepackage{wrapfig}
\usepackage{tikz}
\usepackage[titletoc,title]{appendix}
\usepackage{stmaryrd}
\usepackage{mathtools}




\def\be{\begin{equation}}
\def\ee{\end{equation}}
\def\beas{\begin{align*}}
\def\eeas{\end{align*}}
\def\bea{\begin{align}}
\def\eea{\end{align}}

\newcommand{\h}{{\mathbf h}}
\newcommand{\x}{{\mathbf x}}
\newcommand{\y}{{\mathbf y}}

\newcommand{\uu}{{\mathbf u}}
\newcommand{\vv}{{\mathbf v}}
\newcommand{\w}{{\mathbf w}}

\newcommand{\e}{{\mathbf e}}
\newcommand{\aaa}{{\mathbf a}}
\newcommand{\bb}{{\mathbf b}}

\newcommand{\dd}{{\mathbf d}}
\newcommand{\ff}{{\mathbf f}}

\newcommand{\p}{{\mathbf p}}
\newcommand{\q}{{\mathbf q}}

\newcommand{\1}{{\mathbf 1}}

\newcommand{\A}{{\mathcal A}}
\newcommand{\B}{{\mathcal B}}

\newcommand{\X}{{\mathcal X}}

\newcommand{\R}{{\mathbb R}}
\newcommand{\N}{{\mathbb N}}

\newcommand{\abs}[1]{\left\lvert#1 \right\rvert}
\newcommand{\norm}[1]{\left\|#1 \right\|}

\newcommand{\inprod}[2]  {\left\langle{#1},{#2}\right\rangle}

\DeclareMathOperator*{\argmax}{argmax}

\newcommand{\mat}[1]{\llbracket#1\rrbracket}
\newcommand{\matflex}[1]{\left\llbracket#1\right\rrbracket}
\newcommand{\sep}[2]{\mathrm{sep}_{(#1)}\left( #2 \right)}
\newcommand{\rank}[1]{\mathrm{rank}\left( #1 \right)}
\newcommand{\state}[2]{\mathrm{states}\left( #1, #2 \right)}
\newcommand{\trajectory}[1]{\mathrm{trajectory}\left( #1 \right)}
\newcommand{\WInone}{W^{\mathrm{I}}}
\newcommand{\WHnone}{W^{\mathrm{H}}}
\newcommand{\WI}[1]{
	W^{\mathrm{I}, #1}
}
\newcommand{\WH}[1]{
	W^{\mathrm{H}, #1}
}
\newcommand{\WO}{
	W^{\mathrm{O}}
}

\def\multiset#1#2{\ensuremath{\left(\kern-.3em\left(\genfrac{}{}{0pt}{}{#1}{#2}\right)\kern-.3em\right)}}


\newcommand{\eg}{\emph{e.g.}}
\newcommand{\ie}{\emph{i.e.}}

\newcommand{\wrt}{w.r.t.}



\begin{document}
	
	\title{On the Long-Term Memory of Deep Recurrent Networks}
	
	\shorttitle{Long-Term Memory of Deep Recurrent Networks} 
	\shortauthorlist{Levine Sharir Ziv Shashua} 
	
	\author{{\sc Yoav Levine$^*$, Or Sharir, Alon Ziv and Amnon Shashua}\\[2pt]The Hebrew University of Jerusalem\\
		$^*${\email{Corresponding author: yoavlevine@cs.huji.ac.il}}}

	\maketitle
	
	\begin{abstract}
		{A key attribute that drives the unprecedented success of modern Recurrent Neural Networks (RNNs) on learning tasks which involve sequential data, is their ability to model intricate long-term temporal dependencies. 
			However, a well established measure of RNNs long-term memory capacity is lacking, and thus formal understanding of the effect of depth on their ability to correlate data throughout time is limited.
			Specifically, existing depth efficiency results on convolutional networks do not suffice in order to account for the success of deep RNNs on data of varying lengths. 
			In order to address this, we introduce a measure of the network's ability to support information flow across time, referred to as the \emph{Start-End separation rank}, which reflects the distance of the
			function realized by the recurrent network from modeling no dependency between the beginning and end of the input sequence. 
			We prove that deep recurrent networks support Start-End separation ranks which are combinatorially higher than those supported by their shallow counterparts.
			Thus, we establish that depth brings forth
			an overwhelming advantage in the ability of recurrent networks to model
			long-term dependencies, and provide an exemplar of quantifying this key attribute which may be readily extended to other RNN architectures of interest, \eg~variants of LSTM networks. We obtain our results by considering a class of
			recurrent networks referred to as Recurrent Arithmetic Circuits,
			which merge the hidden state with the input via the Multiplicative Integration
			operation, and empirically demonstrate the discussed phenomena on common RNNs.
			Finally, we employ the tool of quantum Tensor
			Networks to gain additional graphic insight regarding the complexity brought
			forth by depth in recurrent networks.}
		{Recurrent neural networks, deep learning, expressiveness, long term dependencies, tensor decompositions, tensor networks.}
	\end{abstract}
	\section{Introduction} \label{sec:intro}
	\ifdefined\SQUEEZE \vspace{-2mm} \fi
	Over the past few years, Recurrent  Neural Networks (RNNs) have become the
	prominent machine learning architectures for modeling sequential data, having
	been successfully employed for language modeling~\citep{sutskever2011generating,pascanu2013difficulty,graves2013generating}, neural
	machine translation~\citep{bahdanau2014neural}, online handwritten
	recognition~\citep{graves2009novel}, speech recognition~\citep{graves2013speech,amodei2016deep}, and more. 
	The success of recurrent networks in learning complex
	functional dependencies for sequences of varying lengths, readily implies that
	long-term and elaborate dependencies in the given inputs are somehow supported
	by these networks. Though connectivity contribution to performance of RNNs has been empirically investigated \citep{zhang2016architectural}, formal understanding of the influence of a recurrent
	network's structure on its expressiveness, and specifically on its
	ever-improving ability to integrate data throughout time (\eg~translating long
	sentences, answering elaborate questions), is lacking.
	
	An ongoing empirical effort to successfully apply recurrent networks to tasks of
	increasing complexity and temporal extent, includes augmentations of the
	recurrent unit such as Long Short Term Memory (LSTM)
	networks~\citep{hochreiter1997long} and their variants (\eg~\citep{gers2000recurrent,cho2014learning}). A parallel avenue, which we focus
	on in this paper, includes the stacking of layers to form deep recurrent
	networks~\citep{schmidhuber1992learning}.
	Deep recurrent networks, which exhibit empirical superiority over shallow
	ones (see \eg~\citep{graves2013speech}),
	implement hierarchical processing of information at every time-step that
	accompanies their inherent time-advancing computation. Evidence for a time-scale related effect arises from experiments \citep{hermans2013training}~--~deep recurrent networks appear to model dependencies which correspond to longer time-scales than shallow
	ones.
	These findings, which imply that depth brings forth a considerable advantage both in complexity and in temporal capacity of recurrent networks, have no
	adequate theoretical explanation.
	
	In this paper, we theoretically address the above presented issues. Based on the relative
	maturity of \emph{depth efficiency} results in neural networks, namely results
	that show that deep networks efficiently express functions that would require
	shallow ones to have a super-linear size (see \eg~\citep{cohen2016expressive,eldan2016power,telgarsky2015representation}), it
	is natural to assume that depth has a similar effect on the expressiveness of
	recurrent networks. Indeed, we show that depth efficiency holds for recurrent
	networks.
	
	However, the distinguishing attribute of recurrent networks,  is their inherent
	ability to cope with varying input sequence length. Thus, once
	establishing the above depth efficiency in recurrent networks, a basic question
	arises, which relates to the apparent depth enhanced long-term memory in
	recurrent networks: {\sl Do the functions which are efficiently expressed by
		deep recurrent networks correspond to dependencies over longer time-scales?}
	We answer this question affirmatively, by showing that depth provides a super-linear (combinatorial) boost to
	the ability of recurrent networks to model long-term dependencies in their inputs.
	
	In order to take-on the above question, we introduce in Section~\ref{sec:racs} a
	recurrent network referred to as a recurrent arithmetic circuit (RAC)
	that shares the architectural features of RNNs, and differs from them in
	the type of non-linearity used in the calculation. This type of connection
	between state-of-the-art machine learning algorithms and arithmetic circuits
	(also known as Sum-Product Networks~\citep{poon2011sum}) has well-established
	precedence in the context of neural networks. \citep{NIPS2011_4350} prove a
	depth efficiency result on such networks, and \citep{cohen2016expressive}
	theoretically analyze the class of Convolutional Arithmetic Circuits which
	differ from common ConvNets in the exact same fashion in which RACs differ from
	more standard RNNs. Conclusions drawn from such analyses were empirically shown
	to extend to common ConvNets (\citep{sharir2018expressive,cohen2017inductive,cohen2017boosting,levine2018deep}).
	Beyond their connection to theoretical models, RACs are similar to
	empirically successful recurrent network architectures. The modification which
	defines RACs resembles that of Multiplicative RNNs used by
	\citep{sutskever2011generating} and of Multiplicative Integration networks used
	by \citep{wu2016multiplicative}, which provide a substantial performance boost
	over many of the existing RNN models.
	In order to obtain our results, we make a connection between RACs and the Tensor Train (TT) decomposition \citep{oseledets2011tensor}, which suggests that Multiplicative RNNs may be related to a generalized TT-decomposition, similar to the way \citep{cohen2016convolutional} connected ReLU ConvNets to generalized tensor decompositions.
	
	We move on to introduce in Section~\ref{sec:corr} the notion of \emph{Start-End separation rank} as a measure of the recurrent network's ability to model elaborate long-term dependencies. In order to analyze the long-term dependencies modeled by a
	function defined over a sequential input which extends $T$ time-steps, we partition the
	inputs to those which arrive at the first $\nicefrac{T}{2}$ time-steps
	(``Start'') and the last $\nicefrac{T}{2}$ time-steps (``End''), and ask how far
	the function realized by the recurrent network is from being separable \wrt~this
	partition. Distance from separability is measured through the notion of
	separation rank~\citep{beylkin2002numerical}, which can be viewed as a
	surrogate of the $L^2$ distance from the closest separable function. For a given
	function, high Start-End separation rank implies that the function induces
	strong dependency between the beginning and end of the input sequence, and vice
	versa.
	
	In Section~\ref{sec:results} we directly address the depth enhanced long-term memory
	question above, by examining depth $L=2$ RACs and proving that functions realized by these deep networks enjoy Start-End
	separation ranks that are combinatorially higher than those of shallow networks, implying that indeed these functions can model more elaborate input dependencies
	over longer periods of time.
	An additional reinforcing result is that the
	Start-End separation rank of the deep recurrent network grows combinatorially with
	the sequence length, while that of the shallow recurrent network is
	\emph{independent} of the sequence length. Informally, this implies that vanilla
	shallow recurrent networks are inadequate in modeling dependencies of long input
	sequences, since in contrast to the case of deep recurrent networks, the modeled
	dependencies achievable by shallow ones do not adapt to the actual length of
	the input. Finally, we present and motivate a quantitative conjecture by which the Start-End
	separation rank of recurrent networks grows combinatorially with the network depth. A proof of this conjecture, which provides an even deeper insight regarding the advantages of depth in recurrent networks, is left as an open problem.
	
	Finally, in Section~\ref{sec:exp} we present numerical evaluations which support of the above theoretical findings. Specifically, we perform two experiments that directly test the ability of recurrent networks to model complex long-term temporal dependencies. 
    Our results exhibit a clear boost in memory capacity of deeper recurrent networks relative to shallower networks that are given the same amount of resources, and thus directly demonstrate the theoretical trends established in this paper.
	
	\ifdefined\SQUEEZE \vspace{-4mm} \fi
	
	\section{Recurrent Arithmetic Circuits} \label{sec:racs}
	\ifdefined\SQUEEZE \vspace{-2mm} \fi
	
	In this section, we introduce a class of recurrent networks referred to as
	Recurrent Arithmetic Circuits (RACs), which shares the architectural features of
	standard RNNs. As demonstrated below, the operation of RACs on sequential data
	is identical to the operation of RNNs, where a hidden state mixes information
	from previous time-steps with new incoming data (see
	Figure~\ref{fig:recurrent_net}). The two classes differ only in the type of
	non-linearity used in the calculation, as described by
	Equations~\eqref{eq:shallow_rn}-\eqref{eq:g_rac}. In the following sections, we utilize
	the algebraic properties of RACs for proving results regarding their ability to
	model long-term dependencies of their inputs.
	
	
	\begin{figure}
		\centering
		\includegraphics[width=\linewidth]{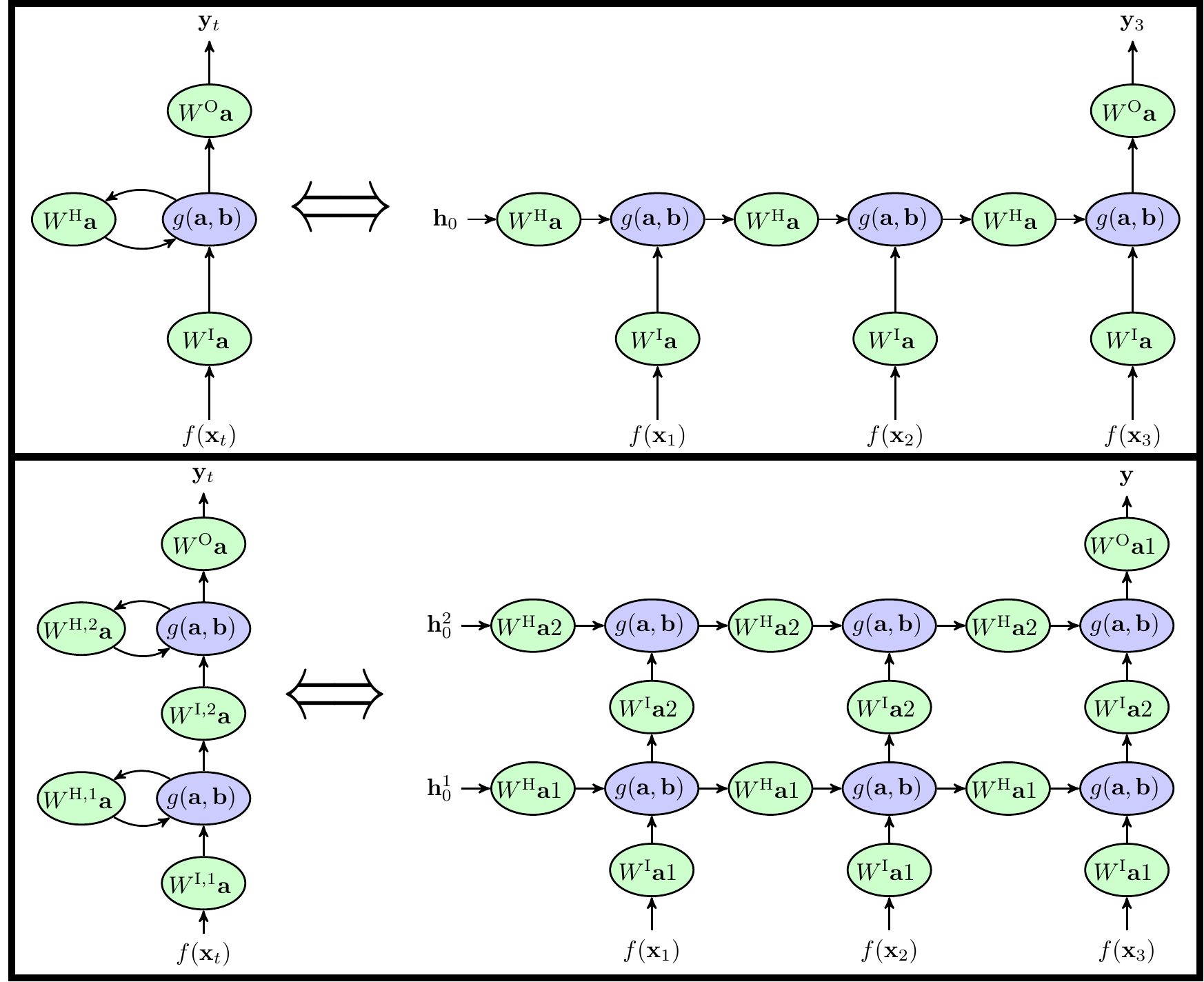}
		\ifdefined\SQUEEZE \vspace{-6mm} \fi
		\caption{Shallow and deep recurrent networks, as described by Equations~\eqref{eq:shallow_rn} and~\eqref{eq:deep_rn}, respectively.}\ifdefined\SQUEEZE \vspace{-6mm} \fi
		
		\label{fig:recurrent_net}
	\end{figure}
	
	We present below the basic framework of shallow recurrent networks
	(top of Figure~\ref{fig:recurrent_net}), which
	describes both the common RNNs and the newly introduced RACs. A recurrent
	network is a network that models a discrete-time dynamical system; we focus on
	an example of a sequence to sequence classification task into one of the
	categories $\{1,...,C\}\equiv[C]$. Denoting the temporal dependence by $t$,
	the sequential input to the network is $\{\x^t\in \X\}_{t=1}^T$, and the output is a sequence of class scores
	vectors $\{\y^{t,L,\Theta}\in\R^C\}_{t=1}^T$, where $L$ is the network depth, $\Theta$
	denotes the parameters of the recurrent network, and $T$ represents the extent of the sequence in time-steps.
	We assume the input lies in some input space $\X$ that may be discrete (e.g.
	text data) or continuous (e.g. audio data), and that some initial mapping
	$\ff:\X\rightarrow\R^M$ is preformed on the input, so that all input types are
	mapped to vectors $\ff(\x^t)\in \R^M$. The function $\ff(\cdot)$ may be viewed
	as an encoding, e.g. words to vectors or images to a final dense layer
	via some trained ConvNet. The output at time $t\in[T]$ of the shallow (depth
	$L=1$) recurrent network with $R$ hidden channels, depicted at
	the top of Figure~\ref{fig:recurrent_net}, is given by:
	\begin{align}\label{eq:shallow_rn}
	\h^{t} &= g\left(\WHnone \h^{t-1},\WInone \ff(\x^t)\right) \\
	\y^{t,1, \Theta} &= \WO \h^{t}, \nonumber
	\end{align}
	where $\h^t\in\R^R$ is the hidden state of the network at time $t$  ($\h^0$ is some initial hidden state), $\Theta$
	denotes the learned parameters $\WInone \in \R^{R \times M},
	\WHnone \in \R^{R \times R}, \WO \in \R^{C \times R}$, which are the input, hidden
	and output weights matrices respectively, and $g$ is some non-linear operation.
	We omit a bias term for simplicity.
	For common RNNs, the non-linearity is given by:
	\begin{equation}\label{eq:g_rnn}
	g^{\mathrm {RNN}}(\aaa,\bb) = \sigma(\aaa+\bb),
	\end{equation}
	where $\sigma(\cdot)$ is typically some point-wise non-linearity such as
	sigmoid, tanh etc. For the newly introduced class of RACs, $g$ is given by:
	\begin{equation} \label{eq:g_rac}
	g^{\mathrm {RAC}}(\aaa,\bb) = \aaa \odot \bb,
	\end{equation}
	where the operation $\odot$ stands for element-wise multiplication between
	vectors, for which the resultant vector upholds
	$(\aaa \odot \bb)_i = a_i \cdot b_i$. This form of merging the input and the
	hidden state by multiplication rather than addition is referred to as
	Multiplicative Integration~\citep{wu2016multiplicative}.
	
	The extension to deep recurrent networks is natural, and we follow the common
	approach (see e.g. \citep{hermans2013training}) where each layer acts as a
	recurrent network which receives the hidden state of the previous layer as its
	input. The output at time $t$ of the depth $L$ recurrent network with $R$
	hidden channels in each layer, depicted at the bottom of
	Figure~\ref{fig:recurrent_net}, is constructed
	by the following:
	\begin{align}
	\h^{t,l} &= g\left(\WH{l} \h^{t-1,l}, \WI{l} \h^{t,l-1}\right) \nonumber\\
	\h^{t,0} &\equiv \ff(\x^t) \label{eq:deep_rn} \\
	\y^{t,L, \Theta} &= \WO \h^{t,L}, \nonumber
	\end{align}
	where $\h^{t,l}\in\R^R$ is the state of the depth $l$ hidden unit at time $t$ ($\h^{0,l}$ is some initial hidden state per layer),
	and $\Theta$ denotes the learned parameters. Specifically,
	$\WI{l} \in \R^{R \times R} ~(l>1), \WH{l}\in\R^{R\times R}$
	are the input and hidden weights matrices at depth $l$, respectively. For $l=1$,
	the weights matrix which multiplies the inputs vector has the appropriate
	dimensions: $\WI{1}\in \R^{R \times M}$. The output weights matrix is
	$\WO\in \R^{C \times R}$ as in the shallow case, representing a final
	calculation of the scores for all classes $1$ through $C$ at every time-step.
	The non-linear operation $g$ determines the type of the deep recurrent network,
	where a common deep RNN is obtained by choosing $g=g^{\mathrm {RNN}}$
	[Equation~\eqref{eq:g_rnn}], and a deep RAC is obtained for
	$g=g^{\mathrm {RAC}}$~[Equation~\eqref{eq:g_rac}].
	
	We consider the newly presented class of RACs to be a good surrogate of common
	RNNs. Firstly, there is an obvious structural resemblance between the two
	classes, as the recurrent aspect of the calculation has the exact same form in
	both networks (Figure~\ref{fig:recurrent_net}). In fact, recurrent networks that
	include Multiplicative Integration similarly to RACs (and include additional non-linearities), have been shown to
	outperform many of the existing RNN models~\citep{sutskever2011generating,
		wu2016multiplicative}. Secondly, as mentioned above, arithmetic circuits have
	been successfully used as surrogates of convolutional networks. The fact that
	\citep{cohen2016convolutional} laid the foundation for extending the proof
	methodologies of convolutional arithmetic circuits to common ConvNets with ReLU
	activations, suggests that such adaptations may be made in the recurrent
	network analog, rendering the newly proposed class of recurrent networks all
	the more interesting. Finally, RACs have recently been shown to operate well in practical settings \citep{khrulkov2018expressive}.
	In the following sections, we make use of the algebraic properties of RACs in
	order to obtain clear-cut observations regarding the benefits of depth in recurrent
	networks.
	
	\ifdefined\SQUEEZE \vspace{-3mm} \fi
	
	\section{Temporal Dependencies Modeled by Recurrent Networks}\label{sec:corr}
	\ifdefined\SQUEEZE \vspace{-2mm} \fi
	
	In this section, we establish means for quantifying the ability of recurrent
	networks to model long-term temporal dependencies in the sequential input data.
	We begin by introducing the Start-End
	separation-rank of the function realized by a
	recurrent network as a measure of the amount of information flow across time
	that can be supported by the network. We then tie the Start-End separation rank to the
	algebraic concept of grid tensors~\citep{hackbusch2012tensor}, which will allow us
	to employ tools and results from tensorial analysis in order to show that depth
	provides a powerful boost to the ability of recurrent networks to model
	elaborate long-term temporal dependencies.
	\ifdefined\SQUEEZE \vspace{-2mm} \fi
	
	\subsection{The Start-End Separation Rank}\label{sec:corr:sep}
	\ifdefined\SQUEEZE \vspace{-2mm} \fi
	
	We define below the concept of the \emph{Start-End separation rank} for
	functions realized by recurrent networks after $T$ time-steps, \ie~functions that take as input $X=(\x^1,\ldots,\x^T)\in\X^T$. The separation rank
	quantifies a function's distance from separability with respect to two
	disjoint subsets of its inputs. Specifically, let $(S,E)$ be a partition of
	input indices, such that $S=\{1,\ldots,\nicefrac{T}{2}\}$ and
	$E=\{\nicefrac{T}{2} + 1, \ldots, T\}$ (we consider even values of $T$
	throughout the paper for convenience of presentation). This implies that
	$\{\x^s\}_{s\in S}$ are the first $T/2$ (``Start'') inputs to the network, and
	$\{\x^e\}_{e\in E}$ are the last $T/2$ (``End'') inputs to the network. For a
	function $y:\X^T\to\R$, the \emph{Start-End separation rank} is defined as
	follows:
	\begin{align}
	\sep{S,E}{y} \equiv\min\left\{K\in\N\cup\{0\}:\exists{g^s_1{\ldots}g^s_K:\X^{T/2}\to\R,~g^e_1{\ldots}g^e_K:\X^{T/2}\to\R}~~s.t.\right.
	\quad~\label{eq:sep_rank}\\
	\left.y(\x^1,\ldots,\x^T) =\sum\nolimits_{\nu=1}^{K}g^s_{\nu}(\x^{1},\ldots,\x^{T/2})g^e_\nu(\x^{T/2+1},\ldots,\x^{T})
	\right\}.
	\nonumber
	\end{align}
	In words, it is the minimal number of summands that together give $y$, where
	each summand is \emph{separable \wrt~$(S,E)$}, \ie~is equal to a product of two
	functions~--~one that intakes only inputs from the first $T/2$ time-steps, and
	another that intakes only inputs from the last $T/2$ time-steps.
	
	The separation rank \wrt~a general partition of the inputs was
	introduced in \citep{beylkin2002numerical} for high-dimensional numerical analysis, and was employed for various
	applications, \eg~chemistry~\citep{harrison2003multiresolution},
	particle engineering~\citep{hackbusch2006efficient}, and machine
	learning~\citep{beylkin2009multivariate}. \citep{cohen2017inductive} connect
	the separation rank to the $L^2$~distance of the function from the set of separable
	functions, and use it to measure dependencies modeled by deep
	convolutional networks. \citep{levine2018deep} tie the separation rank to the
	family of quantum entanglement measures, which quantify dependencies in
	many-body quantum systems.
	
	In our context, if the Start-End separation rank of a function realized by a
	recurrent network is equal to~$1$, then the function is separable, meaning it
	cannot model any interaction between the inputs which arrive at the beginning
	of the sequence and the inputs that follow later, towards the end of the
	sequence. Specifically, if $\sep{S,E}{y}=1$ then there exist
	$g^s:\X^{T/2}\to\R$ and $g^e:\X^{T/2}\to\R$ such that
	$y(\x^1,\ldots,\x^T)=g^s(\x^1,\ldots,\x^{T/2})g^e(\x^{T/2+1},\ldots,\x^{T})$,
	and the function~$y$ cannot take into account consistency between the values of
	$\{\x^1,\ldots,\x^{T/2}\}$ and those of $\{\x^{T/2+1},\ldots,\x^{T}\}$. In a
	statistical setting, if~$y$ were a probability density function, this would
	imply that $\{\x^1,\ldots,\x^{T/2}\}$ and $\{\x^{T/2+1},\ldots,\x^{T}\}$ are
	statistically independent. The higher $\sep{S,E}{y}$ is, the farther~$y$ is
	from this situation, \ie~the more it models dependency between the beginning
	and the end of the inputs sequence. Stated differently, if the recurrent
	network's architecture restricts the hypothesis space to functions with low
	Start-End separation ranks, a more elaborate long-term temporal dependence, which
	corresponds to a function with a higher Start-End separation rank, cannot be
	learned.
	
	In Section~\ref{sec:results} we show that deep RACs support Start-End
	separations ranks which are combinatorially larger than those supported by
	shallow RACs, and are therefore much better fit to model long-term temporal
	dependencies. To this end, we employ in the following sub-section the
	algebraic tool of \emph{grid tensors} that will allow us to evaluate the Start-End separation ranks of deep and
	shallow RACs.
	
	\ifdefined\SQUEEZE \vspace{-2mm} \fi
	
	\subsection{Bounding the Start-End Separation Rank via Grid Tensors}\label{sec:corr:grid}
	\ifdefined\SQUEEZE \vspace{-1mm} \fi
	We begin by laying out basic concepts in tensor theory required for
	the upcoming analysis. The core concept of a \emph{tensor} may be thought of as a
	multi-dimensional array. The \emph{order} of a
	tensor is defined to be the number of indexing entries in the array,
	referred to as \emph{modes}. The \emph{dimension} of a tensor in a particular
	mode is defined as the number of values taken by the index in that
	mode. If $\A$ is a tensor of order $T$ and dimension $M_i$ in each mode
	$i\in[T]$, its entries are denoted $\A_{d_1...d_T}$, where the index in each
	mode takes values $d_i\in [M_i]$.
	A fundamental operator in tensor analysis is the \emph{tensor product}, which
	we denote by $\otimes$. It is an operator that intakes two tensors
	$\A\in\R^{M_1{\times\cdots\times}M_P}$ and
	$\B\in\R^{M_{P+1}{\times\cdots\times}M_{P+Q}}$ (orders $P$ and $Q$ respectively), and returns a tensor
	$\A\otimes\B\in\R^{M_1{\times\cdots\times}M_{P+Q}}$  (order $P+Q$) defined by:
	$(\A\otimes\B)_{d_1{\ldots}d_{P+Q}}=\A_{d_1{\ldots}d_P}\cdot\B_{d_{P+1}{\ldots}d_{P+Q}}$.
	An additional concept we will make use of is the \emph{matricization of $\A$ \wrt~the partition $(S,E)$}, denoted $\mat{\A}_{S,E}\in\R^{M^{\nicefrac{T}{2}}\times M^{\nicefrac{T}{2}}}$, which is essentially the arrangement of the tensor elements as a matrix whose rows correspond to $S$ and columns to $E$ (formally presented in Appendix~\ref{app:mat}).
	
	We consider the
	function realized by a shallow RAC with $R$ hidden channels, which
	computes the score of class $c\in[C]$ at time $T$. This function, which is
	given by a recursive definition in Equations~\eqref{eq:shallow_rn} and~\eqref{eq:g_rac},
	can be alternatively written in the following closed form:
	\begin{equation}
	y^{T,1, \Theta}_{c}\left(\x^1,\ldots,\x^T\right)
	= \sum\nolimits_{d_1{\ldots}d_T=1}^M\left(\A^{T,1, \Theta}_c\right)_{d_1,\ldots,d_T}\prod\nolimits_{i=1}^{T} f_{d_i}(\x^i),
	\label{eq:score}
	\end{equation}
	where the order $T$ tensor $\A^{T,1, \Theta}_c$, which lies at the heart of
	the above expression, is referred to as the \emph{shallow RAC weights tensor},
	since its entries are polynomials in the network weights $\Theta$.
	Specifically, denoting the rows of the input weights matrix, $\WInone$,
	by $\aaa^{\mathrm{I},\alpha}\in\R^M$ (or element-wise:
	$a^{\mathrm{I},\alpha}_j=\WInone_{\alpha,j}$), the rows of the hidden
	weights matrix, $\WHnone$, by $\aaa^{\mathrm{H},\beta}\in\R^R$ (or
	element-wise: $a^{\mathrm{H},\beta}_j=\WHnone_{\beta,j}$), and the rows
	of the output weights matrix, $\WO$, by
	$\aaa^{\mathrm{O},c}\in\R^R,~c\in[C]$ (or element-wise:
	$a^{\mathrm{O},c}_j=\WO_{c,j}$), the shallow RAC weights tensor can
	be gradually constructed in the following fashion:
	\begin{align}
	\underbrace{\phi^{2,\beta}}_{\text{order $2$ tensor}} &= \sum\nolimits_{\alpha=1}^{R}~ a_\alpha^{\mathrm{H},\beta}~~
	\aaa^{\mathrm{I},\alpha} ~\otimes \aaa^{\mathrm{I},\alpha}
	\nonumber \\[-3mm]
	&~~~\cdots
	\nonumber\\[-1mm]
	\underbrace{\phi^{t,\beta}}_{\text{order $t$ tensor}} &= \sum\nolimits_{\alpha=1}^{R} a_\alpha^{\mathrm{H},\beta}
	\phi^{t-1,\alpha} \otimes
	\aaa^{\mathrm{I},\alpha}
	\nonumber\\[-3mm]
	&~~~\cdots
	\nonumber\\[-1mm]
	\underbrace{\A^{T,1, \Theta}_c}_{\text{order $T$ tensor}} &= \sum\nolimits_{\alpha=1}^{R}
	a_\alpha^{\mathrm{O},c}
	\phi^{T-1,\alpha} \otimes
	\aaa^{\mathrm{I},\alpha},
	\label{eq:tt_decomp}
	\end{align}
	having set $\h^0=\left(W^{\textrm H}\right)^{\dagger}\1$, where $\dagger$ is the pseudoinverse operation. In the above equation, the tensor products, which appear inside the sums, are
	directly related to the Multiplicative Integration property of RACs
	[Equation~\eqref{eq:g_rac}]. The sums originate in the multiplication of the hidden
	states vector by the hidden weights matrix at every time-step
	[Equation~\eqref{eq:shallow_rn}].
	The construction of the shallow RAC weights tensor, presented in Equation~\eqref{eq:tt_decomp}, is referred to as a Tensor Train (TT) decomposition of
	TT-rank $R$ in the tensor analysis community~\citep{oseledets2011tensor} and is analogously described by a
	Matrix Product State (MPS) Tensor Network (see \citep{orus2014practical}) in the quantum physics community.
	See Appendix~\ref{app:rac_tns} for the Tensor Networks construction of deep and shallow RACs, which provides graphical insight regarding the complexity brought forth by depth in recurrent networks.

	We now present the concept of grid tensors, which are a form of function discretization. Essentially, the function is
	evaluated for a set of points on an exponentially large grid in the
	input space and the outcomes are stored in a tensor. Formally, fixing a set of \emph{template} vectors
	$\x^{(1)},\ldots,\x^{(M)} \in \X$, the points on the grid are the set
	$\{(\x^{(d_1)},\ldots,\x^{(d_T)})\}_{d_1,\ldots,d_T=1}^M$. Given a function
	$y(\x^1,\ldots,\x^T)$, the set of its values on the grid arranged in the form of a tensor are
	called the grid tensor induced by $y$, denoted
	$\A(y)_{d_1,\ldots,d_T} \equiv y(\x^{(d_1)},\ldots,\x^{(d_T)})$.
	The grid tensors of functions realized by recurrent networks, will allow us to
	calculate their separations ranks and establish definitive conclusions
	regarding the benefits of depth these networks.
	Having presented the tensorial structure of the function realized by a shallow
	RAC, as given by Equations~\eqref{eq:score} and~\eqref{eq:tt_decomp} above, we are now
	in a position to tie its Start-End separation rank to its grid tensor, as
	formulated in the following claim:
	\begin{claim}\label{claim:grid_sep_shallow}
		Let $y^{T,1, \Theta}_c$ be a function realized by a shallow RAC
		(top of Figure~\ref{fig:recurrent_net}) after $T$
		time-steps, and let $\A^{T,1, \Theta}_c$ be its shallow RAC weights tensor,
		constructed according to Equation~\eqref{eq:tt_decomp}.
		Assume that the network's initial mapping functions~$\{f_{d}\}_{d=1}^M$ are
		linearly independent, and that they, as well as the
		functions~$g^s_\nu,g^e_\nu$ in the definition of Start-End separation rank
		[Equation~\eqref{eq:sep_rank}], are measurable and square-integrable.
		Then, there exist template vectors $\x^{(1)}, \ldots, \x^{(M)} \in \X$
		such that the following holds:
		\begin{equation}
		\sep{S,E}{y^{T,1, \Theta}_c}=\rank{\mat{\A(y^{T,1, \Theta}_c)}_{S,E}}=\rank{\mat{\A^{T,1, \Theta}_c})_{S,E}},
		\end{equation}
		where $\A(y^{T,1, \Theta}_c)$ is the grid tensor of $y^{T,1, \Theta}_c$
		with respect to the above template vectors.
	\end{claim}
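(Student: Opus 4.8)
The plan is to prove both equalities at once by establishing a chain of inequalities that pins the separation rank between two ranks which, for a suitable choice of template vectors, coincide. Let $\A$ denote the shallow RAC weights tensor $\A^{T,1,\Theta}_c$, let $\mathbf{M}$ denote its matricization $\mat{\A}_{S,E}$ (an $M^{T/2}\times M^{T/2}$ matrix), and write $R'=\rank{\mathbf{M}}$. The first ingredient is to express the grid tensor in terms of $\A$: fixing template vectors $\x^{(1)},\dots,\x^{(M)}\in\X$ and defining the $M\times M$ matrix $V$ by $V_{ij}=f_j(\x^{(i)})$, substitution of the template points into the closed form \eqref{eq:score} shows that $\A(y^{T,1,\Theta}_c)$ is obtained from $\A$ by acting on each of its $T$ modes by $V$. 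Matricizing with respect to $(S,E)$ turns this into the matrix identity $\mat{\A(y^{T,1,\Theta}_c)}_{S,E}=V^{\otimes T/2}\,\mathbf{M}\,\bigl(V^{\otimes T/2}\bigr)^{\top}$, so in general $\rank{\mat{\A(y^{T,1,\Theta}_c)}_{S,E}}\le R'$, with equality as soon as $V$ is invertible.

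The second ingredient is the choice of templates, and this is where linear independence of $\{f_d\}_{d=1}^M$ is used. If the vector-valued map $\x\mapsto(f_1(\x),\dots,f_M(\x))$ had range contained in a proper subspace of $\R^M$, any nonzero vector orthogonal to that range would give a nontrivial linear dependence among the $f_d$; hence the range spans $\R^M$, and we may pick $\x^{(1)},\dots,\x^{(M)}$ whose images form a basis of $\R^M$. With these templates $V$ is invertible, so $\rank{\mat{\A(y^{T,1,\Theta}_c)}_{S,E}}=R'=\rank{\mat{\A^{T,1,\Theta}_c}_{S,E}}$ — the right-hand equality of the claim. We fix this choice of templates henceforth.

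It remains to sandwich the separation rank, i.e.\ to show $\rank{\mat{\A(y^{T,1,\Theta}_c)}_{S,E}}\le\sep{S,E}{y^{T,1,\Theta}_c}\le R'$. For the upper bound, write a minimal rank decomposition $\mathbf{M}=\sum_{\nu=1}^{R'}\uu_\nu\w_\nu^{\top}$; reshaping $\uu_\nu$ and $\w_\nu$ into order-$T/2$ tensors and plugging into \eqref{eq:score} expresses $y^{T,1,\Theta}_c$ as a sum of $R'$ terms, each a product of a function of the Start inputs $(\x^1,\dots,\x^{T/2})$ and a function of the End inputs $(\x^{T/2+1},\dots,\x^T)$; each such factor is a finite linear combination of products, over disjoint blocks of variables, of the $f_d$'s, hence measurable and square-integrable, so it is an admissible summand in \eqref{eq:sep_rank} and $\sep{S,E}{y^{T,1,\Theta}_c}\le R'$. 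For the lower bound, take any admissible decomposition $y^{T,1,\Theta}_c=\sum_{\nu=1}^{K}g^s_\nu\,g^e_\nu$ with $K=\sep{S,E}{y^{T,1,\Theta}_c}$; since the grid-tensor operator is linear and sends a product over the $(S,E)$ partition to a tensor product, $\mat{\A(y^{T,1,\Theta}_c)}_{S,E}=\sum_{\nu=1}^{K}\mathrm{vec}\bigl(\A(g^s_\nu)\bigr)\,\mathrm{vec}\bigl(\A(g^e_\nu)\bigr)^{\top}$ is a sum of $K$ rank-one matrices, giving $\rank{\mat{\A(y^{T,1,\Theta}_c)}_{S,E}}\le K$. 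Combined with the second ingredient, all quantities in the claim are forced to be equal.

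I expect the main obstacle to be measure-theoretic rather than algebraic: the separation rank quantifies over functions whereas the grid tensor requires genuine pointwise values, so one must verify that the separable summands produced in the upper bound are admissible (this is exactly the role of the square-integrability hypotheses, via the fact that products over disjoint variable blocks of $L^2$ functions are $L^2$), and, symmetrically, that a decomposition attaining the separation rank can be evaluated at the chosen template points so that the grid-tensor identities above hold verbatim. A minor preliminary, which I would dispatch first by unrolling \eqref{eq:shallow_rn}–\eqref{eq:g_rac}, is to confirm that the tensor produced by the recursion \eqref{eq:tt_decomp} with the initialization $\h^0=(\WHnone)^{\dagger}\1$ is indeed the coefficient tensor appearing in \eqref{eq:score}.
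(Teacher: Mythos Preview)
Your argument is correct and reaches the same conclusion, but by a more self-contained route than the paper. The paper's proof outsources the equality $\sep{S,E}{y^{T,1,\Theta}_c}=\rank{\mat{\A^{T,1,\Theta}_c}_{S,E}}$ to a cited result of Cohen and Shashua, and then matches this rank to the grid-tensor rank by invoking a lemma of Hackbusch on rank preservation under invertible mode-wise transforms, together with a cited fact that linear independence of the $f_d$ guarantees templates for which the matrix $F$ (your $V$) is nonsingular. You instead prove the sandwich $\rank{\mat{\A(y)}_{S,E}}\le\sep{S,E}{y}\le\rank{\mathbf{M}}$ directly: the upper bound from a rank decomposition of $\mathbf{M}$, the lower bound by evaluating a minimal separable decomposition on the grid---the latter being precisely the argument the paper records separately as Claim~\ref{claim:grid_sep_deep}. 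Your direct argument that linear independence forces the image of $\x\mapsto(f_1(\x),\ldots,f_M(\x))$ to span $\R^M$, and hence to contain a basis, is exactly the content of the cited template-existence fact. What your route buys is an elementary, citation-free proof; what the paper's route buys is brevity. The measure-theoretic worry you flag about pointwise evaluation of $L^2$ representatives is legitimate in principle, but note that the paper's own proof of Claim~\ref{claim:grid_sep_deep} makes the same move without comment; in both places the equality in \eqref{eq:sep_rank} is being read as a genuine pointwise identity of functions on $\X^T$, under which evaluation at the templates is unproblematic.
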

	\ifdefined\SQUEEZE \vspace{-4mm}\fi
	
	\begin{proof}
		We first note that though square-integrability may seem as a limitation at first glance (for
		example neurons $f_{d}(\x)=\sigma(\w_d^\top\x+b_d)$ with sigmoid or ReLU
		activation $\sigma(\cdot)$, do not meet this condition), in practice our inputs are bounded (\eg~image pixels by holding
		intensity values, etc). Therefore, we may view these functions as having compact support,
		which, as long as they are continuous (holds in all cases of interest),
		ensures square-integrability.
		
		We begin by proving the equality
		$\sep{S,E}{y_c^{T,1, \Theta}} = \rank{\mat{\A_c^{T,1, \Theta}}_{S,E}}$. As shown in
		\citep{cohen2017inductive}, for any function
		$f:{\cal{X}}\times \cdots \times {\cal{X}} \to R$ which follows the structure of
		Equation~\eqref{eq:score} with a general weights tensor $\A$, assuming that
		$\{f_d\}_{d=1}^M$ are linearly independent, measurable, and
		square-integrable (as assumed in Claim~\ref{claim:grid_sep_shallow}), it
		holds that $\sep{S,E}{f} = \rank{\mat{\A}_{S,E}}$. Specifically, for
		$f=y_c^{T,1, \Theta}$ and $\A = \A_c^{T,1, \Theta}$ the above equality holds.
		
		It remains to prove that there exists template vectors for which
		$\rank{\mat{\A_c^{T,1, \Theta}}_{S,E}} = \rank{\mat{\A(y_c^{T,1, \Theta})}_{S,E}}$.
		For any given set of template vectors $\x^{(1)},\ldots,\x^{(M)}\in\X$, we define
		the matrix $F \in \R^{M \times M}$ such that $F_{ij} = f_j(\x^{(i)})$, for which
		it holds that:
		\begin{align*}
		\A(y_c^{T,1, \Theta})_{k_1,\ldots,k_T} &= \sum_{d_1,\ldots,d_T = 1}^M \left(\A_c^{T,1, \Theta} \right)_{d_1,\ldots,d_T} \prod_{i=1}^T f_{d_i}(\x^{(k_i)}) \\
		&= \sum_{d_1,\ldots,d_T = 1}^M \left(\A_c^{T,1, \Theta} \right)_{d_1,\ldots,d_T} \prod_{i=1}^T F_{k_i d_i}.
		\end{align*}
		The right-hand side in the above equation can be regarded as a linear
		transformation of $\A_c^{T,1, \Theta}$ specified by the tensor operator
		$F \otimes \cdots \otimes F$, which is more commonly denoted by
		$(F \otimes \cdots \otimes F)(\A_c^{T,1, \Theta})$. According to lemma 5.6 in
		\citep{hackbusch2012tensor}, if $F$ is non-singular then
		$\rank{\mat{(F \otimes \cdots \otimes F)(\A_c^{T,1, \Theta})}_{S,E}} =
		\rank{\mat{\A_c^{T,1, \Theta}}_{S,E}}$. To conclude
		the proof, we simply note that \citep{cohen2016convolutional}
		showed that if $\{f_d\}_{d=1}^M$ are linearly independent then there exists
		template vectors for which $F$ is non-singular.
	\end{proof}
	\vspace{2mm}
	
	The above claim establishes an equality between the Start-End separation rank
	and the rank of the matrix obtained by the corresponding grid tensor
	matricization, denoted $\mat{\A(y^{T,1, \Theta}_c)}_{S,E}$, with respect to a
	specific set of template vectors. Note that the limitation to specific
	template vectors does not restrict our results, as grid tensors are
	merely a tool used to bound the separation rank. The additional equality to the rank of the
	matrix obtained by matricizing the shallow RAC weights tensor, will be of use to
	us when proving our main results below (Theorem~\ref{theorem:main_result}).
	
	Due to the inherent use of data duplication in the computation preformed by a
	deep RAC (see Appendix~\ref{app:rac_tns:deep} for further details), it cannot be
	written in a closed tensorial form similar to that of Equation~\eqref{eq:score}.
	This in turn implies that the equality shown in
	Claim~\ref{claim:grid_sep_shallow} does not hold for functions realized by deep
	RACs. The following claim introduces a fundamental relation between a function's
	Start-End separation rank and the rank of the matrix obtained by the
	corresponding grid tensor matricization. This relation, which holds for all functions, is
	formulated below for functions realized by deep RACs:
	\begin{claim}\label{claim:grid_sep_deep}
		Let $y^{T,L, \Theta}_c$ be a function realized by a depth $L$ RAC
		(bottom of Figure~\ref{fig:recurrent_net}) after $T$
		time-steps. Then, for any set of template vectors $\x^{(1)},\ldots,\x^{(M)} \in \X$ it
		holds that:
		\begin{equation}
		\sep{S,E}{y^{T,L, \Theta}_c}\geq \rank{\mat{\A(y^{T,L, \Theta}_c)}_{S,E}},
		\end{equation}
		where $\A(y^{T,L, \Theta}_c)$ is the grid tensor of $y^{T,L, \Theta}_c$ with
		respect to the above template vectors.
	\end{claim}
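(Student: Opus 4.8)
The plan is to establish a fact that holds for \emph{every} function $y:\X^T\to\R$ (the restriction to deep RACs plays no role): for any choice of template vectors $\x^{(1)},\ldots,\x^{(M)}\in\X$, one has $\rank{\mat{\A(y)}_{S,E}}\le\sep{S,E}{y}$. The claim is precisely this inequality specialized to $y=y^{T,L,\Theta}_c$, and the key point is that forming the grid tensor is an evaluation map, so no integrability hypotheses on the functions appearing in the separation-rank decomposition are needed. First I would dispose of the degenerate cases: if $\sep{S,E}{y}=\infty$ there is nothing to prove, and if $\sep{S,E}{y}=0$ then $y\equiv 0$, its grid tensor vanishes, and the matricization has rank $0$. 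So assume $K\equiv\sep{S,E}{y}\in\N$ and fix functions $g^s_1,\ldots,g^s_K,g^e_1,\ldots,g^e_K:\X^{T/2}\to\R$ attaining the minimum in Equation~\eqref{eq:sep_rank}, i.e.\ $y(\x^1,\ldots,\x^T)=\sum_{\nu=1}^{K}g^s_\nu(\x^1,\ldots,\x^{T/2})\,g^e_\nu(\x^{T/2+1},\ldots,\x^T)$.

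The next step is to evaluate this identity at the grid points $(\x^{(d_1)},\ldots,\x^{(d_T)})$, $d_1,\ldots,d_T\in[M]$, and read off the structure of $\mat{\A(y)}_{S,E}$. For each $\nu\in[K]$ define $\uu^\nu\in\R^{M^{\nicefrac{T}{2}}}$ by $(\uu^\nu)_{d_1,\ldots,d_{T/2}}=g^s_\nu(\x^{(d_1)},\ldots,\x^{(d_{T/2})})$ and $\vv^\nu\in\R^{M^{\nicefrac{T}{2}}}$ by $(\vv^\nu)_{d_{T/2+1},\ldots,d_T}=g^e_\nu(\x^{(d_{T/2+1})},\ldots,\x^{(d_T)})$, using the same indexing convention for $M^{\nicefrac{T}{2}}$-dimensional vectors as in the definition of $\mat{\cdot}_{S,E}$ (Appendix~\ref{app:mat}), under which rows are indexed by the modes in $S=\{1,\ldots,\nicefrac{T}{2}\}$ and columns by the modes in $E=\{\nicefrac{T}{2}+1,\ldots,T\}$. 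Then, directly from the definitions, $\mat{\A(y)}_{S,E}=\sum_{\nu=1}^{K}\uu^\nu(\vv^\nu)^\top$, a sum of $K$ rank-one matrices; hence $\rank{\mat{\A(y)}_{S,E}}\le K=\sep{S,E}{y}$, which is the desired inequality.

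There is no real obstacle here — the entire content is the observation that matricization with respect to $(S,E)$ maps a summand that is separable with respect to $(S,E)$ to an outer product of two vectors, i.e.\ a rank-one matrix, so the only thing to be careful about is aligning the indexing of $\uu^\nu,\vv^\nu$ with the row/column convention of $\mat{\cdot}_{S,E}$. It is worth noting explicitly why only an inequality is obtained, in contrast with the equality of Claim~\ref{claim:grid_sep_shallow}: there, the reverse bound relied on the closed-form TT expression of Equation~\eqref{eq:score} together with the existence of template vectors making the matrix $F$ non-singular, whereas a deep RAC has no such closed tensorial form (because of the data duplication described in Appendix~\ref{app:rac_tns:deep}), so its grid tensor can only witness a lower bound on the separation rank. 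This is, however, precisely the direction needed for the main results: a combinatorially large rank of $\mat{\A(y^{T,L,\Theta}_c)}_{S,E}$ will force a combinatorially large Start-End separation rank for the deep network.
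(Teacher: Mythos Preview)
Your proposal is correct and follows essentially the same approach as the paper's proof: assume a separation-rank decomposition into $K$ separable summands, evaluate at the grid points to obtain the grid tensor as a sum of $K$ outer products, and conclude that the matricization has rank at most $K$. The only differences are cosmetic (you treat the $K=0$ case explicitly and add commentary on why equality fails), and your observation that the argument applies to any function, not just deep RACs, is accurate and consistent with the paper's remark that ``this relation \ldots\ holds for all functions.''
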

	\ifdefined\SQUEEZE \vspace{-4mm} \fi
	\begin{proof}
		If $\sep{S,E}{y_c^{T,L, \Theta}} = \infty$ then the inequality is trivially
		satisfied. Otherwise, assume that
		$\sep{S,E}{y_c^{T,L, \Theta}} = K \in \N$, and let $\{g_i^s, g_i^e\}_{i=1}^K$
		be the functions of the respective decomposition to a sum of separable
		functions, i.e. that the following holds:
		\begin{align*}
		y_c^{T,L, \Theta}(\x^1,\ldots,\x^T)
		&= \sum_{\nu=1}^K g_\nu^s(\x^1,\ldots,\x^{T/2})
		\cdot g_\nu^e(\x^{T/2+1},\ldots,\x^T).
		\end{align*}
		Then, by definition of the grid tensor, for any template vectors $\x^{(1)},\ldots,\x^{(M)}\in \X$ the following
		equality holds:
		\begin{align*}
		\A(y_c^{T,L, \Theta})_{d_1,\ldots,d_N} &=
		\sum_{\nu = 1}^K g_\nu^s(\x^{(d_1)},\ldots,\x^{(d_{T/2})})
		\cdot g_\nu^e(\x^{(d_{T/2+1})},\ldots,\x^{(d_T)}) \\
		&\equiv \sum_{\nu=1}^K V^\nu_{d_1,\ldots,d_{T/2}} U^\nu_{d_{T/2+1},\ldots,d_T},
		\end{align*}
		where $V^\nu$ and $U^\nu$ are the tensors holding the values of $g_\nu^s$
		and $g_\nu^e$, respectively, at the points defined by the template vectors.
		Under the matricization according to the $(S,E)$ partition, it holds that
		$\mat{V^\nu}_{S,E}$ and $\mat{U^\nu}_{S,E}$ are column and row vectors,
		respectively, which we denote by $\vv_\nu$ and $\uu_\nu^T$. It follows that the
		matricization of the grid tensor is given by:
		\begin{align*}
		\mat{\A(y_c^{T,L, \Theta})}_{S,E} &= \sum_{\nu=1}^K \vv_\nu \uu_\nu^T,
		\end{align*}
		which means that
		$\rank{\mat{\A(y_c^{T,L, \Theta})}_{S,E}}\leq K=\sep{S,E}{y_c^{T,L, \Theta}}$.
	\end{proof}
	\vspace{2mm}
	
	Claim~\ref{claim:grid_sep_deep} will allow us to provide a lower bound on the
	Start-End separation rank of functions realized by deep RACs, which we show to
	be significantly higher than the Start-End separation rank of functions realized
	by shallow RACs (to be obtained via Claim~\ref{claim:grid_sep_shallow}). Thus, in the next section, we employ the above presented tools to
	show that a compelling enhancement of the Start-End separation
	rank is brought forth by depth in recurrent networks.
	\ifdefined\SQUEEZE\vspace{-3mm}\fi
	\section{Depth Enhanced Long-Term Memory in Recurrent Networks}
	\ifdefined\SQUEEZE\vspace{-2mm}\fi
	
	\label{sec:results}

	In this section, we present the main theoretical contributions of this paper.
	In Section~\ref{sec:results:proof}, we formally present a result which clearly separates between the memory capacity of a deep ($L=2$) recurrent network and a shallow ($L=1$) one.
	Following the formal presentation of results in
	Theorem~\ref{theorem:main_result}, we discuss some of their implications and
	then conclude by sketching a proof outline for the theorem (full proof is
	relegated to Appendix~\ref{app:proofs:main_result}). In Section~\ref{sec:results:conjecture}, we present a quantitative conjecture regarding the enhanced memory capacity of deep recurrent networks of general depth $L$, which relies on the inherent combinatorial properties of the recurrent network's computation. We leave the formal proof of this conjecture for future work.
	
	\ifdefined\SQUEEZE\vspace{-2mm}\fi
	\subsection{Separating Between Shallow and Deep Recurrent Networks}\label{sec:results:proof}
	\ifdefined\SQUEEZE\vspace{-2mm}\fi
	
	Theorem~\ref{theorem:main_result} states, that the dependencies modeled between
	the beginning and end of the input sequence to a recurrent network, as measured
	by the Start-End separation rank (see Section~\ref{sec:corr:sep}), can be considerably
	more complex for deep networks than for shallow ones:
	
	\begin{theorem}\label{theorem:main_result}
		Let $y^{T,L, \Theta}_c$ be the function computing the output after
		$T$ time-steps of an RAC with $L$ layers, $R$ hidden channels per layer,
		weights denoted by $\Theta$, and initial hidden states $\h^{0,l},~l\in[L]$ (Figure~\ref{fig:recurrent_net} with
		$g=g^{\mathrm {RAC}}$). Assume that the network's initial mapping functions~$\{f_{d}\}_{d=1}^M$ are
		linearly independent and square integrable. Let $\sep{S,E}{y^{T,L, \Theta}_c}$ be the Start-End
		separation rank of $y^{T,L, \Theta}_c$ [Equation~\eqref{eq:sep_rank}]. Then, the
		following holds almost everywhere, \ie~for all values of the parameters $\Theta\times\h^{0,l}$ but a set of
		Lebesgue measure zero:
		
		\begin{enumerate}
			\ifdefined\SQUEEZE \vspace{-2mm}\fi
			\item $\sep{S,E}{y^{T,L, \Theta}_c} = \min\left\{R, M^{\nicefrac{T}{2}} \right\}$, for $L=1$ (shallow network).
			\ifdefined\SQUEEZE	\vspace{-3mm}\fi
			\item $\sep{S,E}{y^{T,L, \Theta}_c} \geq~\multiset{\min\{M,R\}}{T/2} $, for $L =2$ (deep network),
		\end{enumerate}
		\ifdefined\SQUEEZE	\vspace{-3mm}\fi
		where $\multiset{\min\{M,R\}}{T/2}$ is the multiset coefficient, given in the binomial form by
		$\binom{\min\{M,R\}+T/2-1}{T/2}$.
	\end{theorem}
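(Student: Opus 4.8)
The plan is to prove both parts by first reducing, via Claims~\ref{claim:grid_sep_shallow} and~\ref{claim:grid_sep_deep}, to a statement about the rank of a matricized tensor, and then closing with a genericity argument. For a \emph{fixed} choice of template vectors, every entry of the shallow weights tensor $\A^{T,1,\Theta}_c$ --- and of the grid tensor of a deep RAC --- is a polynomial in the entries of $\Theta$ and of the initial states $\h^{0,l}$, so the rank of the corresponding $(S,E)$-matricization is a lower-semicontinuous function of these parameters. Hence, to establish a lower bound on the rank valid for a.e.\ parameter value, it suffices to exhibit one parameter value attaining it: the set where the rank is strictly smaller is the common zero locus of the maximal-order minors, a proper algebraic subvariety, which has Lebesgue measure zero.

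For the shallow network ($L=1$), Claim~\ref{claim:grid_sep_shallow} reduces the statement to showing $\rank{\mat{\A^{T,1,\Theta}_c}_{S,E}}=\min\{R,M^{T/2}\}$ for a.e.\ $\Theta\times\h^0$. The upper bound holds for \emph{every} parameter value: the Tensor-Train construction~\eqref{eq:tt_decomp} exhibits $\A^{T,1,\Theta}_c$ as a matrix-product state with all bond dimensions equal to $R$, so its matricization with respect to the \emph{contiguous} partition $(S,E)$ factors through the single bond at the cut and therefore has rank at most $\min\{R,M^{T/2}\}$. For the matching lower bound I would write down one weight assignment making $\mat{\A^{T,1,\Theta}_c}_{S,E}$ a product of two matrices of rank $\min\{R,M^{T/2}\}$: e.g.\ setting $\WHnone=I$ and $\h^0=\1$ collapses the recursion to $\A^{T,1,\Theta}_c=\sum_{\alpha=1}^{R} a^{\mathrm{O},c}_\alpha(\aaa^{\mathrm{I},\alpha})^{\otimes T}$, whence $\mat{\A^{T,1,\Theta}_c}_{S,E}=W\,\mathrm{diag}(\aaa^{\mathrm{O},c})\,W^{\top}$ with $W$ the matrix whose columns are $(\aaa^{\mathrm{I},\alpha})^{\otimes T/2}$; the regime $R>\multiset{M}{T/2}$ requires additionally activating the mixing in $\WHnone$. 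In either case the lower bound reduces to the nonvanishing of an explicit polynomial (nonsingularity of a generalized-Vandermonde-type factor), which then holds for a.e.\ $\Theta$ by the principle above.

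For the deep network ($L=2$), Claim~\ref{claim:grid_sep_deep} gives $\sep{S,E}{y^{T,2,\Theta}_c}\ge\rank{\mat{\A(y^{T,2,\Theta}_c)}_{S,E}}$ for \emph{any} template vectors, so it suffices to produce one choice of templates and parameters for which this grid-tensor rank is at least $\multiset{\min\{M,R\}}{T/2}$; genericity then upgrades this to a.e.\ $\Theta\times\h^{0,l}$. The construction I would try: by the argument in the proof of Claim~\ref{claim:grid_sep_shallow} one may pick generic templates making the feature matrix $F=[f_j(\x^{(i)})]$ invertible and absorb $F$ into $\WI{1}$, so that the first layer effectively reads one-hot inputs; taking $\WH{1}=I$ and $\h^{0,1}=\1$ then makes the first-layer state accumulate Hadamard products, $\h^{t,1}=\aaa^{(d_1)}\odot\cdots\odot\aaa^{(d_t)}$, where $\aaa^{(d)}$ is the $d$-th column of $\WI{1}$. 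One then chooses the second-layer weights and $\h^{0,2}$ so that $y^{T,2,\Theta}_c$, as a function of the ``Start'' inputs, is channelled through the degree-$T/2$ monomials of $\h^{T/2,1}$ --- this high-degree polynomial dependence being exactly the footprint of the data duplication introduced by the extra layer. Because a degree-$T/2$ monomial of $\h^{T/2,1}=\aaa^{(d_1)}\odot\cdots\odot\aaa^{(d_{T/2})}$ equals the product over $i\le T/2$ of that same monomial evaluated at $\aaa^{(d_i)}$, the ``Start'' block of $\mat{\A(y^{T,2,\Theta}_c)}_{S,E}$ inherits a symmetric-tensor-power structure; choosing the vectors $\aaa^{(d)}$ generically, this block should attain rank equal to the dimension of the degree-$T/2$ symmetric tensors over a $\min\{M,R\}$-dimensional space, i.e.\ exactly $\multiset{\min\{M,R\}}{T/2}$, the number of degree-$T/2$ monomials in $\min\{M,R\}$ variables.

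The step I expect to be the main obstacle is precisely this deep-case construction: arranging the second-layer weights so that $y^{T,2,\Theta}_c$ really does acquire a full-strength degree-$T/2$ dependence on $\h^{T/2,1}$ whose coefficients are linearly independent functions of the ``End'' inputs --- so that the ``End'' block of the factorization of $\mat{\A(y^{T,2,\Theta}_c)}_{S,E}$ does not kill the rank --- while controlling the unavoidable additional Start-dependence carried by the second layer's own state $\h^{T/2,2}$; and then proving that the resulting symmetric-monomial / Veronese-type matrix is nonsingular, i.e.\ that the determinant of a suitable $\multiset{\min\{M,R\}}{T/2}\times\multiset{\min\{M,R\}}{T/2}$ submatrix is a not-identically-zero polynomial in the free parameters. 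By contrast, once such a construction is in hand the passage from one good parameter value to a.e.\ parameter value is routine, because for fixed templates all tensor entries are polynomial --- indeed analytic --- in $\Theta\times\h^{0,l}$.
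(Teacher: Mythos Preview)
Your overall strategy matches the paper's: reduce via Claims~\ref{claim:grid_sep_shallow} and~\ref{claim:grid_sep_deep} to a matricization-rank statement, establish that rank at one parameter point, and upgrade to almost-everywhere via the polynomial/Zariski argument.

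For $L=1$ the paper takes a cleaner route than your explicit construction: it invokes the min-cut theorem of \citep{levine2018deep} on the MPS tensor network representing $\A^{T,1,\Theta}_c$, which gives $\rank{\mat{\A^{T,1,\Theta}_c}_{S,E}}=\min\{R,M^{T/2}\}$ directly, for all $\Theta$ but a null set, with no case split. Your $\WHnone=I$ construction yields $\sum_\alpha a^{\mathrm{O},c}_\alpha(\aaa^{\mathrm{I},\alpha})^{\otimes T}$, whose $(S,E)$-matricization has rank at most $\multiset{M}{T/2}$ (the columns of your $W$ are \emph{symmetric} $T/2$-tensors), so it cannot reach the target in the regime $\multiset{M}{T/2}<R\le M^{T/2}$; you correctly flag that mixing in $\WHnone$ is then needed, but you would still have to supply that construction. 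The min-cut argument sidesteps this entirely.

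For $L=2$ your picture is in the right spirit but the mechanism is not quite the one that works. The paper does \emph{not} arrange for $y$ to depend on degree-$T/2$ monomials of the single vector $\h^{T/2,1}$; instead it sets $\WI{2}_{ij}=\delta_{i1}$, $\WH{2}=\WH{1}=I$, so that $y_c=\prod_{t=1}^{T}\sum_{r}(\h^{t,1})_r=\prod_{t=1}^{T}\sum_{r}\prod_{j\le t}Z_{rd_j}$ for a carefully chosen $Z$ (entries $z^{\Omega^i\delta_{ij}}$). The Start-dependence of the relevant block $B=\mat{\prod_{t>T/2}\sum_r\prod_{j\le t}Z_{rd_j}}_{S,E}$ then comes from the $T/2$ \emph{later} factors $t=\nicefrac{T}{2}+1,\ldots,T$, each of which carries the common prefix $\prod_{j\le T/2}Z_{rd_j}$; expanding the product over $t$ gives a sum of rank-one terms indexed precisely by multisets $\p\in\state{\bar R}{T/2}$, so your Veronese/multiset intuition is ultimately vindicated. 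The step you flag as the obstacle is exactly where the work lies, and it splits into \emph{two} full-rank claims, one for the Start-side factor $U$ and one for the End-side factor $V$: the paper handles $U$ via a vector rearrangement inequality (the identity permutation strictly maximizes $\sum_i\langle\vv^{(i)},\vv^{(\sigma(i))}\rangle$), and $V$ via a combinatorial ``bucket-emptying'' optimization showing a unique dominant monomial in each row, in both cases feeding into a leading-term determinant argument. Your plan does not anticipate this split or these specific tools, but it correctly locates the difficulty.
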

	\ifdefined\SQUEEZE\vspace{-2mm}\fi
	
	The above theorem readily implies that depth entails an enhanced ability of
	recurrent networks to model long-term temporal dependencies in the sequential
	input.
	Specifically, Theorem~\ref{theorem:main_result} indicates depth efficiency -- it
	ensures us that upon randomizing the weights of a deep RAC with $R$ hidden
	channels per layer, with probability $1$ the function realized by it after $T$
	time-steps may only be realized by a shallow RAC with a number of hidden
	channels that is combinatorially large. Stated alternatively, this means that
	almost all functional dependencies which lie in the hypothesis space of deep
	RACs with $R$ hidden channels per layer, calculated after $T$ time-steps, are
	inaccessible to shallow RACs with less than a super-linear number of
	hidden channels. Thus, a shallow recurrent network would require an impractical
	amount of parameters if it is to implement the same
	function as a deep recurrent network.
	
	The established role of the Start-End separation rank as a dependency measure between the
	beginning and the end of the sequence (see Section~\ref{sec:corr:sep}), implies
	that these functions, which are realized by almost any deep network and can
	never be realized by a shallow network of a reasonable size, represent more
	elaborate dependencies over longer periods of time.
	The above notion is strengthened by the fact that the Start-End separation rank
	of deep RACs increases with the sequence length $T$, while the Start-End
	separation rank of shallow RACs is independent of it.
	This indicates that shallow recurrent networks are much more restricted in
	modeling long-term dependencies than the deep ones, which enjoy a combinatorially
	increasing Start-End separation rank as time progresses.
	Below, we present an outline of the proof for Theorem~\ref{theorem:main_result}
	(see Appendix~\ref{app:proofs:main_result} for the full proof):
	
	\vspace{2mm}
	\ifdefined\SQUEEZE\vspace{-3mm}\fi
	\begin{proof}[Proof sketch of Theorem~\ref{theorem:main_result}]
		\leavevmode
		\ifdefined\SQUEEZE	\vspace{-1mm}\fi
		\begin{enumerate}
			\item For a shallow network, Claim~\ref{claim:grid_sep_shallow} establishes
			that the Start-End separation rank of the function realized by a shallow
			($L=1$) RAC is equal to the rank of the matrix obtained by matricizing
			the corresponding shallow RAC weights tensor [Equation~\eqref{eq:score}]
			according to the Start-End partition: $\sep{S,E}{y^{T,1, \Theta}_c}= \rank{\mat{\A^{T,1, \Theta}_c})_{S,E}}$.
			Thus, it suffices to prove that $\rank{\mat{\A^{T,1, \Theta}_c})_{S,E}}=R$
			in order to satisfy bullet
			(1) of the theorem, as the rank is trivially upper-bounded by the
			dimension of the matrix, $M^{\nicefrac{T}{2}}$. To this end, we call
			upon the TT-decomposition of $\A^{T,1, \Theta}_c$, given by
			Equation~\eqref{eq:tt_decomp}, which
			corresponds to the MPS Tensor Network presented in Appendix~\ref{app:rac_tns}. We rely on a recent result by \citep{levine2018deep},
			who state that the rank of the matrix obtained by matricizing any tensor
			according to a partition $(S,E)$, is equal to a minimal cut separating $S$
			from $E$ in the Tensor Network graph representing this tensor. The
			required equality follows from the fact that the TT-decomposition in
			Equation~\eqref{eq:tt_decomp} is of TT-rank $R$, which in turn implies that the
			min-cut in the appropriate Tensor Network graph is equal to $R$.
			\item For a deep network, Claim~\ref{claim:grid_sep_deep} assures us that the
			Start-End separation rank of the function realized by a depth $L=2$ RAC
			is lower bounded by the rank of the matrix obtained by the corresponding
			grid tensor matricization: $\sep{S,E}{y^{T,2, \Theta}_c}\geq \rank{\mat{\A(y^{T,2, \Theta}_c)}_{S,E}}$.
			Thus, proving that
			$\rank{\mat{\A(y^{T,2, \Theta}_c)}_{S,E}} \geq \multiset{\min\{M,R\}}{T/2}$ for
			all of the values of parameters $\Theta\times\h^{0,l}$ but a set of Lebesgue measure
			zero, would satisfy the theorem.
			We use a lemma proved in
			\citep{sharirtractable}, which states that since the entries of
			$\A(y^{T,2, \Theta}_c)$ are polynomials in the deep recurrent network's
			weights, it suffices to find a single example for which the rank of the
			matricized grid tensor is greater than the desired lower bound. Finding
			such an example would indeed imply that for almost all of the values of
			the network parameters, the desired inequality holds.
			
			We choose a weight assignment such that the resulting matricized grid
			tensor resembles a matrix obtained by raising a rank-$\bar{R}\equiv\min\{M,R\}$ matrix to the
			Hadamard power of degree $T/2$. This operation, which raises each
			element of the original rank-$\bar{R}$ matrix to the power of $T/2$, was shown
			to yield a matrix with a rank upper-bounded by the multiset
			coefficient $\multiset{\bar{R}}{T/2}$ (see \eg~\citep{amini2012low}). We show
			that our assignment results in a matricized grid tensor with a rank
			which is not only upper-bounded by this value, but actually achieves it. Under our assignment, the matricized grid tensor takes the form: 
			\begin{equation*}
			\mat{\A(y^{T,2, \Theta}_c)}_{S,E}=
			\sum_{\substack{\p\in \state{{\bar{R}}}{\nicefrac{T}{2}}}}
			U_{S\p} \cdot
			V_{\p E},
			\end{equation*}
			where the set $\state{{\bar{R}}}{\nicefrac{T}{2}} \equiv \{\p \in (\N \cup \{0\})^{\bar{R}} |\sum_{i=1}^{\bar{R}} p_i = \nicefrac{T}{2}\}$ can be viewed as the set of all possible states of a bucket containing $\nicefrac{T}{2}$
			balls of ${\bar{R}}$ colors, where $p_r$ for $r\in[\bar{R}]$ specifies the number of balls
			of the $r$'th color. By definition: $\abs{\state{{\bar{R}}}{\nicefrac{T}{2}}}=\multiset{\bar{R}}{\nicefrac{T}{2}}$ and $\abs{S}=\abs{E}=M^{\nicefrac{T}{2}}$, therefore the matrices $U$ and $V$ uphold: $U\in\R^{M^{\nicefrac{T}{2}}\times\multiset{\bar{R}}{\nicefrac{T}{2}}}$; $V\in\R^{\multiset{\bar{R}}{\nicefrac{T}{2}}\times M^{\nicefrac{T}{2}}}$, and for the theorem to follow we must show that they both are of rank $\multiset{\bar{R}}{\nicefrac{T}{2}}$ (note that $\multiset{\bar{R}}{\nicefrac{T}{2}}\leq\multiset{M}{\nicefrac{T}{2}}<M^{\nicefrac{T}{2}}$).
			
			We observe the sub-matrix $\bar{U}$ defined by the subset of the rows of $U$ such that we select the row $d_1\ldots d_{\nicefrac{T}{2}}\in S$ only if it upholds
			that $\forall j, d_j \leq d_{j+1}$. Note that there are exactly $\multiset{\bar{R}}{\nicefrac{T}{2}}$ such rows, thus $\bar{U}\in\R^{\multiset{\bar{R}}{\nicefrac{T}{2}}\times\multiset{\bar{R}}{\nicefrac{T}{2}}}$ is a square matrix.
			Similarly we observe a sub-matrix of $V$ denoted $\bar{V}$, for which we select the column $d_{\nicefrac{T}{2}+1}\ldots d_T \in E$ only if it upholds
			that $\forall j, d_j \leq d_{j+1}$, such that it is also a square matrix. Finally, by employing a variety of technical lemmas, we show that the determinants of these square matrices are non vanishing under the given assignment, thus satisfying the theorem. 
			
		\end{enumerate}
		\ifdefined\SQUEEZE	\vspace{-3mm}\fi
	\end{proof}

	\subsection{Increase of Memory Capacity with Depth}\label{sec:results:conjecture}
	\ifdefined\SQUEEZE\vspace{-1mm}\fi
	
	Theorem~\ref{theorem:main_result} provides a lower bound of
	$\multiset{R}{\nicefrac{T}{2}}$ on the Start-End separation rank of depth $L=2$
	recurrent networks, combinatorially separating deep recurrent networks from shallow ones.
	By a trivial assignment of weights in higher layers, the Start-End separation rank of even
	deeper recurrent networks ($L>2$) is also lower-bounded by this expression, which does not
	depend on $L$. In the following, we conjecture that a tighter lower bound holds for
	networks of depth $L>2$, the form of which implies that the memory capacity of deep recurrent networks
	grows combinatorially with the network depth:
	
	\begin{conjecture}\label{conjecture:high_L}
		Under the same conditions as in Theorem~\ref{theorem:main_result}, for all values of
		$\Theta\times\h^{0,l}$ but a set of Lebesgue measure zero, it holds for any $L$ that:
		\ifdefined\SQUEEZE	\vspace{-2mm}\fi
		\begin{align*}
		\sep{S,E}{y^{T,L, \Theta}_c}&\geq \min\left\{\multiset{\min\{M,R\}}{\multiset{T/2}{L-1}}, M^{\nicefrac{T}{2}} \right\}.
		\end{align*}
	\end{conjecture}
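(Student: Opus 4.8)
The plan is to keep the scaffolding of the proof of Theorem~\ref{theorem:main_result} and replace its single Hadamard‑power step by an $(L-1)$‑fold iterated construction. Write $\bar R\equiv\min\{M,R\}$, $D\equiv\multiset{T/2}{L-1}$ and $N\equiv\multiset{\bar R}{D}$, so the conjectured bound is $\min\{N,M^{T/2}\}$. By Claim~\ref{claim:grid_sep_deep} it suffices, for any fixed template vectors, to show $\rank{\mat{\A(y^{T,L, \Theta}_c)}_{S,E}}\ge\min\{N,M^{T/2}\}$; the cap by $M^{T/2}$ is automatic since $\mat{\cdot}_{S,E}$ has that size. Since the entries of the grid tensor are polynomials in the parameters $\Theta\times\h^{0,l}$, the generic‑rank lemma of \citep{sharirtractable} further reduces the task to exhibiting a \emph{single} parameter assignment for which $\rank{\mat{\A(y^{T,L, \Theta}_c)}_{S,E}}\ge N$.

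The choice of assignment is dictated by a degree count. Unrolling both recursions of Equation~\eqref{eq:deep_rn}, the hidden state $\h^{t,l}$ is a polynomial in the input features $\{f_d(\x^i)\}$ whose total degree $D_l(t)$ satisfies $D_l(t)=D_{l-1}(t)+D_l(t-1)$ with $D_0(t)=1$, $D_l(0)=0$, hence $D_l(t)=\multiset{t}{l}$; in particular the layer‑$(L-1)$ state at the Start/End midpoint has degree exactly $D_{L-1}(T/2)=D$. The idea is then to choose weights so that, just as in the $L=2$ case, every layer restricts to $\bar R$ effective channels, each Multiplicative Integration — whether along time or along depth — acts as multiplication of monomials in those $\bar R$ channel‑variables, and the layer‑$(L-1)$ midpoint state thereby realizes a complete basis of the $N$‑dimensional space of degree‑$D$ monomials in $\bar R$ variables. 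I would build such an assignment by induction on $L$, carrying as inductive hypothesis not merely the value of the bound but the precise factorized form a good depth‑$l$ assignment induces; the inductive step converts the previous assignment's top layer into an additional middle (monomial‑multiplication) stage and appends a fresh top layer plus output weights, tuned to route the midpoint state into the matrix. Under such an assignment the matricized grid tensor takes the higher‑$L$ analogue of the display in the proof sketch of Theorem~\ref{theorem:main_result},
\begin{equation*}
\mat{\A(y^{T,L, \Theta}_c)}_{S,E}=\sum_{\p\in\state{\bar R}{D}}U_{S\p}\cdot V_{\p E},
\end{equation*}
with $\abs{\state{\bar R}{D}}=N$ and $U,V$ of shapes $M^{T/2}\times N$ and $N\times M^{T/2}$. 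Restricting to the $N$ rows of $U$ (resp.\ columns of $V$) whose input multi‑index $d_1\ldots d_{T/2}$ is non‑decreasing yields square submatrices $\bar U,\bar V$, and establishing $\det\bar U\ne0$ and $\det\bar V\ne0$ gives $\rank{\mat{\A(y^{T,L, \Theta}_c)}_{S,E}}\ge N$, and with it the conjecture via Claim~\ref{claim:grid_sep_deep}.

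The hard part will be this non‑vanishing step and the combinatorial bookkeeping feeding it. Already for $L=2$ (Theorem~\ref{theorem:main_result}) the determinant argument required a variety of technical lemmas; for general $L$ the degree‑$D$ monomials at layer $L-1$ are assembled by a nested composition of temporal and depth products, so one must (i) track, through all $L$ layers and all $T$ time‑steps, how Start inputs and End inputs are duplicated and recombined — the very data‑duplication phenomenon that blocks a closed tensorial form for deep RACs — and cleanly split their contributions between $U$ and $V$; and (ii) prove that the generalized‑Vandermonde‑type matrix arising in $\det\bar U$, whose rows are the sorted input multi‑indices and whose columns are the degree‑$D$ monomials, is nonsingular for the constructed weights, which seems to force an inductive strengthening in which the hypothesis pins down the full factorized form of the layer‑$l$ grid tensor, not merely its rank. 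A secondary subtlety is confirming that $\bar R=\min\{M,R\}$, rather than simply $R$, is the correct effective channel count at \emph{every} layer when $R$ is small, i.e., that a bottleneck at the first layer propagates upward through the construction.
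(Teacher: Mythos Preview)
First, a framing point: the paper does \emph{not} prove Conjecture~\ref{conjecture:high_L}. It is explicitly left open (Section~\ref{sec:results:conjecture} and Appendix~\ref{app:rac_tns:conjecture}); what the paper supplies is a \emph{motivation}, not a proof. So there is no ``paper's own proof'' to match against, and your proposal should be read as an attack on an open problem and compared with the paper's motivating argument.

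At the level of strategy your outline and the paper's motivation agree: both reduce via Claim~\ref{claim:grid_sep_deep} and the generic-rank lemma to exhibiting a single assignment with large matricization rank, and both identify the same combinatorial exponent $D=\multiset{T/2}{L-1}$. The paper reaches $D$ by a different route: it fixes $\WI{2}$ to be rank~$1$, so layers~$2,\ldots,L$ contribute only a scalar and the entire computation collapses to layer~$1$; it then uses the Tensor Network picture to count occurrences of the basic Start--End unit in layer~$1$ and proves this count equals $\multiset{T/2}{L-1}$. The matricized grid tensor is thus (up to a diagonal factor) the $D$-th Hadamard power of a rank-$\bar R$ matrix, whose rank is at most $\multiset{\bar R}{D}$; the paper explicitly leaves the tightness of this Hadamard-power rank as the open step. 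Your degree recursion $D_l(t)=D_{l-1}(t)+D_l(t-1)$ recovers the same $D$ by a cleaner algebraic argument, and your proposed inductive-on-$L$ assignment (keeping all layers active) is more ambitious than the paper's collapse-to-layer-$1$ assignment, but both land on the same unresolved rank-tightness question.

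There is, however, a concrete gap in your outline that does not survive the passage from $L=2$ to $L>2$. You assert that restricting $U$ to the rows whose multi-index $d_1\ldots d_{T/2}$ is non-decreasing yields an $N\times N$ submatrix $\bar U$. For $L=2$ this works because $\state{\bar R}{T/2}$ and sorted tuples in $[\bar R]^{T/2}$ are in bijection, both of size $\multiset{\bar R}{T/2}$. For $L>2$ one has $D=\multiset{T/2}{L-1}>T/2$, so the column index set $\state{\bar R}{D}$ has $N=\multiset{\bar R}{D}$ elements, whereas sorted length-$T/2$ tuples over $[\bar R]$ number only $\multiset{\bar R}{T/2}<N$; the bijection is gone, and in general there simply are not $N$ such rows. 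A genuinely new row-selection scheme (or a different rank argument) is required. Relatedly, your statement that the layer-$(L{-}1)$ midpoint state ``realizes a complete basis of the $N$-dimensional space of degree-$D$ monomials'' cannot be taken literally: that hidden state has only $R$ components, while typically $N\gg R$ for $L>2$, so the monomials must emerge from the full product-of-sums expansion across time and depth, not from a single hidden vector. These two issues are precisely where the $L=2$ machinery stops scaling, and they are what makes the conjecture nontrivial.
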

	
	We motivate Conjecture~\ref{conjecture:high_L} by investigating the combinatorial nature
	of the computation performed by a deep RAC. By constructing Tensor Networks which correspond to deep RACs, we attain an informative visualization of this combinatorial perspective.
	In Appendix~\ref{app:rac_tns}, we provide full details of this Tensor Networks construction and present
	the formal motivation for the conjecture in Appendix~\ref{app:rac_tns:conjecture}. Below, we qualitatively outline
	our approach.
	
	A Tensor Network is essentially a graphical tool for representing algebraic operations
	which resemble multiplications of vectors and matrices, between higher order tensors.
	Figure~\ref{fig:conjecture} (top) shows an example of the Tensor Network representing the computation of a depth $L=3$ RAC after
	$T=6$ time-steps. This well-defined computation graph hosts the values of the weight
	matrices at its nodes. The inputs $\{x^1,\ldots, x^T\}$ are marked by their corresponding time-step $\{1,\ldots, T\}$, and are integrated in a depth dependent and time-advancing manner (see further discussion regarding this form in Appendix~\ref{app:rac_tns:deep}), as portrayed in the example of Figure~\ref{fig:conjecture}. We highlight in red the basic unit in
	the Tensor Network which connects ``Start" inputs $\{1,\ldots,\nicefrac{T}{2}\}$ and ``End" inputs $\{\nicefrac{T}{2}+1,\ldots,T\}$. In order to estimate
	a lower bound on the Start-End separation rank of a depth $L>2$ recurrent network, we employ a
	similar strategy to that presented in the proof sketch of the $L=2$ case (see
	Section~\ref{sec:results:proof}). Specifically, we rely on the fact that it is sufficient
	to find a specific instance of the network parameters $\Theta\times\h^{0,l}$ for which
	$\mat{\A(y^{T,L, \Theta}_c)}_{S,E}$ achieves a certain rank, in order for this rank to
	bound the Start-End separation rank of the network from below.
	
	\begin{figure}
		\centering
		\includegraphics[width=1\linewidth]{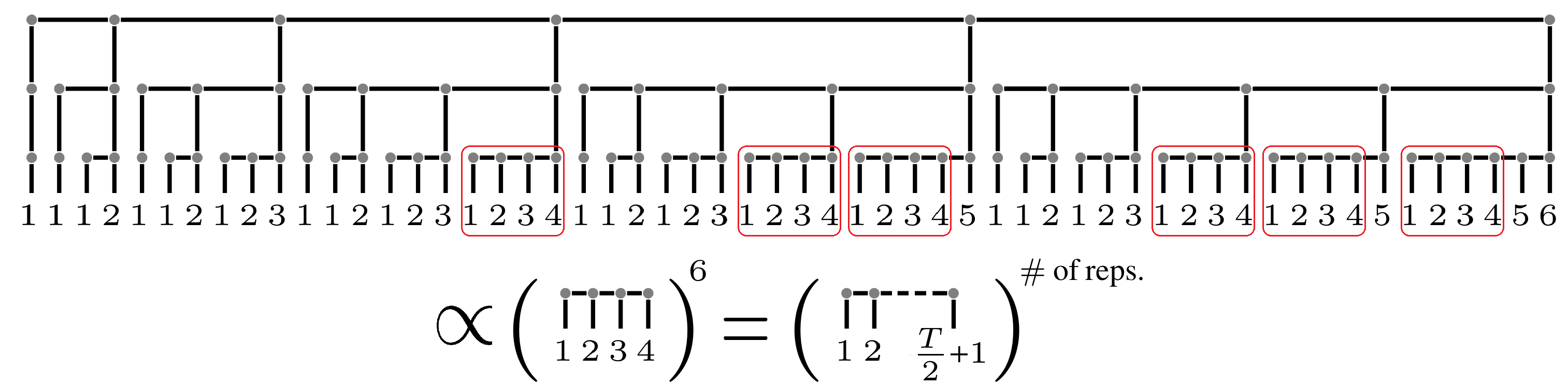}
		\caption{Tensor Network representing the computation of a depth $L=3$ RAC after $T=6$ time-steps. See construction in Appendix~\ref{app:rac_tns}. The number of repetitions of the basic unit cell connecting `Start' and `End' inputs in the Tensor Network graph gives rise to the lower bound in Conjecture~\ref{conjecture:high_L}.}
		\label{fig:conjecture}
		\ifdefined\SQUEEZE	\vspace{-5mm}\fi
	\end{figure}
	
	Indeed, we find a specific assignment of the network weights, presented in
	Appendix~\ref{app:rac_tns:conjecture}, for which the Tensor Network effectively takes the form of the
	basic unit connecting ``Start" and ``End", raised to the power of the number of its
	repetitions in the graph (bottom of Figure~\ref{fig:conjecture}). This basic unit corresponds to a simple computation
	represented by a grid tensor with Start-End matricization of rank $R$. Raising such a
	matrix to the Hadamard power of any $p\in\mathbb{Z}$, results in a matrix with a rank
	upper bounded by $\multiset{R}{p}$, and the challenge of
	proving the conjecture amounts to proving that the upper bound is tight in this case.
	In Appendix~\ref{app:rac_tns:conjecture}, we prove that the number of repetitions of the basic unit connecting ``Start" and ``End" in the deep RAC Tensor Network graph, is exactly equal to
	$\multiset{T/2}{L-1}$ for any depth $L$. For example, in the $T=6,L=3$
	network illustrated in Figure~\ref{fig:conjecture}, the number of repetitions indeed corresponds to
	$p=\multiset{3}{2}=6$. It is noteworthy that for $L=1,2$~ the bound in
	Conjecture~\ref{conjecture:high_L} coincides with the bounds that were proved for these depths in
	Theorem~\ref{theorem:main_result}.
	
	Conjecture~\ref{conjecture:high_L} indicates that beyond the proved combinatorial advantage
	in memory capacity of deep networks over shallow ones, a further combinatorial separation
	may be shown between recurrent networks of different depths. We leave the proof of this
	result, which can reinforce and refine the understanding of advantages brought forth by
	depth in recurrent networks, as an open problem.
	In the following, we empirically investigate the theoretical outcomes presented in this section.
	
	\section{Experiments}\label{sec:exp}
	
	In this section, we provide an empirical demonstration supporting the theoretical findings of this paper. 
	The results above are formulated for the class of RACs (presented in Section~\ref{sec:racs}), and the experiments presented hereinafter demonstrate their extension to more commonly used RNN architectures. 
	As noted in Section~\ref{sec:intro}, the advantage of deep recurrent networks over shallow ones is well established empirically, as the best results on various sequential tasks have been achieved by stacking recurrent layers~\citep{cirecsan2010deep,mohamed2012acoustic,graves2013speech}. 
	Below, we focus on two tasks which highlight the `long-term memory' demand of recurrent networks, and show how depth empowers the network's ability to express the appropriate distant temporal dependencies. 
	
	We address two synthetic problems. The first is the Copying Memory Task, to be
	described in Section~\ref{sec:exp:copy}, which was previously used to test
	proposed solutions to the gradient issues of backpropagation through
	time~\citep{hochreiter1997long,martens2011learning,arjovsky2016unitary,wisdom2016full,jing2016tunable}.
	We employ this task as a test for the recurrent network's expressive ability to
	`remember' information seen in the distant past. 
	The second task is referred to as the Start-End Similarity Task, to be described
	in Section~\ref{sec:exp:sim}, which is closely related to the Start-End
	separation rank measure proposed in Section~3.1.
	In both experiments we use a successful RNN variant referred to as Efficient
	Unitary Recurrent Neural Network (EURNN)~\citep{jing2016tunable}, which was
	shown to enable efficient optimization without the need to use gating units such
	as in LSTM networks to overcome the vanishing gradient problem. Moreover, EURNNs
	are known to perform exceptionally well on the Copying Memory Task.
	Specifically, we use EURNN in its most basic form, with orthogonal
	hidden-to-hidden matrices, and with the tunable parameter (see
	\citep{jing2016tunable} Section 4.2) set to 2. Under the notations we presented
	in section~\ref{sec:racs} and portrayed in Figure~\ref{fig:recurrent_net},
	EURNNs employ $g^\mathrm{RNN}(\aaa, \bb) = \sigma(\aaa + \bb)$, where
	$\sigma(\cdot)$ is the modReLU function (\citep{jing2016tunable} Section 4.5),
	and the matrices $W^{\mathrm{H},l}$ are restricted to being orthogonal.
	Throughout both experiments we use RMSprop~\citep{tieleman2012lecture} as the
	optimization algorithm, where we took the best of several moving average discount factor values between	$0.5$ (in accordance with~\citep{jing2016tunable}) and the default value of $0.9$, and with a learning rate of $10^{-3}$. We use a training set of size 100K,
	a test set of size 10K, and a mini-batch size of $128$. 
	
	The methodology we employ in the experiments below is aimed at testing the
	following practical hypothesis, which is commensurate with the theoretical
	outcomes in Section~\ref{sec:results}: \emph{Given a certain resource budget for
		a recurrent network that is intended to solve a `long-term memory problem',
		adding recurrent layers is significantly preferable to increasing the number of
		channels in existing layers}. Specifically, we train RNNs of depths $1$, $2$, and
	$3$ over increasingly hard variants of each problem (requiring longer-term
	memory), and report the maximal amount of memory capabilities for each
	architecture in Figures~\ref{fig:copy} and~\ref{fig:sim}. 
	
	\subsection{Copying Memory Task}\label{sec:exp:copy}
	
	In the Copying Memory Task, the network is required to
	memorize a sequence of characters of fixed length $m$,
	and then to reproduce it after a long lag of $B$
	time-steps, known as the \emph{delay time}.
	The input sequence is composed of characters drawn from a given
	alphabet $\left\{ a_{i}\right\} _{i=1}^{n}$, and two
	special symbols: a \emph{blank} symbol denoted by
	`\texttt{\_}', and a \emph{trigger} symbol denoted by
	`\texttt{:}'. The input begins with a string of $m$
	\emph{data characters} randomly drawn from the alphabet,
	and followed by $B$ occurrences of blank symbols.
	On the $m$'th before last time-step the trigger symbol
	is entered, signaling that the data needs to be
	presented. Finally the input ends with an additional
	$m-1$ blank characters. In total, the sequence length
	is $T=B+2m$. The correct sequential output of this task
	is referred to as the target. The target character in
	every time-step is always the blank character, except
	for the last $m$ time-steps, in which the target is the
	original $m$ data characters of the input. For example,
	if $m=3$ and $B=5$, then a legal input-output pair could
	be ``\texttt{ABA\_\_\_\_\_:\_\_}'' and
	``\texttt{\_\_\_\_\_\_\_\_ABA}'', respectively.

	In essence, the data length $m$ and alphabet size $n$
	control the number of bits to be memorized, and the delay
	time controls the time these bits need to stay in
	memory~--~together these parameters control the hardness
	of the task. Previous works have used values such as
	$m=10$ and $n=8$~\citep{arjovsky2016unitary} or similar,
	which amount to memorizing $30$ bits of information, for
	which it was demonstrated that even shallow recurrent
	networks are able to solve this task for delay times as
	long as $B=1000$ or more. To allow us to properly
	separate between the performance of networks of different
	depths, we consider a much harder variant with $m=30$ and
	$n=32$, which requires memorizing $150$ bits of
	information.
	
	\begin{figure}
		\centering
		\includegraphics[width=\linewidth]{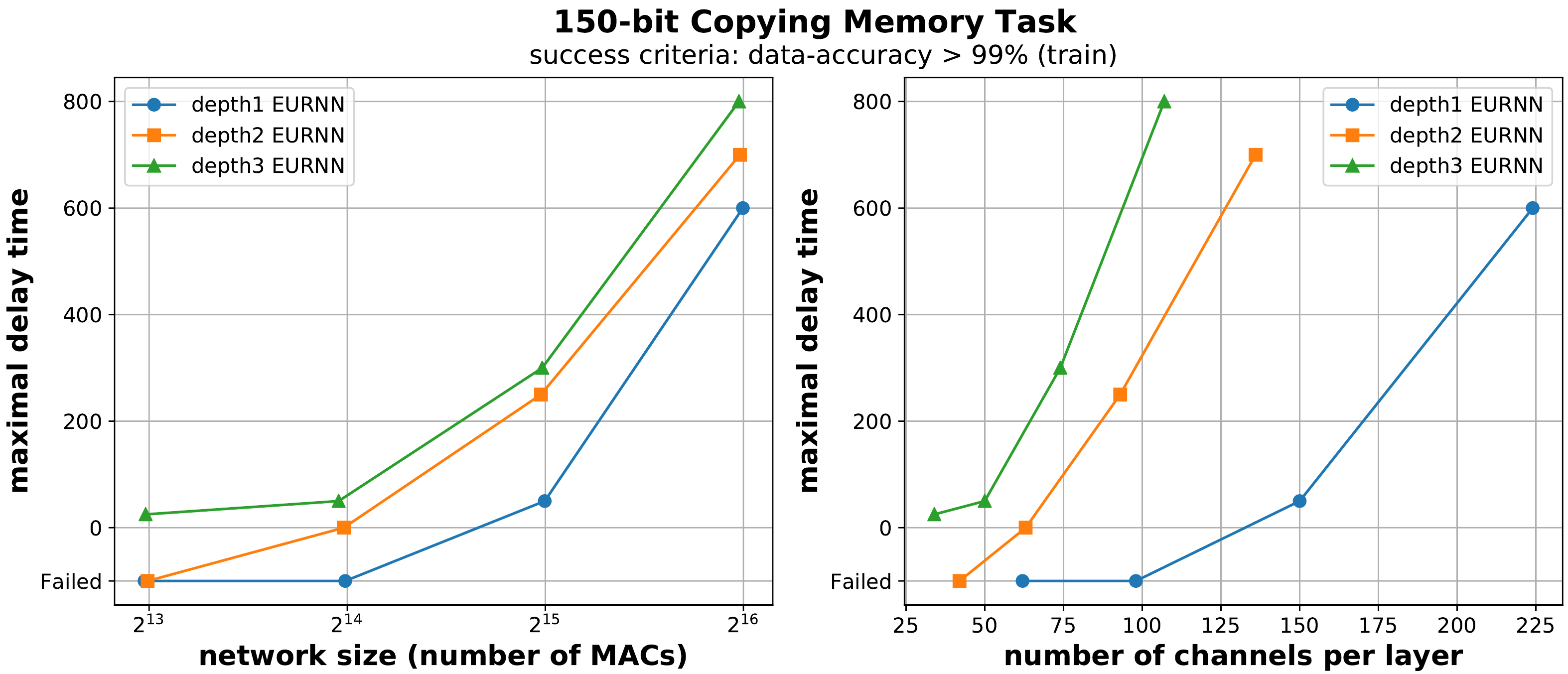}
		\caption{\label{fig:copy}
			Results of the Copying Memory Task, as defined in Section~\ref{sec:exp:copy}. The results are shown for networks of depths $1,2,3$ and sizes $2^{13}$~-~$2^{16}$ (measured in MACs). 
			We define success in the Copying Memory Task as achieving a data-accuracy $\geq 99\%$, \ie~being able to reproduce each character of the copied data with $99\%$ accuracy after a given delay time. For each network architecture, the plots report the longest delay time for which the architecture has been successful on the training set (test set results were very similar) as a function of network size (left) and number of channels per layer (right). We tested the performance on delay times up to $1000$, sampling delay times of $0,25,50,100,150,200,250,300$ and then in intervals of $100$. If a network cannot even solve the task for zero delay time, we mark it as ``Failed''. The advantage of deepening the network is evident, as for each tested network size, the recurrent network of depth $3$ outperforms the recurrent network of depth $2$, which outperforms the recurrent network of depth $1$. For a case of limited amount of resources \wrt~the task hardness, which occurs in the smaller network sizes, shallower networks fail to perform the task altogether (cannot reproduce the given sequence even after zero delay time), where deeper networks succeed. The displayed results clearly highlight the augmenting contribution of depth to the recurrent network's long-term memory capacity.} 
	\end{figure}	
	We present the results for this task in
	Figure~\ref{fig:copy}, where we compare the performance
	for networks of depths $1,2,3$ and of size in the range
	of $2^{13}$~-~$2^{16}$, measured in the number of 
	multiply-accumulate operations~(MACs). 
	Our measure of performance in the Copy Memory Task is referred to as the \emph{data-accuracy}, calculated as $\frac{1}{N}\frac{1}{m}\sum_{j=1}^{N}\sum_{t=m+B+1}^{T}\mathbf{1}[\hat{O}_t^j=O_t^j]$, where $N$ is the sample size, $O_t^j$ the correct output character at time $t$ for example $j$, and $\hat{O}_t^j=\argmax_{i\in[n+2]}{\y_t^j}$ the predicted character. The data-accuracy effectively reflects the per-character data reproduction ability, therefore it is defined only over the final $m$ time-steps when the memorized data is to be reproduced.
	In Figure~\ref{fig:copy}, we display for
	each network the longest delay time for which it is able
	to solve the task, demonstrating a clear advantage of depth in
	this task. 
	We measure the size of the network using MACs
	due to the fact that while orthogonal matrices have an 
	effective smaller number of parameters, EURNN still
	require the same number of MACs at inference time, hence
	it is a better representation of the resources they
	demand. Clearly, given an amount of resources, it is
	advantageous to allocate them in a stacked layer fashion
	for this long-term memory based task.
	
	\subsection{Start-End Similarity Task}\label{sec:exp:sim}
	The Start-End Similarity Task directly tests the
	recurrent network's expressive ability to integrate
	between the two halves of the input sequence. In this
	task, the network needs to determine how similar the two halves are. 
	The input is a sequence of $T$ characters
	$\{x_t\}_{t=1}^T$ from an alphabet
	$\left\{ a_{i}\right\} _{i=1}^{n}$, where the first
	$\nicefrac{T}{2}$ characters are denoted by `Start' and
	the rest by `End', similarly to previous sections. 
	Considering pairs of characters in the same
	relative position in `Start' and `End', \ie~the pairs
	$\left(x_{t},x_{t+\nicefrac{T}{2}}\right)$, we divide
	each input sequence into one of the following classes: 
	
	\begin{itemize}
		\item \textbf{1-similar}: `Start' and `End' are
		exactly the same $\nicefrac{T}{2}$ length string.
		\item \textbf{0.5-similar}: `Start' and `End' have
		exactly $\nicefrac{T}{4}$ matching pairs of
		characters (a randomly positioned half of the
		string is identical, and the other half is not).
		\item \textbf{0-similar}: no pair of characters
		$\left(x_{t},x_{t+\nicefrac{T}{2}}\right)$ match. 
	\end{itemize}
	
	\begin{figure}
		\centering
		\includegraphics[width=\linewidth]{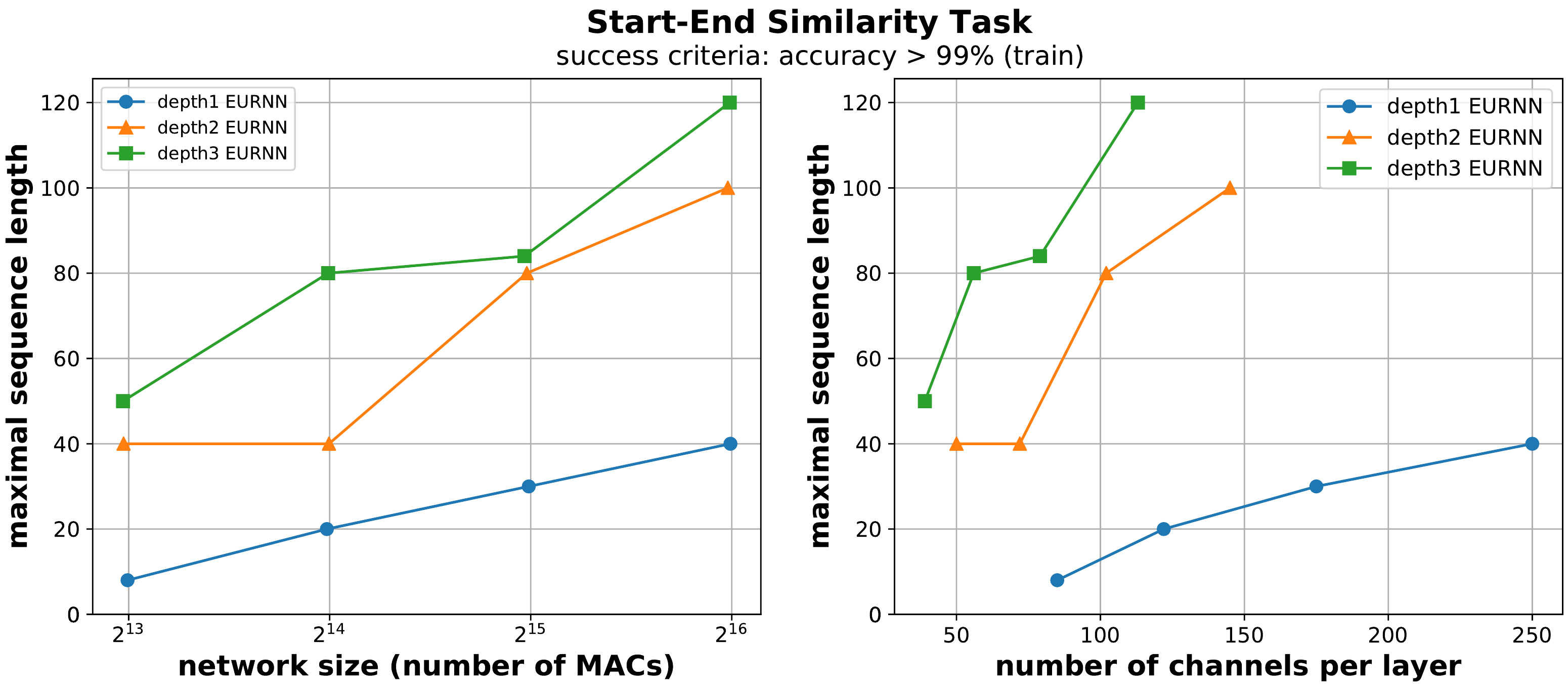}
		\caption{Results of the Start-End Similarity Task, as defined in Section~\ref{sec:exp:sim}. The results are shown for networks of depths $1,2,3$ and sizes $2^{13}$~-~$2^{16}$ (measured in MACs). We define success on the Start-End similarity task as training accuracy $\geq 99\%$ (test set results were very similar). For each network architecture, the plots report the longest input sequence length for which the architecture has been successful as a function of network size (left) and number of channels per layer (right). We tested the performance on sequences of lengths up to 140 characters in intervals of 10. It can be seen that for every given network size, a deeper network can model long-term dependencies more successfully than a shallower one. For example, a depth-$3$ network succeeds at solving the Start-End Similarity task for $T=120$ while a depth-$1$ network succeeds only for $T=40$.\label{fig:sim}}
	\end{figure}
	The task we examine is a classification task of a dataset 
	distributed uniformly over these three classes. Here, the
	recurrent networks are to produce a meaningful output
	only in the last time-step, determining in which class
	the input was, \ie~how similar the beginning of the input
	sequence is to its end.	
	Figure~\ref{fig:sim} shows the performance for networks of depths $1,2,3$ and sizes $2^{13}$~-~$2^{16}$, measured in MACs as explained above, on the Start-End Similarity Task. The clear advantage of depth is portrayed in this task as well, empirically demonstrating the enhanced ability of deep recurrent networks to model long-term elaborate dependencies in the input string.
	
	Overall, the empirical results presented in this section reflect well our theoretical findings, presented in Section~\ref{sec:results}.
	
	\ifdefined\SQUEEZE\vspace{-2mm}\fi
	\section{Discussion} \label{sec:discussion}
	\ifdefined\SQUEEZE\vspace{-2mm}\fi
	
	The notion of depth efficiency, by which deep networks efficiently express functions that would require shallow networks to have a super-linear size, is well established in the context of convolutional networks.
	However, recurrent networks differ from convolutional networks, as they are suited by design to tackle inputs of varying lengths.
	Accordingly, depth efficiency alone does not account for the remarkable performance of deep recurrent networks on long input sequences.
	In this paper, we identified a fundamental need for a quantifier of `time-series expressivity', quantifying the memory capacity of recurrent networks, which can account for the empirically undisputed advantage of depth in hard sequential tasks.
	In order to meet this need, we proposed a measure of the ability of recurrent networks to model long-term temporal dependencies, in the form of the Start-End separation rank.
	The separation rank was used to quantify dependencies in convolutional networks, and has roots in the field of quantum physics.
	The proposed Start-End separation rank measure adjusts itself to the temporal extent of the input series, and quantifies the ability of the recurrent network to correlate the incoming sequential data as time progresses.
	
	We analyzed the class of Recurrent Arithmetic Circuits, which are closely related to successful RNN architectures, and proved that the Start-End separation rank of deep RACs increases combinatorially with the number of channels and as the input sequence extends, while that of shallow RACs increases linearly with the number of channels and is independent of the input sequence length.
	These results, which demonstrate that depth brings forth an overwhelming advantage in the ability of recurrent networks to model long-term dependencies, were achieved by combining tools from the fields of measure theory, tensorial analysis, combinatorics, graph theory and quantum physics. 
	The above presented empirical evaluations support our theoretical findings, and provide a demonstration of their relevance for commonly used classes of recurrent networks.
	
	Such analyses may be readily extended to other architectural features employed in modern recurrent networks.
	Indeed, the same time-series expressivity question may now be applied to the different variants of LSTM networks, and the proposed notion of Start-End separation rank may be employed for quantifying their memory capacity.
	We have demonstrated that such a treatment can go beyond unveiling the origins of the success of a certain architectural choice, and leads to new insights.
	The above established observation that dependencies achievable by vanilla shallow recurrent network do not adapt at all to the sequence length, is an exemplar of this potential.
	
	Moreover, practical recipes may emerge by such theoretical analyses.
	The experiments preformed in \citep{hermans2013training}, suggest that shallow layers of recurrent networks are related to short time-scales, \eg~in speech: phonemes, syllables, words, while deeper layers appear to support dependencies of longer time-scales, \eg~full sentences, elaborate questions.
	These findings open the door to further depth related investigations in recurrent networks, and specifically the role of each layer in modeling temporal dependencies may be better understood.
	\citep{levine2018deep} establish theoretical observations which translate into practical conclusions regarding the number of hidden channels to be chosen for each layer in a deep convolutional network.
	The conjecture presented in this paper, by which the Start-End separation rank of recurrent networks grows combinatorially with depth, can similarly entail practical recipes for enhancing their memory capacity. Such analyses can lead to a profound understanding of the contribution of deep layers to the recurrent network's memory. Indeed, we view this work as an important step towards novel methods of matching the recurrent network architecture to the temporal dependencies in a given sequential dataset.
	
    	\section*{Acknowledgments}
	We acknowledge useful discussions with Nati Linial and Noam Weis, as well as the contribution of Nadav Cohen who
	originally suggested the connection between shallow Recurrent Arithmetic Circuits and the Tensor
	Train decomposition. 
	\section*{Funding}	
	This work is supported by the European Research Council (TheoryDL project)
	and by ISF Center grant 1790/12. 
	Y.L. is supported by the Adams
	Fellowship Program of the Israel Academy of Sciences
	and Humanities. 
    
	\section*{References}
	\bibliographystyle{imaiai}
	\bibliography{refs}
	
	\clearpage
	\appendix
	
	\section{Tensor Network Representation of Recurrent Arithmetic circuits} \label{app:rac_tns}
	
	In this section, we expand our algebraic view on recurrent networks and make use
	of a graphical approach to tensor decompositions referred to as Tensor Networks
	(TNs). The tool of TNs is mainly used in the many-body quantum physics literature
	for a graphical decomposition of tensors, and has been recently connected to the
	deep learning field by~\citep{levine2018deep}, who constructed a deep
	convolutional network in terms of a TN. The use of TNs in machine learning has
	appeared in an empirical context, where \citep{NIPS2016_6211} trained a Matrix
	Product State (MPS) TN to preform supervised learning tasks on the MNIST
	dataset~\citep{lecun1998mnist}. The constructions presented in this section
	suggest a separation in expressiveness between recurrent networks
	of different depths, as formulated by Conjecture~\ref{conjecture:high_L}.
	
	We begin in Appendix~\ref{app:rac_tns:tns_intro} by providing a brief introduction to TNs.
	Next, we present in Appendix~\ref{app:rac_tns:shallow} the TN which corresponds to the calculation of a shallow RAC, and tie it to a common TN architecture referred to as a \emph{Matrix Product State} (MPS) (see overview in e.g. \citep{orus2014practical}), and equivalently to the \emph{tensor train} (TT) decomposition \citep{oseledets2011tensor}.
	Subsequently, we present in Appendix~\ref{app:rac_tns:deep} a TN construction of a deep RAC, and emphasize the characteristics of this construction that are the origin of the enhanced ability of deep RACs to model elaborate temporal dependencies.
	Finally, in Appendix~\ref{app:rac_tns:conjecture}, we make use of the above TNs construction in order to formally motivate Conjecture~\ref{conjecture:high_L}, according to which the Start-End separation rank of RACs grows combinatorially with depth.
	
	\subsection{Introduction to Tensor Networks}\label{app:rac_tns:tns_intro}
	
	A TN is a weighted graph, where each node corresponds to a tensor whose order is
	equal to the degree of the node in the graph. Accordingly, the edges emanating
	out of a node, also referred to as its legs, represent the different modes of
	the corresponding tensor. The weight of each edge in the graph, also referred to
	as its bond dimension, is equal to the dimension of the appropriate tensor mode.
	In accordance with the relation between mode, dimension and index of a tensor
	presented in Section~\ref{sec:corr:grid}, each edge in a TN is represented by
	an index that runs between $1$ and its bond dimension.
	Figure~\ref{fig:tns_intro}a shows three examples:
	(1) A vector, which is a tensor of order $1$, is represented by a node with one
	leg. (2) A matrix, which is a tensor of order $2$, is represented by a node with
	two legs. (3) Accordingly, a tensor of order $N$ is represented in the TN as a
	node with $N$ legs.
	
	\begin{figure}
		\centering
		\includegraphics[width=\linewidth]{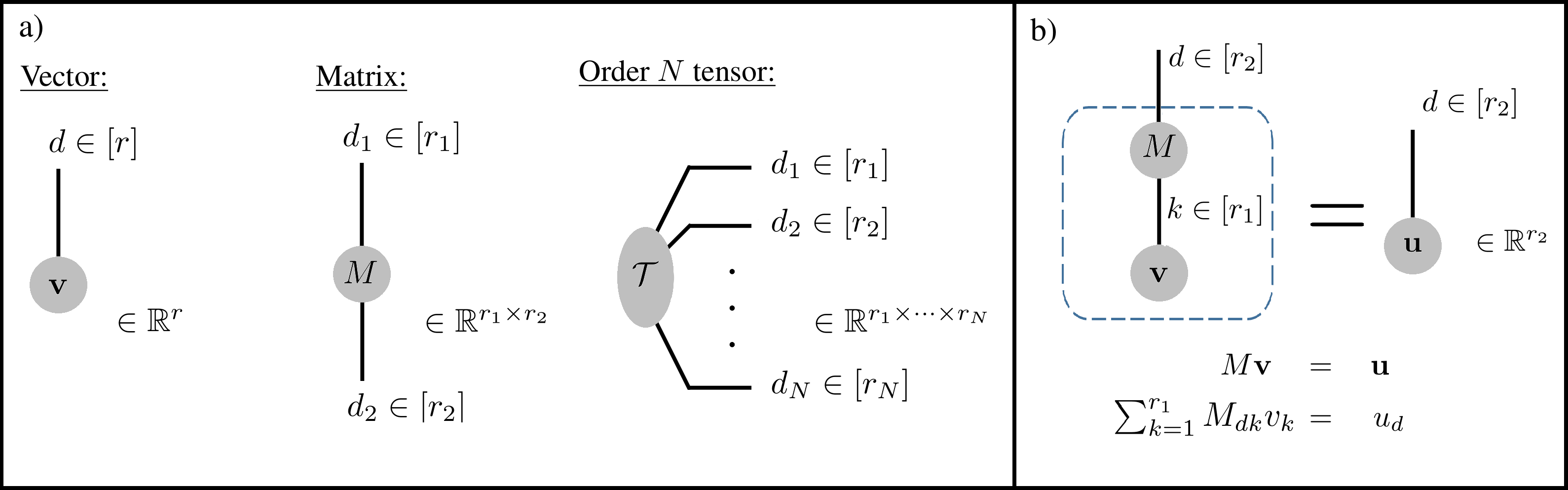}
		\caption{
			A quick introduction to Tensor Networks (TNs). a) Tensors in the TN are
			represented by nodes. The degree of the node corresponds to the order of
			the tensor represented by it. b) A matrix multiplying a vector in TN
			notation. The contracted index $k$, which connects two nodes, is summed
			upon, while the open index $d$ is not. The number of open indices equals
			the order of the tensor represented by the entire network. All of the
			indices receive values that range between $1$ and their bond dimension.
			The contraction is marked by the dashed line.
		}
		\label{fig:tns_intro}
	\end{figure}
	
	We move on to present the connectivity properties of a TN. Edges which connect
	two nodes in the TN represent an operation between the two corresponding
	tensors. A index which represents such an edge is called a contracted index, and
	the operation of contracting that index is in fact a summation over all of the
	values it can take. An index representing an edge with one loose end is called
	an open index. The tensor represented by the entire TN, whose order is equal to
	the number of open indices, can be calculated by summing over all of the
	contracted indices in the network. An example for a contraction of a simple TN
	is depicted in Figure~\ref{fig:tns_intro}b. There, a
	TN corresponding to the operation of multiplying a vector $\vv \in \R^{r_1}$ by
	a matrix $M\in \R^{r_2 \times r_1}$ is performed by summing over the only
	contracted index, $k$. As there is only one open index, $d$, the result of
	contracting the network is an order $1$ tensor (a vector): $\uu \in \R^{r_2}$
	which upholds $\uu = M\vv$. Though we use below the contraction of indices in
	more elaborate TNs, this operation can be essentially viewed as a generalization
	of matrix multiplication.
	
	\subsection{Shallow RAC Tensor Network}
	\label{app:rac_tns:shallow}
	
	The computation of the output at time $T$ that is preformed by the shallow
	recurrent network given by Equations~\eqref{eq:shallow_rn} and~\eqref{eq:g_rac}, or
	alternatively by Equations~\eqref{eq:score} and~\eqref{eq:tt_decomp}, can be written in
	terms of a TN. Figure~\ref{fig:shallow_rac_tn}a
	shows this TN, which given some initial hidden state $\h_0$, is essentially a
	temporal concatenation of a unit cell that preforms a similar computation at
	every time-step, as depicted in
	Figure~\ref{fig:shallow_rac_tn}b. For any time
	$t<T$, this unit cell is composed of the input weights matrix, $W^\textrm{I}$,
	contracted with the inputs vector, $\ff(\x^t)$, and the hidden weights matrix,
	$W^\textrm{H}$, contracted with the hidden state vector of the previous time-step, $\h^{t-1}$. The final component in each unit cell is the $3$ legged
	triangle representing the order $3$ tensor $\delta\in\R^{R\times R \times R}$,
	referred to as the \emph{$\delta$ tensor}, defined by:
	\begin{equation}
	\begin{array}{c}
	\delta_{i_1i_2i_3}\equiv\left\{ \begin{array}{c}
	1,\quad ~~~~i_1=i_2=i_3\\
	0,\quad ~~~~~~otherwise
	\end{array}\right.,\\
	\end{array}\label{eq:deltadef}
	\end{equation}
	with $i_j\in[R]~\forall j\in[3]$, \ie~its entries are equal to $1$ only on the
	super-diagonal and are zero otherwise. The use of a triangular node in the TN is
	intended to remind the reader of the restriction given in Equation~\eqref{eq:deltadef}.
	The recursive relation that is defined by the unit cell, is given by the TN in
	Figure~\ref{fig:shallow_rac_tn}b:
	\begin{align}
	h^t_{k_t}=
	\sum_{k_{t-1},\tilde{k}_{t-1},\tilde{d}_{t}=1}^{R}
	\sum_{d_{t}=1}^{M}
	W^{\textrm{ H}}_{\tilde{k}_{t-1}k_{t-1}}h^{t-1}_{k_{t-1}}
	W^{\textrm{ I}}_{\tilde{d}_td_t}f_{d_t}(\x^t)
	\delta_{\tilde{k}_{t-1}\tilde{d}_tk_t}=
	\nonumber \\
	~~~~~~~~~~~
	\sum_{\tilde{k}_{t-1}\tilde{d}_t=1}^{R}
	(W^{\textrm{ H}}\h^{t-1})_{\tilde{k}_{t-1}}
	(W^{\textrm{ I}}\ff(\x^t))_{\tilde{d}_t}
	\delta_{\tilde{k}_{t-1}\tilde{d}_tk_t}
	=
	(W^{\textrm{ H}}\h^{t-1})_{k_t}
	(W^{\textrm{ I}}\ff(\x^t))_{k_t},
	\label{eq:buildingblock1}
	\end{align}
	where $k_t\in [R]$. In the first equality, we simply follow the TN prescription
	and write a summation over all of the contracted indices in the left hand side
	of Figure~\ref{fig:shallow_rac_tn}b, in the
	second equality we use the definition of matrix multiplication, and in the last
	equality we use the definition of the $\delta$ tensor. The component-wise
	equality of Equation~\eqref{eq:buildingblock1} readily implies
	$\h^t=(W^{\textrm{H}}\h^{t-1})\odot(W^{\textrm{ I}}\ff(\x^t))$, reproducing the
	recursive relation in Equations~\eqref{eq:shallow_rn} and~\eqref{eq:g_rac}, which defines
	the operation of the shallow RAC. From the above treatment, it is evident that
	the restricted $\delta$ tensor is in fact the component in the TN that yields
	the element-wise multiplication property. After $T$ repetitions of the unit cell
	calculation with the sequential input $\{\x^t\}_{t=1}^{T }$, a final
	multiplication of the hidden state vector $\h^T$ by the output weights matrix
	$W^\textrm{O}$ yields the output vector $\y^{T,1,\Theta}$.
	
	\begin{figure}
		\centering
		\includegraphics[width=\linewidth]{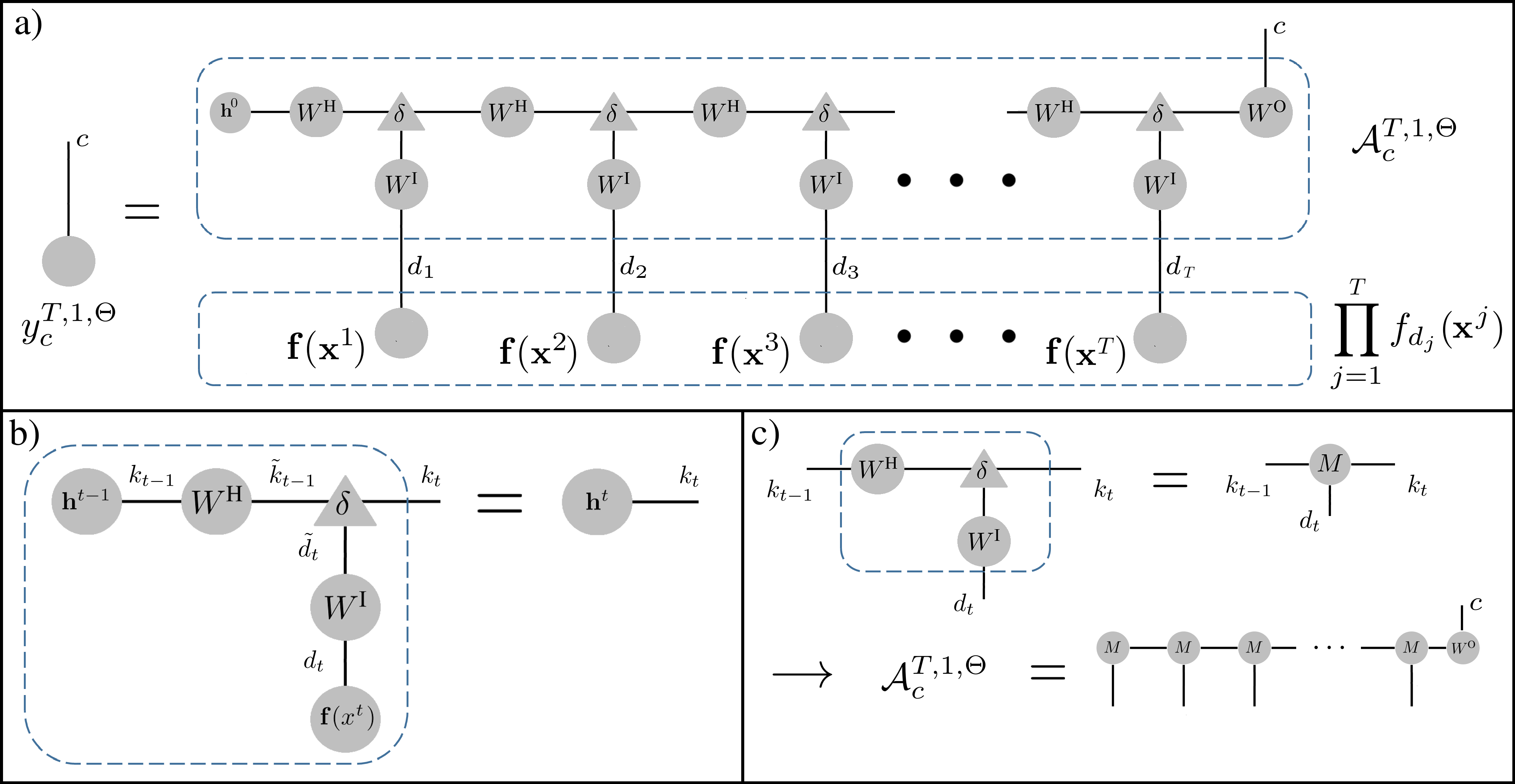}
		
		\caption{
			a) The Tensor Network representing the calculation performed by a shallow RAC.
			b) A Tensor Network construction of the recursive relation given an Equation~\eqref{eq:shallow_rn}.
			c) A presentation of the shallow RAC weights tensor in a standard MPS form.
		}
		\label{fig:shallow_rac_tn}
	\end{figure}
	
	The tensor network which represents the order $T$ shallow RAC weights tensor
	$\A^{T,1, \Theta}_c$, which appears in Equations~\eqref{eq:score}
	and~\eqref{eq:tt_decomp}, is given by the TN in the upper part of
	Figure~\ref{fig:shallow_rac_tn}a. In
	Figure~\ref{fig:shallow_rac_tn}c, we show that
	by a simple contraction of indices, the TN representing the shallow RAC weights
	tensor $\A^{T,1, \Theta}_c$ can be drawn in the form of a standard MPS TN. This
	TN allows the representation of an order $T$ tensor with a linear (in $T$)
	amount of parameters, rather than the regular exponential amount ($\A^{T,1, \Theta}_c$ has $M^T$
	entries). The decomposition which corresponds to this MPS TN is known as the Tensor
	Train (TT) decomposition of rank $R$ in the tensor analysis community, its
	explicit form given in Equation~\eqref{eq:tt_decomp}.
	
	The presentation of the shallow
	recurrent network in terms of a TN allows the employment of the min-cut
	analysis, which was introduced by \citep{levine2018deep} in the context of
	convolutional networks, for quantification of the information flow across time
	modeled by the shallow recurrent network. This was indeed preformed in our proof
	of the shallow case of Theorem~\ref{theorem:main_result} (see Appendix~\ref{app:proofs:main_result:shallow} for further details).
	We now move on to present the
	computation preformed by a deep recurrent network in the language of TNs.
	
	\subsection{Deep RAC Tensor Network}
	\label{app:rac_tns:deep}
	
	The construction of a TN which matches the calculation of a deep recurrent
	network is far less trivial than that of the shallow case, due to the seemingly
	innocent property of reusing information which lies at the heart of the
	calculation of deep recurrent networks. Specifically, all of the hidden states
	of the network are reused, since the state of each layer at every time-step is
	duplicated and sent as an input to the calculation of the same layer in the next
	time-step, and also as an input to the next layer up in the same time-step (see
	bottom of Figure~\ref{fig:recurrent_net}). The required
	operation of duplicating a vector and sending it to be part of two different
	calculations, which is simply achieved in any practical setting, is actually
	impossible to represent in the framework of TNs. We formulate this notion in the
	following claim:
	\begin{claim} \label{claim:no_clone}
		Let $v\in\R^P,P\in\N$ be a vector. $v$ is represented by a node with one leg
		in the TN notation. The operation of duplicating this node, \ie~ forming two
		separate nodes of degree $1$, each equal to $v$, cannot be achieved by any
		TN.
	\end{claim}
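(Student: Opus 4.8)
The plan is to argue by contradiction using the linearity of tensor contraction. Suppose some TN $\mathcal{G}$, with a single open leg of dimension $P$ on its ``input'' side and two open legs of dimension $P$ on its ``output'' side, implements duplication: contracting $\mathcal{G}$ against an arbitrary vector $v\in\R^P$ on the input leg yields the order-$2$ tensor $v\otimes v$ (i.e.\ $(v\otimes v)_{ij}=v_i v_j$) on the two output legs. Collapsing all internal structure of $\mathcal{G}$, the map $v\mapsto(\text{output of }\mathcal{G})$ is precisely the linear map $\R^P\to\R^{P}\otimes\R^{P}$ given by the order-$3$ tensor $C$ obtained by contracting every internal edge of $\mathcal{G}$; that is, the claimed identity is $\sum_{k=1}^P C_{ijk}v_k = v_i v_j$ for all $v$ and all $i,j\in[P]$.

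First I would observe that the right-hand side $v\mapsto v\otimes v$ is \emph{not} a linear function of $v$ whenever $P\ge 1$: taking $v$ and $2v$ gives $(2v)\otimes(2v)=4\,(v\otimes v)\neq 2\,(v\otimes v)$ for any $v$ with $v\otimes v\neq 0$. In contrast, the left-hand side $v\mapsto \sum_k C_{ijk}v_k$ is manifestly linear in $v$, being a fixed contraction. A linear map cannot agree with a non-linear (here, quadratic and homogeneous of degree $2$) map on all of $\R^P$, so no such $C$ — and hence no such TN $\mathcal{G}$ — exists. This handles $P\ge 1$; the degenerate case $P=0$ is vacuous.

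The one subtlety worth spelling out, which I expect to be the only real obstacle, is making precise the reduction ``the TN computes a linear function of its input vector.'' The point is that in the TN framework the duplication gadget would have to be a fixed weighted graph whose node tensors do not depend on $v$; $v$ enters only as the single leaf node attached to the input leg. Contracting a fixed network against a leaf that carries $v$ is, by the definition of index contraction as summation, an affine-free linear expression in the entries of $v$ — concretely $\sum_k C_{ijk} v_k$ with $C$ independent of $v$. Once this is stated cleanly, the homogeneity argument above closes the claim immediately. I would also remark that this is exactly the ``no-cloning'' obstruction familiar from quantum information, which is why representing a deep RAC (whose hidden states are genuinely reused) as a single TN requires the workaround developed in Appendix~\ref{app:rac_tns:deep} rather than a naive translation.
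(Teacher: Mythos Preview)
Your proposal is correct. Both you and the paper first reduce the question to the impossibility of an order-$3$ tensor $C$ satisfying $\sum_k C_{ijk}v_k = v_iv_j$ for all $v\in\R^P$; the difference lies only in how that impossibility is established. The paper evaluates the identity on the standard basis vectors to force $C$ to be the super-diagonal $\delta$-tensor, and then checks that this $\delta$-tensor fails to clone the all-ones vector $\mathbf{1}$. You instead invoke homogeneity: the left-hand side is linear in $v$ while the right-hand side is homogeneous of degree $2$, so the scaling $v\mapsto 2v$ immediately yields a contradiction for any nonzero $v$. Your route is slightly more economical and avoids having to identify $C$ explicitly; the paper's route has the minor advantage of exhibiting the unique candidate ($\delta$) and showing exactly where it breaks, which ties in nicely with the role the $\delta$-tensor already plays elsewhere in the TN constructions.
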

	\begin{proof}
		We assume by contradiction that there exists a Tensor Network $\phi$ which operates on any vector $v\in\R^P$ and clones it to two separate nodes of degree $1$, each equal to $v$, to form an overall TN representing $v\otimes v$.
		Component wise, this implies that $\phi$ upholds $\forall v\in\R^P:~\sum_{i=1}^P \phi_{ijk}v_i=v_jv_k$.
		By our assumption, $\phi$ duplicates the standard basis elements of $\R^P$, denoted $\{\hat{\e}^{(\alpha)}\}_{\alpha=1}^P$, meaning that $\forall \alpha\in[P]$:
		\begin{equation}\label{eq:no_clone}
		\sum_{i=1}^P \phi_{ijk}\hat{e}^{(\alpha)}_i=\hat{e}^{(\alpha)}_j\hat{e}^{(\alpha)}_k.
		\end{equation}
		By definition of the standard basis elements, the left hand side of Equation~\eqref{eq:no_clone} takes the form $\phi_{\alpha jk}$ while the right hand side equals $1$ only if $j=k=\alpha$, and otherwise $0$. Utilizing the $\delta$-tensor notation presented in Equation~\eqref{eq:deltadef}, in order to successfully clone the standard basis elements, Equation~\eqref{eq:no_clone} implies that $\phi$ must uphold $\phi_{\alpha jk}=\delta_{\alpha jk}$. However, for $v=\mathbf{1}$, \ie~$\forall j\in[P]:~v_j=1$, a cloning operation does not take place when using this value of $\phi$, since $\sum_{i=1}^P \phi_{ijk}v_i=\sum_{i=1}^P \delta_{ijk}=\delta_{jk}\neq 1= v_iv_j$, in contradiction to $\phi$ duplicating any vector in $\R^P$.
	\end{proof}
	
	Claim~\ref{claim:no_clone} seems to pose a hurdle in our pursuit of a TN
	representing a deep recurrent network. Nonetheless, a form of such a TN may be
	attained by a simple `trick'~--~in order to model the duplication that is
	inherently present in the deep recurrent network computation, we resort to
	duplicating the input data itself. By this technique, for every duplication that
	takes place along the calculation, the input is inserted into the TN multiple
	times, once for each sequence that leads to the duplication point. This
	principle, which allows us to circumvent the restriction imposed by
	Claim~\ref{claim:no_clone}, yields the elaborate TN construction of deep RACs
	depicted in Figure~\ref{fig:deep_rac_tn}.
	\begin{figure}
		\centering
		\includegraphics[width=\linewidth]{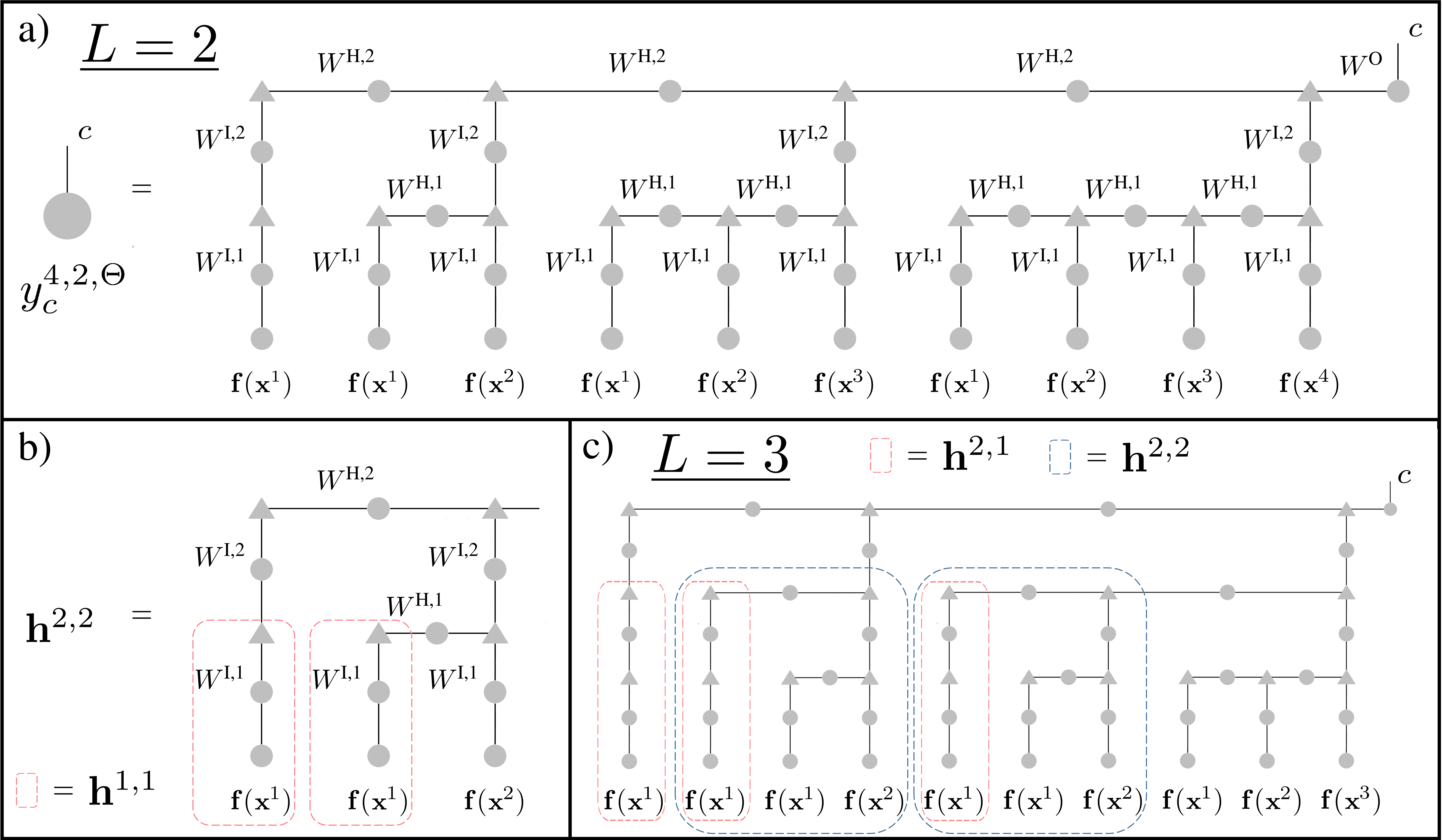}
		\caption{
			a) The Tensor Network representing the calculation preformed by a depth $L=2$ RAC after 4 time-steps.
			b) A Tensor Network construction of the hidden state $\h^{2,2}$ [see Equation~\eqref{eq:dup_example}], which involves duplication of the hidden state $ \h^{1,1}$ that is achieved by duplicating the input $x^1$.
			c) The Tensor Network representing the calculation preformed by a depth $L=3$ RAC after 3 time-steps. Here too, as in any deep RAC, several duplications take place.
		}
		\label{fig:deep_rac_tn}
	\end{figure}
	
	It is important to note that these TNs, which grow exponentially in size as the
	depth $L$ of the recurrent network represented by them increases, are merely a
	theoretical tool for analysis and not a suggested implementation scheme for deep
	recurrent networks. The actual deep recurrent network is constructed according to the
	simple scheme given at the bottom of
	Figure~\ref{fig:recurrent_net}, which grows only
	linearly in size as the depth $L$ increases, despite the corresponding TN
	growing exponentially. In fact, this exponential `blow-up' in the size of the
	TNs representing the deep recurrent networks is closely related to their ability
	to model more intricate dependencies over longer periods of time in comparison
	with their shallower counterparts, which was established in
	Section~\ref{sec:results}.
	
	Figure~\ref{fig:deep_rac_tn} shows TNs which correspond to depth $L=2,3$ RACs.
	Even though the TNs in Figure~\ref{fig:deep_rac_tn} seem rather convoluted and
	complex, their architecture follows clear recursive rules. In
	Figure~\ref{fig:deep_rac_tn}a, a depth $L=2$
	recurrent network is presented, spread out in time onto $T=4$ time-steps.
	To understand the logic underlying the input duplication process, which in turn
	entails duplication of entire segments of the TN, we focus on the calculation of
	the hidden state vector $\h^{2,2}$ that is presented in
	Figure~\ref{fig:deep_rac_tn}b. When the first
	inputs vector, $\ff(\x^1)$, is inserted into the network, it is multiplied by
	$W^{{\textrm I},1}$ and the outcome is equal to $\h^{1,1}$. Note that in this
	figure, the initial condition for each layer $l\in L$, $\h^{l,0}$, is chosen
	such that a vector of ones will be present in the initial element-wise
	multiplication: $(\h^{0,l})^T={\bf{1}}^T(W^{{\textrm H},l})^{\dagger}$,where $\dagger$ denotes the pseudoinverse operation.
	
	Next, $\h^{1,1}$ is used in two different places, as an inputs vector to layer $L=2$
	at time $t=1$, and as a hidden state vector in layer $L=1$ for time $t=2$
	calculation. Our input duplication technique inserts $\ff(\x^1)$ into the network
	twice, so that the same exact $\h^{1,1}$ is achieved twice in the TN, as marked
	by the red dotted line in
	Figure~\ref{fig:deep_rac_tn}b. This way, every copy
	of $\h^{1,1}$ goes to the appropriate segment of the calculation, and indeed the
	TN in Figure~\ref{fig:deep_rac_tn}b holds the
	correct value of $\h^{2,2}$:
	\begin{equation} \label{eq:dup_example}
	\h^{2,2} = \left(
	\vphantom{(W^{{\textrm H},1}\h^{1,1})\odot(W^{{\textrm I},1}\ff(\x^2))}
	W^{{\textrm H},2}W^{{\textrm I},2}\h^{1,1}\right)\odot \left(W^{{\textrm I},2}((W^{{\textrm H},1}\h^{1,1})\odot(W^{{\textrm I},1}\ff(\x^2)))\right).
	\end{equation}
	
	The extension to deeper layers leads us to a fractal structure of the TNs,
	involving many self similarities, as in the $L=3$ example given in
	Figure~\ref{fig:deep_rac_tn}c. The duplication of
	intermediate hidden states, marked in red and blue in this example, is the
	source of the apparent complexity of this $L=3$ RAC TN.
	Generalizing the above $L=1,2,3$ examples, a
	TN representing an RAC of general depth $L$ and of $T$ time-steps, would involve
	in its structure $T$ duplications of TNs representing RACs of depth $L-1$, each
	of which has a distinct length in time-steps $i$, where $i\in [T]$. This fractal structure leads to an increasing with
	depth complexity of the TN representing the depth $L$ RAC computation, which
	we show in the next subsection to motivate the combinatorial lower bound on the Start-End separation rank of deep RACs, given in Conjecture~\ref{conjecture:high_L}.
	
	\subsection{A Formal Motivation for Conjecture~\ref{conjecture:high_L}}
	\label{app:rac_tns:conjecture}
	
	The above presented construction of TNs which correspond to deep RACs, allows us to further investigate the effect of network depth on its ability to model long-term temporal dependencies. We present below a formal motivation for the lower bound on the Start-End separation rank of deep recurrent networks, given in Conjecture~\ref{conjecture:high_L}. Though our analysis employs TNs visualizations, it is formal nonetheless -- these graphs represent the computation in a well-defined manner (see Appendices~\ref{app:rac_tns:tns_intro}-\ref{app:rac_tns:deep} above).
	
	Our conjecture relies on the fact that it is sufficient
	to find a specific instance of the network parameters $\Theta\times\h^{0,l}$ for which
	$\mat{\A(y^{T,L, \Theta}_c)}_{S,E}$ achieves a certain rank, in order for this rank to
	be a lower bound on the Start-End separation rank of the network. This follows from combining Claim~\ref{claim:grid_sep_deep} and Lemma~\ref{lemma:poly_full_rank}. Claim~\ref{claim:grid_sep_deep} assures us that the Start-End separation rank of the function realized by an RAC of any depth $L$, is lower bounded by the rank of the matrix obtained by the corresponding grid tensor matricization: $\sep{S,E}{y^{T,L, \Theta}_c}\geq \rank{\mat{\A(y^{T,L, \Theta}_c)}_{S,E}}$. Thus, one must show that $\rank{\mat{\A(y^{T,L, \Theta}_c)}_{S,E}} \geq${\tiny{$\multiset{\min\{M,R\}}{\multiset{T/2}{L-1}}$}} for all of the values of parameters $\Theta\times\h^{0,l}$ but a set of Lebesgue measure zero, in order to establish the lower
	bound in Conjecture~\ref{conjecture:high_L}.
	Next, we rely on Lemma~\ref{lemma:poly_full_rank}, which states that since the entries of
	$\A(y^{T,L, \Theta}_c)$ are polynomials in the deep recurrent network's
	weights, it suffices to find a single example for which the rank of the
	matricized grid tensor is greater than the desired lower bound. Finding
	such an example would indeed imply that for almost all of the values of
	the network parameters, the desired inequality holds.

	\begin{figure}
		\centering
		\includegraphics[width=1\linewidth]{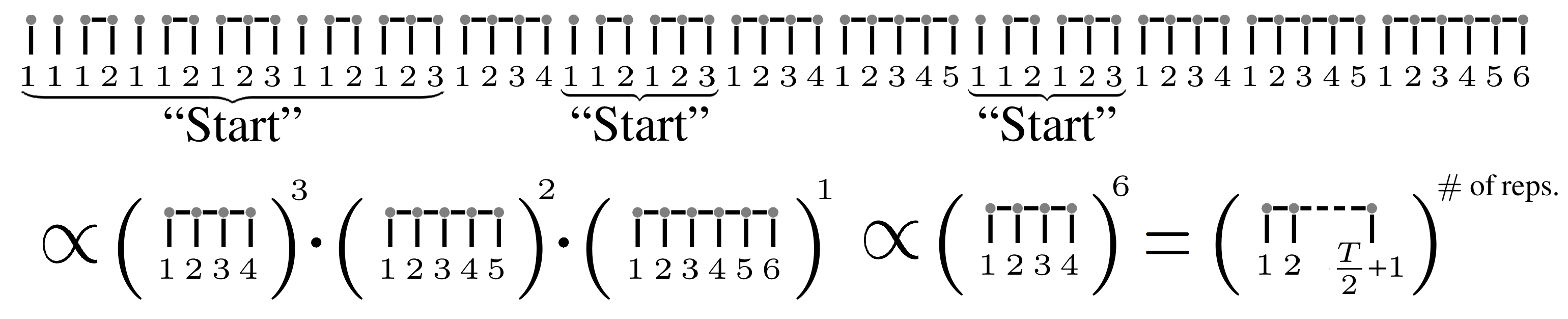}
		\caption{Above: TN representing the computation of a depth $L=3$ RAC after $T=6$ time-steps, when choosing $\WI{2}$ to be of rank-1. See full TN, for general values of the weight matrices, in Figure~\ref{fig:conjecture}. Below: Reduction of this TN to the factors affecting the Start-End matricization of the grid tensor represented by the TN.}
		\label{fig:conjecture_app}
	\end{figure}

	In the following, we choose a weight assignment that effectively `separates' between the first layer and higher layers, in the sense that $\WI{2}$ is of rank-1. This is done in similar spirit to the assignment used in the proof of Theorem~\ref{theorem:main_result}, in which $\WI{2}_{ij} \equiv \delta_{i1}$ (see Appendix~\ref{app:proofs:main_result}). Under this simplifying assignment, which suffices for our purposes according to the above discussion, the entire computation performed in deeper layers contributes only a constant factor to the matricized grid tensor. In this case, the example of the TN corresponding to an RAC of depth $L=3$ after $T=6$ time-steps, which is shown in full in Figure~\ref{fig:conjecture}, takes the form shown in the upper half of Figure~\ref{fig:conjecture_app}.
	Next, in order to evaluate $\rank{\mat{\A(y^{T,L, \Theta}_c)}_{S,E}}$, we note that graph segments which involve only indices from the ``Start'' set, will not affect the rank of the matrix under mild conditions on $\WI{1}, \WH{1}$ (for example, this holds if $\WI{1}$ is fully ranked and does not have vanishing elements, and $\WH{1}=I$). Specifically, under the Start-End matricization these segments will amount to a different constant multiplying each row of the matrix. For the example of the RAC of depth $L=3$ after $T=6$ time-steps, this amounts to the effective TN given in the bottom left side of Figure~\ref{fig:conjecture_app}.
	Finally, the dependence of this TN on the indices of time-steps $\{\nicefrac{T}{2}+2,\ldots ,T\}$, namely those outside of the basic unit involving indices of time-steps $\{1,\ldots ,\nicefrac{T}{2}+1\}$, may only increase the resulting Start-End matricization rank (this holds due to the temporal invariance of the recurrent network's weights). Thus, we are left with an effective TN resembling the one shown in Section~\ref{sec:results:conjecture}, where the basic unit separating ``Start" and ``End" indices is raised to the power of the number of its repetitions in the graph.
	In the following, we prove a claim according to which the number of repetitions of this basic unit in the TN graph increases combinatorially with the depth of the RAC:
	
	\begin{claim}
		Let $\phi(T,L,R)$ be the TN representing the computation performed after
		$T$ time-steps by an RAC with $L$ layers and $R$ hidden channels per layer. Then, the number of occurrences in layer $L=1$ of the basic unit connecting ``Start" and ``End" indices (bottom right in Figure~\ref{fig:conjecture_app}), is exactly {\tiny{\multiset{T/2}{L-1}}}.
	\end{claim}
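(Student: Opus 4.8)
The plan is to reduce the count to an elementary enumeration dictated by the recursive ``input-duplication'' structure of the deep RAC Tensor Network described in Appendix~\ref{app:rac_tns:deep}. Recall that for each time-step $i\in[T]$ at which the layer-$(L-1)$ hidden state is consumed by layer $L$, the graph $\phi(T,L,R)$ contains one replica of $\phi(i,L-1,R)$; iterating this rule from the top layer down to the first layer, every first-layer (shallow) chain occurring inside $\phi(T,L,R)$ arises by selecting a time-step $t_{L-1}\in[T]$ for the layer-$(L-1)$ state, then $t_{L-2}\in[t_{L-1}]$ for the layer-$(L-2)$ state, and so on down to $t_1\in[t_2]$ for the layer-$1$ state. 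First I would make this precise by induction on $L$, obtaining a bijection between the first-layer chains of $\phi(T,L,R)$ and the weakly increasing sequences $1\le t_1\le t_2\le\cdots\le t_{L-1}\le T$, under which the chain indexed by $(t_1,\ldots,t_{L-1})$ is exactly the shallow-RAC chain spanning time-steps $1,\ldots,t_1$ (the layer-$1$ computation of $\h^{t_1,1}$ ingests the inputs $\ff(\x^1),\ldots,\ff(\x^{t_1})$ and no others).

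Second I would locate the basic ``Start''--``End'' unit inside each such chain. In a shallow-RAC chain over time-steps $1,\ldots,t_1$, the only place where an index in $S=\{1,\ldots,\nicefrac{T}{2}\}$ is contracted against an index in $E=\{\nicefrac{T}{2}+1,\ldots,T\}$ is the transition linking site $\nicefrac{T}{2}$ to site $\nicefrac{T}{2}+1$, and that transition is present if and only if $t_1\ge\nicefrac{T}{2}+1$; when present it is exactly one occurrence of the basic unit of Figure~\ref{fig:conjecture_app}. Consequently the number of occurrences of the basic unit in layer $L=1$ of $\phi(T,L,R)$ equals the number of weakly increasing sequences $(t_1,\ldots,t_{L-1})$ with $\nicefrac{T}{2}+1\le t_1\le t_2\le\cdots\le t_{L-1}\le T$, \ie~weakly increasing sequences of length $L-1$ whose entries range over the $\nicefrac{T}{2}$-element set $\{\nicefrac{T}{2}+1,\ldots,T\}$.

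Third I would finish by the standard stars-and-bars count: the number of weakly increasing sequences of length $L-1$ over an alphabet of size $\nicefrac{T}{2}$ is exactly $\multiset{T/2}{L-1}=\binom{\nicefrac{T}{2}+L-2}{L-1}$, as asserted. The base cases drop out immediately --- $L=1$ gives the single chain of length $T\ge\nicefrac{T}{2}+1$, hence $\multiset{T/2}{0}=1$; $L=2$ gives the $\nicefrac{T}{2}$ chains of lengths $\nicefrac{T}{2}+1,\ldots,T$, hence $\multiset{T/2}{1}=\nicefrac{T}{2}$, matching the degree of the Hadamard power used in the proof of Theorem~\ref{theorem:main_result}; and for $T=6,L=3$ one recovers $\multiset{3}{2}=6$. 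As an independent sanity check I would re-derive the same number by a lattice-path count: a replica of the first-layer site $\h^{\nicefrac{T}{2}+1,1}$ arises once for every monotone path from $(\nicefrac{T}{2}+1,1)$ to $(T,L)$ in the grid of (time-step, layer) pairs, of which there are $\binom{(T-\nicefrac{T}{2}-1)+(L-1)}{L-1}=\binom{\nicefrac{T}{2}+L-2}{L-1}$.

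I expect the only genuine obstacle to be the first step: rigorously establishing the bijection between the first-layer chains of the fully expanded $\phi(T,L,R)$ and the increasing sequences $1\le t_1\le\cdots\le t_{L-1}\le T$, with neither over- nor under-counting, and with each such chain spanning exactly time-steps $1,\ldots,t_1$. This amounts to formalizing the observation that every duplication of a hidden state $\h^{t,l}$ replicates the entire sub-network that computes it --- Claim~\ref{claim:no_clone} being precisely the reason this replication cannot be avoided --- and then matching the resulting replicas to the path indexing via the induction on $L$. Once this bookkeeping is carried out, the enumeration and the multiset identity are routine.
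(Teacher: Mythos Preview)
Your proposal is correct and is essentially the same argument as the paper's, arriving at the identical enumeration of weakly increasing $(L-1)$-tuples in $\{\nicefrac{T}{2}+1,\ldots,T\}$ and the multiset count $\multiset{T/2}{L-1}$. The only difference is presentational: the paper fixes the simplifying assignment $\WI{2}_{ij}=\delta_{i1}$ and expands the grid tensor algebraically to obtain the nested product $\prod_{t_L=1}^{T}\prod_{t_{L-1}=1}^{t_L}\cdots\prod_{t_2=1}^{t_3}(\cdot)$, then reads off the count from the index ranges after separating the ``Start'' factors; you instead work directly with the fractal graph structure of $\phi(T,L,R)$ and set up the bijection with sequences $t_1\le\cdots\le t_{L-1}$ by induction on $L$. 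Your route is slightly cleaner for the claim as stated (which is purely about occurrences in the TN graph), while the paper's route has the advantage of simultaneously producing the explicit algebraic form of the grid tensor that feeds into the motivation for Conjecture~\ref{conjecture:high_L}. Your lattice-path sanity check is a nice addition not present in the paper.
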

	\begin{proof}
		Let $y^{T,L, \Theta}_c$ be the function computing the output after
		$T$ time-steps of an RAC with $L$ layers, $R$ hidden channels per layer and
		weights denoted by $\Theta$. In order to focus on repetitions in layer $L=1$, we assign $\WI{2}_{ij} \equiv \delta_{i1}$ for which the following upholds (see a similar and more detailed derivation in Appendix~\ref{app:proofs:main_result}):
		\begin{align*}
		\A(y^{T,L, \Theta}_c)_{d_1,\ldots,d_T}
		= \left(Const.\right)&\prod_{t_L=1}^T\prod_{t_{L-1}=1}^{t_L}\cdots\prod_{t_2=1}^{t_3} \sum_{r_1,\ldots,r_{t_2}=1}^R\left( \prod_{j=1}^{t_2}
		\WI{1}_{r_jd_j}\prod_{j=1}^{t_2-1}\WH{1}_{r_jr_{j+1}}\right)\\
		= \left(Const.\right)(V_{d_1\ldots d_{\nicefrac{T}{2}}})\prod_{t_L=\nicefrac{T}{2}+1}^T&\prod_{t_{L-1}=\nicefrac{T}{2}+1}^{t_L}\cdots\prod_{t_2=\nicefrac{T}{2}+1}^{t_3} \sum_{r_1,\ldots,r_{t_2}=1}^R\left( \prod_{j=1}^{t_2}
		\WI{1}_{r_jd_j}\prod_{j=1}^{t_2-1}\WH{1}_{r_jr_{j+1}}\right),
		\end{align*}
		where the constant term in the first line is the contribution of the deeper layers under this assignment, and the tensor $V_{d_1\ldots d_{\nicefrac{T}{2}}}$, which becomes a vector under the Start-End matricization, reflects the contribution of the ``Start'' set indices. Observing the argument of the chain of products in the above expression, $\sum_{r_1,\ldots,r_{t_2}=1}^R\left( \prod_{j=1}^{t_2}
		\WI{1}_{r_jd_j}\prod_{j=1}^{t_2-1}\WH{1}_{r_jr_{j+1}}\right)$, it is an order $t_2$ tensor, exactly given by the TN representing the computation of a depth $L=1$ RAC after $t_2$ time-steps.
		Specifically, for $t_2=\nicefrac{T}{2}+1$, it is exactly equal to the basic TN unit connecting ``Start" and ``End" indices, and for $\nicefrac{T}{2}+1<t_2\leq T$ it contains this basic unit. This means that in order to obtain the number of repetition of this basic unit in $\phi$, we must count the number of multiplications implemented by the chain of products in the above expression. Indeed this number is equal to:
		\begin{equation*}
		\sum_{t_L=\nicefrac{T}{2} +1}^T\sum_{t_{L-1}=\nicefrac{T}{2} +1}^{t_L}\cdots \sum_{t_2=\nicefrac{T}{2} +1}^{t_3}t_2 =\multiset{T/2}{L-1}
		\end{equation*}
		
	\end{proof}
	
	Finally, the form of the lower bound presented in Conjecture~\ref{conjecture:high_L} is obtained by considering a rank $R$ matrix, such as the one obtained by the Start-End matricization of the TN basic unit discussed above, raised to the Hadamard power of $\multiset{\nicefrac{T}{2}}{L-1}$. The rank of the resultant matrix, is upper bounded by {\tiny{$\multiset{R}{\multiset{\nicefrac{T}{2}}{L-1}}$}} as shown for example in~\citep{amini2012low}. We leave it as an open problem to prove Conjecture~\ref{conjecture:high_L}, by proving that the upper bound is indeed tight in this case.
	
	\newpage
	\section{Deferred proof of Theorem~\ref{theorem:main_result}}
	\label{app:proofs:main_result}
	
	In this section, we follow the proof strategy that is outlined in
	Section~\ref{sec:results}, and prove Theorem~\ref{theorem:main_result}, which
	shows a combinatorial advantage of deep recurrent networks over shallow ones in
	the ability to model long-term dependencies, as measured by the Start-End
	separation rank (see Section~\ref{sec:corr:sep}).
	In Appendices~\ref{app:proofs:main_result:shallow} and~\ref{app:proofs:main_result:deep}, we prove the bounds on the
	Start-End separation rank of the shallow and deep RACs, respectively,
	while more technical lemmas which are employed during the proof are relegated to
	Appendix~\ref{app:proofs:main_result:technical}.
	
	\subsection{The Start-End Separation Rank of Shallow RACs}
	\label{app:proofs:main_result:shallow}
	
	We consider the Tensor Network construction of the calculation carried
	out by a shallow RAC, given in Figure~\ref{fig:shallow_rac_tn}.
	According to the presented construction, the shallow RAC weights
	tensor [Equations~\eqref{eq:score} and~\eqref{eq:tt_decomp}] is represented by
	a Matrix Product State (MPS) Tensor Network
	\citep{orus2014practical}, with the following order-3 tensor
	building block:
	$M_{k_{t-1}d_tk_t}=W^\textrm{I}_{k_td_t}W^\textrm{H}_{k_tk_{t-1}}$,
	where $d_t\in[M]$ is the input index and $k_{t-1},k_t\in[R]$ are the
	internal indices (see
	Figure~\ref{fig:shallow_rac_tn}c). In
	TN terms, this means that the bond dimension of this MPS is equal to
	$R$. We apply the result of \citep{levine2018deep}, who state that the
	rank of the matrix obtained by matricizing any tensor according to a
	partition $(S,E)$ is equal to a min-cut separating $S$ from $E$ in the
	Tensor Network graph representing this tensor, for all of the values of the TN
	parameters but a set of Lebesgue measure zero. In this MPS Tensor
	Network, the minimal cut \wrt~the partition $(S,E)$ is equal to the
	bond dimension $R$, unless $R > M^{\nicefrac{T}{2}}$, in which
	case the minimal cut contains the external legs instead. Thus, in the
	TN representing $\A^{T,1, \Theta}_c$, the minimal cut \wrt~the partition
	$(S,E)$ is equal to $\min\{R,M^{\nicefrac{T}{2}}\}$, implying
	$\rank{\mat{\A^{T,1, \Theta}_c})_{S,E}} = \min\{R,M^{\nicefrac{T}{2}}\}$ for all values of the parameters but a set of Lebesgue measure zero.
	The first half of the
	theorem follows from applying Claim~\ref{claim:grid_sep_shallow},
	which assures us that the Start-End separation rank of the function
	realized by a shallow ($L=1$) RAC is equal to
	$\rank{\mat{\A^{T,1, \Theta}_c})_{S,E}}$.
	
	\hfill $\square$ 

	\subsection{Lower-bound on the Start-End Separation Rank of Deep RACs}
	\label{app:proofs:main_result:deep}
	
	For a deep network, Claim~\ref{claim:grid_sep_deep} assures us that
	the Start-End separation rank of the function realized by a depth
	$L =2$ RAC is lower bounded by the rank of the matrix obtained by the
	corresponding grid tensor matricization, for any choice of template
	vectors. Specifically:
	\begin{equation*}
	\sep{S,E}{y^{T,L, \Theta}_c}
	\geq \rank{\mat{\A(y^{T,L, \Theta}_c)}_{S,E}}.
	\end{equation*}
	Thus, proving that
	$\rank{\mat{\A(y^{T,L, \Theta}_c)}_{S,E}} \geq \multiset{\min\{R,M\}}{T/2}$
	for all of the values of parameters $\Theta\times\h^{0,l}$ but a set of Lebesgue measure
	zero, would satisfy the theorem.
	
	In the following, we provide an assignment of weight matrices and initial hidden states for
	which $\rank{\mat{\A(y^{T,L, \Theta}_c)}_{S,E}} = \multiset{\min\{R,M\}}{T/2}$.
	In accordance with Claim~\ref{claim:rank_everywhere}, this will
	suffice as such an assignment implies this rank is achieved for all configurations of
	the recurrent network weights but a set of Lebesgue measure zero.
	
	We begin by choosing a specific set of template vectors
	$\x^{(1)}, \ldots, \x^{(M)} \in \X$. Let $F \in \R^{M \times M}$ be a
	matrix with entries defined by $F_{ij} \equiv f_j(\x^{(i)})$.
	According to \citep{cohen2016convolutional}, since $\{f_d\}_{d=1}^M$
	are linearly independent, then there is a choice of template vectors
	for which $F$ is non-singular.
	
	Next, we describe our assignment. In the
	expressions below we use the notation $\delta_{ij} = \begin{cases}
	1 & i = j \\ 0 & i \neq j\end{cases}$. Let $z \in \R \setminus \{0\}$
	be an arbitrary non-zero real number, let $\Omega \in \R_+$ be an arbitrary
	positive real number, and let $Z \in\R^{R \times M}$
	be a matrix with entries
	$Z_{ij} \equiv \begin{cases} z^{\Omega^i \delta_{ij}} &
	i \leq M \\ 0 & i > M \end{cases}$.
	
	We set
	$\WI{1} \equiv Z \cdot (F^T)^{-1}$ and set $\WI{2}$ such that its entries
	are $\WI{2}_{ij} \equiv \delta_{i1}$. 
	We set
	$\WH{1} \equiv \WH{2} \equiv I$, i.e. to the identity matrix, and
	additionally we set the entries of $\WO$ to $\WO_{ij} = \delta_{1j}$. Finally, we choose the
	initial hidden state values so they bear no effect on the calculation, namely $\h^{0,l}=\left(\WH{l}\right)^{-1}\1=\1$ for $l=1,2$.
	
	Under the above assignment, the output for the corresponding class $c$
	after $T$ time-steps is equal to:
	\begin{align*}
	y^{T,L, \Theta}_c(\x^1, \ldots, \x^T)
	&= \left(\WO \h^{T,2}\right)_c \\
	(\WO_{ij} \equiv \delta_{1j}) \Rightarrow &= (\h^{T,2})_1 \\
	[\text{Equation~\eqref{eq:deep_rn}}] \Rightarrow
	&= \left((\WH{2} \h^{T-1,2}) \odot (\WI{2}\h^{T,1})\right)_1 \\
	(\WH{2} \equiv I) \Rightarrow
	&= \left((\h^{T-1,2}) \odot (\WI{2}\h^{T,1})\right)_1\\
	(\h^{0,2}=\1) \Rightarrow
	&= \prod_{t=1}^T \left(\WI{2}\h^{t,1}\right)_1 \\
	(\WI{2}_{ij} \equiv \delta_{1i}) \Rightarrow
	&= \prod_{t=1}^T \sum_{r=1}^R \left(\h^{t,1}\right)_r \\
	[\text{Equation~\eqref{eq:deep_rn}}] \Rightarrow
	&= \prod_{t=1}^T \sum_{r=1}^R
	\left((\WH{1} \h^{t-1,1}) \odot (\WI{1} \ff(\x^t))\right)_r \\
	(\WH{1} \equiv I) \Rightarrow
	&= \prod_{t=1}^T \sum_{r=1}^R
	\left((\h^{t-1,1}) \odot (\WI{1} \ff(\x^t))\right)_r\\
	(\h^{0,1}=\1) \Rightarrow
	&= \prod_{t=1}^T \sum_{r=1}^R \prod_{j=1}^t
	\left(\WI{1} \ff(\x^j)\right)_r.
	\end{align*}
	When evaluating the grid tensor for our chosen set of template
	vectors, i.e. $\A(y^{T,L, \Theta}_c)_{d_1,\ldots,d_T} =
	y^{T,L, \Theta}_c(\x^{(d_1)}, \ldots, \x^{(d_T)})$, we can
	substitute $f_j(\x^{(i)}) \equiv F_{ij}$, and thus
	\begin{equation*}
	(\WI{1} \ff(\x^{(d)}))_r = (\WI{1} F^T)_{rd} = (Z \cdot (F^T)^{-1}F^T)_{rd} = Z_{rd}.
	\end{equation*}
	Since we defined $Z$ such that for $r \geq \min\{R,M\}$ $Z_{rd} = 0$, and denoting $\bar{R}\equiv \min\{R,M\} $ for brevity of notation, the
	grid tensor takes the following form:
	\begin{equation*}
	\A(y^{T,L, \Theta}_c)_{d_1,\ldots,d_T} =
	\prod_{t=1}^T \sum_{r=1}^{\bar{R}} \prod_{j=1}^t Z_{r d_j}
	= \left(\prod_{t=1}^{\nicefrac{T}{2}} \sum_{r=1}^{\bar{R}} \prod_{j=1}^t Z_{r d_j}\right)
	\cdot \left(\prod_{t=\nicefrac{T}{2}+1}^{T} \sum_{r=1}^{\bar{R}} \prod_{j=1}^t Z_{r d_j}\right),
	\end{equation*}
	where we split the product into two expressions, the left
	part that contains only the indices in the start set $S$, i.e.
	$d_1,\ldots,d_{\nicefrac{T}{2}}$, and the right part which contains
	all external indices (in the start set $S$ and the end set $E$). Thus, under matricization w.r.t. the Start-End
	partition, the left part is mapped to a vector
	$\aaa \equiv \matflex{\prod_{t=1}^{\nicefrac{T}{2}} \sum_{r=1}^{\bar{R}}
		\prod_{j=1}^t Z_{r d_j}}_{S,E}$ containing only non-zero entries per the definition of $Z$, and
	the right part is mapped to a matrix $B \equiv
	\matflex{\prod_{t=\nicefrac{T}{2}+1}^T
		\sum_{r=1}^{\bar{R}} \prod_{j=1}^t Z_{r d_j}}_{S,E}$, where each entry
	of $\uu$ multiplies the corresponding row of $B$. This results in:
	\begin{equation*}
	\mat{\A(y^{T,L, \Theta}_c)_{d_1,\ldots,d_T}}_{S,E}
	= \mathrm{diag}(\aaa) \cdot B.
	\end{equation*}
	Since $\aaa$ contains only non-zero entries, $\mathrm{diag}(\aaa)$ is
	of full rank, and so
	$\rank{\mat{\A(y^{T,L, \Theta}_c)_{d_1,\ldots,d_T}}_{S,E}}=\rank{B}$, leaving us to prove that $\rank{B} =
	\multiset{{\bar{R}}}{\nicefrac{T}{2}}$ . For brevity of notation, we define $N \equiv \multiset{{\bar{R}}}{\nicefrac{T}{2}}$.
	
	To prove the above, it is sufficient to show that $B$
	can be written as a sum of $N$
	rank-1 matrices, i.e. $B=\sum_{i=1}^N \uu^{(i)} \otimes \vv^{(i)}$,
	and that $\{\uu^{(i)}\}_{i=1}^N$ and $\{\vv^{(i)}\}_{i=1}^N$ are two
	sets of linearly independent vectors. Indeed, applying
	Claim~\ref{claim:decomp} on the entries of $B$, specified w.r.t. the
	row $(d_1,\ldots,d_{\nicefrac{T}{2}})$ and column
	$(d_{\nicefrac{T}{2}+1},\ldots,d_T)$, yields the following form:
	\begin{equation*}
	B_{(S,E)}=
	\sum_{\substack{\p^{(\nicefrac{T}{2})} \in \state{{\bar{R}}}{\nicefrac{T}{2}}}}
	\left(
	\prod_{r=1}^{{\bar{R}}} \prod_{j=1}^{\nicefrac{T}{2}} Z_{r d_j}^{p^{(\nicefrac{T}{2})}_r}
	\vphantom{\sum_{\substack{(\p^{(\nicefrac{T}{2}-1)},\ldots,\p^{(1)}) \\ \in \trajectory{\p^{(\nicefrac{T}{2})}}}}}
	\right) \cdot
	\left(
	\sum_{\substack{(\p^{(\nicefrac{T}{2}-1)},\ldots,\p^{(1)}) \\ \in \trajectory{\p^{(\nicefrac{T}{2})}}}}
	\prod_{r=1}^{{\bar{R}}}{\prod_{j=\nicefrac{T}{2}+1}^T} Z_{r d_j}^{p_r^{(T-j+1)}}
	\right),
	\end{equation*}
	where for all $k$, $\p^{(k)}$ is $\bar{R}$-dimensional vector of non-negative integer
	numbers which sum to $k$, and we explicitly define
	$\state{\bar{R}}{\nicefrac{T}{2}}$ and $\trajectory{\p^{(\nicefrac{T}{2})}}$
	in Claim~\ref{claim:decomp}, providing a softer more intuitive definition hereinafter. $\state{\bar{R}}{\nicefrac{T}{2}}$
	can be viewed as the set of all possible states of a bucket containing $\nicefrac{T}{2}$
	balls of ${\bar{R}}$ colors, where $p^{(\nicefrac{T}{2})}_r$ for $r\in[\bar{R}]$ specifies the number of balls
	of the $r$'th color.
	$\trajectory{\p^{(\nicefrac{T}{2})}}$ can be viewed
	as all possible trajectories from a given state to an empty bucket, i.e. $(0,\ldots,0)$,
	where at each step we remove a single ball from the bucket.
	We note that the number of all initial states of the bucket is exactly
	$\abs{\state{\bar{R}}{\nicefrac{T}{2}}} = N \equiv \multiset{{\bar{R}}}{\nicefrac{T}{2}}$.
	Moreover, since the expression in the left parentheses
	contains solely indices from the start set $S$, i.e. $d_1,\ldots,d_{\nicefrac{T}{2}}$,
	while the right contains solely indices from the end set $E$, i.e.
	$d_{\nicefrac{T}{2}+1},\ldots,d_T$, then each summand is in fact a rank-1 matrix. Specifically,
	it can be written as $\uu^{\p^{(\nicefrac{T}{2})}} \otimes \vv^{\p^{(\nicefrac{T}{2})}}$,
	where the entries of $\uu^{\p^{(\nicefrac{T}{2})}}$ are represented by the expression
	in the left parentheses, and those of $\vv^{\p^{(\nicefrac{T}{2})}}$ by the expression
	in the right parentheses.
	
	We prove that the set $\left\{ \uu^{\p^{(\nicefrac{T}{2})}} \in \R^{M^{\nicefrac{T}{2}}}
	\right\}_{\p^{(\nicefrac{T}{2})} \in \state{\bar{R}}{\nicefrac{T}{2}}}$ is linearly
	independent by arranging it as the columns of the matrix
	$U \in \R^{M^{\nicefrac{T}{2}} \times N}$, and showing that its rank equals to $N$.
	Specifically, we observe the sub-matrix defined by the subset of the rows of $U$,
	such that we select the row $\dd \equiv (d_1,\ldots,d_{\nicefrac{T}{2}})$ only if it holds
	that $\forall j, d_j \leq d_{j+1}$. 
	Note that there are exactly $N$ such rows, similarly to the number of columns, which can be intuitively understood since for the imaginary `bucket states' defining the columns $\p^{(\nicefrac{T}{2})}$ there is no meaning of order in the balls, and having imposed the restriction $\forall j, d_j \leq d_{j+1}$ on the $\nicefrac{T}{2}$ length tuple $\dd$, there is no longer a degree of freedom to order the `colors' in $\dd$, reducing the number of rows from $M^{\nicefrac{T}{2}}$ to $N$ (note that by definition $N\leq\multiset{M}{\nicefrac{T}{2}}<M^{\nicefrac{T}{2}}$ ). 
	Thus, in the resulting
	sub-matrix, denoted by $\bar{U} \in \R^{N \times N}$, not only do the columns correspond
	to the vectors of $\state{\bar{R}}{\nicefrac{T}{2}}$, but also its rows, where
	the row specified by the tuple $\dd$, corresponds to the
	vector $\q^{(\nicefrac{T}{2})} \in \state{\bar{R}}{\nicefrac{T}{2}}$, such that for $r\in[\bar{R}]:~q^{(\nicefrac{T}{2})}_r \equiv \abs{\{j \in [\nicefrac{T}{2}] | d_j = r\}}$ specifies the amount of
	repetitions of the number (`color') $r$ in the given tuple.
	
	Accordingly, for each element of
	$\bar{U}$ the following holds:
	\begin{align*}
	\bar{U}_{\q^{(\nicefrac{T}{2})},\p^{(\nicefrac{T}{2})}}
	&= \prod_{r=1}^{{\bar{R}}} \prod_{j=1}^{\nicefrac{T}{2}} Z_{r d_j}^{p^{(\nicefrac{T}{2})}_r}\\
	(Z_{ij} = z^{\Omega^i \delta_{ij}}) \Rightarrow
	&= z^{\sum_{j=1}^{\nicefrac{T}{2}} \sum_{r=1}^{\bar{R}} p^{(\nicefrac{T}{2})}_r\Omega^r\delta_{r d_j}} \\
	(\text{definition of }\delta_{ij}) \Rightarrow
	&= z^{\sum_{j=1}^{\nicefrac{T}{2}} \Omega^{d_j} p^{(\nicefrac{T}{2})}_{d_j}} \\
	(\text{Grouping identical summands}) \Rightarrow
	&= z^{\sum_{r=1}^{\bar{R}} \Omega^r \abs{\{j \in [\nicefrac{T}{2}] | d_j = r\}} p^{(\nicefrac{T}{2})}_r} \\
	(q^{(\nicefrac{T}{2})}_r \equiv \abs{\{j \in [\nicefrac{T}{2}] | d_j = r\}}) \Rightarrow
	&= z^{\sum_{r=1}^{\bar{R}} \Omega^r q^{(\nicefrac{T}{2})}_r p^{(\nicefrac{T}{2})}_r} \\
	\left(
	\begin{matrix}
	\bar{q}^{(\nicefrac{T}{2})}_r {\equiv} \Omega^{\nicefrac{r}{2}} q^{(\nicefrac{T}{2})}_r \\
	\bar{p}^{(\nicefrac{T}{2})}_r {\equiv} \Omega^{\nicefrac{r}{2}} p^{(\nicefrac{T}{2})}_r
	\end{matrix}
	\right) \Rightarrow
	&= z^{\inprod{\bar{\q}^{(\nicefrac{T}{2})}}{\bar{\p}^{(\nicefrac{T}{2})}}}.
	\end{align*}
	Since the elements of $\bar{U}$ are polynomial in $z$, then as we prove in
	Lemma~\ref{lemma:poly_full_rank}, it is sufficient to show that there exists
	a single contributor to the determinant of $\bar{U}$ that has the highest degree
	of $z$ in order to ensure that the matrix is fully ranked for all values of $z$
	but a finite set. Observing the summands of the determinant, i.e.
	$ z^{\sum_{\q^{(\nicefrac{T}{2})} \in \state{\bar{R}}{\nicefrac{T}{2}}}
		\inprod{\bar{\q}{(\nicefrac{T}{2})}}{\sigma(\bar{\q}^{(\nicefrac{T}{2})})}}$,
	where $\sigma$ is a permutation on the rows of $\bar{U}$, and noting that $\state{\bar{R}}{\nicefrac{T}{2}}$
	is a set of non-negative numbers by definition,
	Lemma~\ref{lemma:rearrange} assures us the existence of a strictly maximal
	contributor, satisfying the conditions of Lemma~\ref{lemma:poly_full_rank}, thus the set $\left\{ \uu^{\p^{(\nicefrac{T}{2})}} 
	\right\}_{\p^{(\nicefrac{T}{2})} \in \state{\bar{R}}{\nicefrac{T}{2}}}$ is linearly
	independent.
	
	We prove that the set $\left\{ \vv^{\p^{(\nicefrac{T}{2})}} \in \R^{M^{\nicefrac{T}{2}}}
	\right\}_{\p^{(\nicefrac{T}{2})} \in \state{\bar{R}}{\nicefrac{T}{2}}}$ is linearly
	independent by arranging it as the columns of the matrix
	$V \in \R^{M^{\nicefrac{T}{2}} \times N}$, and showing that its rank equals to $N$.
	As in the case of $U$, we select the same sub-set of rows to form the sub-matrix
	$\bar{V} \in \R^{N \times N}$. We show that each of the diagonal elements of $\bar{V}$
	is a polynomial function whose degree is strictly larger than the degree of all other
	elements in its row. As an immediate consequence, the product of the diagonal elements,
	i.e. $\prod_{i=1}^N \bar{V}_{ii}(z)$, has degree strictly larger than any other summand
	of the determinant $\det(\bar{V})$, and by employing Lemma~\ref{lemma:poly_full_rank},
	$\bar{V}$ has full-rank for all values of $z$
	but a finite set. The degree of the polynomial function in each entry of $\bar{V}$
	is given by:
	\begin{align*}
	\deg\left(\bar{V}_{\dd,\p^{(\nicefrac{T}{2})}} \right)
	&= \max_{\substack{(\p^{(\nicefrac{T}{2}-1)},\ldots,\p^{(1)}) \\ \in \trajectory{\p^{(\nicefrac{T}{2})}}}}
	\deg\left( \prod_{r=1}^{{\bar{R}}}{\prod_{j=\nicefrac{T}{2}+1}^T} Z_{r d_j}^{p_r^{(T-j+1)}}\right) \\
	&= \max_{\substack{(\p^{(\nicefrac{T}{2}-1)},\ldots,\p^{(1)}) \\ \in \trajectory{\p^{(\nicefrac{T}{2})}}}}
	\deg\left( z^{
		\sum_{j=\nicefrac{T}{2}+1}^T \sum_{r=1}^{\bar{R}}
		\Omega^r p^{(T-j+1)}_r \delta_{r d_j}
	}\right) \\
	&= \max_{\substack{(\p^{(\nicefrac{T}{2}-1)},\ldots,\p^{(1)}) \\ \in \trajectory{\p^{(\nicefrac{T}{2})}}}}
	\sum_{j=\nicefrac{T}{2}+1}^T
	\Omega^{d_j} p^{(T-j+1)}_{d_j}.
	\end{align*}
	The above can be formulated as the following combinatorial optimization problem. We are
	given an initial state $\p^{(\nicefrac{T}{2})}$ of the bucket of $\nicefrac{T}{2}$ balls
	of $\bar{R}$ colors and a sequence of colors $\dd = (d_{\nicefrac{T}{2}+1},\ldots,d_T)$.
	At time-step $j$ one ball is taken out of the bucket and yields a reward of
	$\Omega^{d_j} p^{(T-j+1)}_{d_j}$, i.e. the number of remaining balls
	of color $d_j$ times the weight $\Omega^{d_j}$. Finally,
	$\deg(\bar{V}_{\dd, \p^{(\nicefrac{T}{2})}})$ is the accumulated reward of the optimal
	strategy of emptying the bucket. In Lemma~\ref{lemma:combinatoric_induction} we prove
	that there exists a value of $\Omega$ such that for every sequence of colors $\dd$,
	i.e. a row of $\bar{V}$, the maximal reward over all possible initial states is solely attained at the state $\q^{(\nicefrac{T}{2})}$
	corresponding to $\dd$, i.e.
	$q^{(\nicefrac{T}{2})}_r = \abs{\{j\in\{\nicefrac{T}{2}+1,\ldots,T\}|d_j = r\}}$.
	Hence, $\deg(\bar{V}_{ii})$ is indeed strictly larger than the degree of all
	other elements in the $i$'th row $\forall i \in [N]$.
	
	Having proved that both $U$ and $V$ have rank $N \equiv \multiset{\bar{R}}{\nicefrac{T}{2}}$ for all values of $z$
	but a finite set, we know there exists a value of $z$ for which $\rank{B} =\multiset{\bar{R}}{\nicefrac{T}{2}}$, and the theorem follows.
	
	\hfill $\square$ 
	
	\subsection{Technical Lemmas and Claims}
	\label{app:proofs:main_result:technical}
	
	In this section we prove a series of useful technical lemmas, that we have
	employed in our proof for the case of deep RACs, as described in
	Appendix~\ref{app:proofs:main_result:deep}.
	We begin by quoting a claim regarding the prevalence of the maximal matrix rank
	for matrices whose entries are polynomial functions:
	\begin{claim} \label{claim:rank_everywhere}
		Let $M, N, K \in \N$, $1 \leq r \leq \min\{M,N\}$ and a polynomial mapping
		$A:\R^K \to \R^{M \times N}$, i.e. for every $i \in [M]$ and $j\in [N]$ it
		holds that $A_{ij}:\R^K \to \R$ is a polynomial function. If there exists a
		point $\x \in \R ^K$ s.t. ${\textrm {rank}}{(A(\x))} \geq r$, then the set
		$\{\x \in \R^K : \textrm{rank}{(A(\x))} < r\}$ has zero measure (w.r.t. the
		Lebesgue measure over $\R^K$).
	\end{claim}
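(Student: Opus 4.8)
The plan is to reduce the statement to the classical fact that the zero set of a non-identically-zero polynomial on $\R^K$ has Lebesgue measure zero. First I would recall the minor characterization of rank: for a matrix $B\in\R^{M\times N}$ and $1\le r\le\min\{M,N\}$, one has $\mathrm{rank}(B)\ge r$ if and only if there exist subsets $I\subseteq[M]$, $J\subseteq[N]$ with $\abs{I}=\abs{J}=r$ such that the $r\times r$ minor $\det(B_{I,J})\neq 0$, where $B_{I,J}$ denotes the submatrix of $B$ with rows indexed by $I$ and columns indexed by $J$. Equivalently, $\mathrm{rank}(B)<r$ precisely when \emph{all} $r\times r$ minors of $B$ vanish. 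There are only finitely many such pairs $(I,J)$.

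Next, for each such pair $(I,J)$ I would define $p_{I,J}\colon\R^K\to\R$ by $p_{I,J}(\x)\equiv\det\!\big(A(\x)_{I,J}\big)$. Since every entry $A_{ij}(\cdot)$ is a polynomial in $\x$ by hypothesis, and the determinant is a polynomial with integer coefficients in the matrix entries, each $p_{I,J}$ is a polynomial function on $\R^K$; in particular it is continuous. Consequently $\{\x : \mathrm{rank}(A(\x))\ge r\}=\bigcup_{I,J}\{\x : p_{I,J}(\x)\neq 0\}$ is open, so its complement $\{\x : \mathrm{rank}(A(\x))<r\}$ is closed, hence Lebesgue measurable.

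Now I would invoke the hypothesis: there is a point $\x_*\in\R^K$ with $\mathrm{rank}(A(\x_*))\ge r$, so by the minor characterization some pair $(I_0,J_0)$ satisfies $p_{I_0,J_0}(\x_*)\neq 0$, that is, $p_{I_0,J_0}$ is not the zero polynomial. Applying the minor characterization once more, $\{\x : \mathrm{rank}(A(\x))<r\}\subseteq\{\x : p_{I_0,J_0}(\x)=0\}$, so it suffices to show that the zero set of a non-identically-zero polynomial on $\R^K$ is a Lebesgue-null set.

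This last point is the only ingredient that is not purely formal, and it is standard; I would prove it by induction on $K$. For $K=1$ a nonzero univariate polynomial has finitely many roots. For the inductive step, write $p(\x)=\sum_{k=0}^{d} q_k(x_1,\ldots,x_{K-1})\,x_K^{\,k}$ with leading coefficient $q_d\not\equiv 0$; by the inductive hypothesis the zero set of $q_d$ in $\R^{K-1}$ is null, and for every $(x_1,\ldots,x_{K-1})$ outside it the map $x_K\mapsto p(x_1,\ldots,x_{K-1},x_K)$ is a nonzero degree-$d$ polynomial, hence has at most $d$ roots. Fubini's theorem then yields that the zero set of $p$ in $\R^K$ is null. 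Combining the three steps, $\{\x : \mathrm{rank}(A(\x))<r\}$ is a measurable set contained in a null set, hence itself of measure zero, which is the claim. The main (and very mild) obstacle is simply keeping the bookkeeping of the minor characterization and of measurability precise; there is no substantive difficulty.
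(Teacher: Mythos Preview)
Your argument is correct. The minor characterization of rank, the observation that each $r\times r$ minor $p_{I,J}(\x)=\det(A(\x)_{I,J})$ is a polynomial in $\x$, the inclusion $\{\x:\mathrm{rank}(A(\x))<r\}\subseteq\{\x:p_{I_0,J_0}(\x)=0\}$ for a minor that does not vanish at the witness point $\x_*$, and the induction-plus-Fubini proof that the zero set of a nonzero multivariate polynomial is Lebesgue null are all standard and carried out cleanly.

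As for comparison with the paper: the paper does not actually prove this claim but simply cites an external reference (\citep{sharirtractable}). Your write-up is therefore more self-contained than the paper's own treatment, and the route you take---reduce to a single nonvanishing minor and invoke the measure-zero property of polynomial zero sets---is precisely the standard argument that such a citation would point to.
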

	\begin{proof}
		See \citep{sharirtractable}.
	\end{proof}
	
	Claim~\ref{claim:rank_everywhere} implies that it suffices to show a specific
	assignment of the recurrent network weights for which the corresponding grid
	tensor matricization achieves a certain rank, in order to show this is a lower
	bound on its rank for all configurations of the network weights but a set of
	Lebesgue measure zero. Essentially, this means that it is enough to provide a
	specific assignment that achieves the required bound in
	Theorem~\ref{theorem:main_result} in order to prove the theorem. Next, we show
	that for a matrix with entries that are polynomials in $x$, if a single
	contributor to the determinant has the highest degree of $x$, then the matrix is
	fully ranked for all values of $x$ but a finite set:
	
	\begin{lemma}\label{lemma:poly_full_rank}
		Let $A\in\R^{N\times N}$ be a matrix whose entries are polynomials in
		$x\in\R$. In this case, its determinant may be written as
		$\det(A)=\sum_{\sigma\in S_N}sgn(\sigma)p_\sigma(x)$, where $S_N$ is the
		symmetric group on $N$ elements and $p_\sigma(x)$ are polynomials defined by
		$p_\sigma(x)\equiv\prod_{i=1}^{N} A_{i\sigma(i)}(x),~\forall{\sigma\in S_n}$.
		Additionally, let there exist $\bar{\sigma}$ such that
		$\deg(p_{\bar{\sigma}}(x)) > \deg(p_{\sigma}(x)) ~ \forall \sigma
		\neq \bar{\sigma}$. Then, for all values of $x$ but a finite set, $A$ is
		fully ranked.
	\end{lemma}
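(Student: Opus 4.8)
The plan is to view $\det(A)(x)$ as a single univariate polynomial, show it is not identically zero by a leading-term argument, and then invoke the fact that a nonzero polynomial has only finitely many roots. First I would observe that each $p_\sigma(x)=\prod_{i=1}^{N}A_{i\sigma(i)}(x)$ is a product of polynomials and hence a polynomial, so $\det(A)(x)=\sum_{\sigma\in S_N}\mathrm{sgn}(\sigma)\,p_\sigma(x)$ is a polynomial in $x$. Writing $d\equiv\deg(p_{\bar\sigma})$, the hypothesis $\deg(p_{\bar\sigma})>\deg(p_\sigma)$ for every $\sigma\neq\bar\sigma$ in particular forces $p_{\bar\sigma}\not\equiv 0$ — a zero polynomial has degree $-\infty$, which cannot strictly exceed any degree — so $d\geq 0$ is well defined and the leading coefficient $\lambda$ of $p_{\bar\sigma}$ is nonzero.

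Next I would read off the coefficient of $x^{d}$ in $\det(A)(x)$: since every $p_\sigma$ with $\sigma\neq\bar\sigma$ has degree strictly below $d$, none of them contributes to the $x^{d}$ term, and that coefficient therefore equals $\mathrm{sgn}(\bar\sigma)\lambda\neq 0$. Hence $\det(A)(\cdot)$ is a nonzero polynomial of degree exactly $d$, so its zero set $Z\subset\R$ is finite (of cardinality at most $d$). For every $x\in\R\setminus Z$ we then have $\det(A(x))\neq 0$, so $A(x)$ is invertible, i.e. $\mathrm{rank}(A(x))=N$, which is the claim.

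There is no genuine obstacle here; the argument is elementary. The only point deserving a word of care is the ``no cancellation'' step: one must note that the strictly dominant summand $p_{\bar\sigma}$ cannot be annihilated by the remaining summands of the Leibniz expansion of the determinant, which is exactly what the strict degree inequality guarantees — and the same inequality simultaneously rules out the degenerate case $p_{\bar\sigma}\equiv 0$.
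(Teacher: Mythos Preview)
Your proposal is correct and follows essentially the same approach as the paper: both arguments identify the coefficient of the top-degree monomial $x^{d}$ in $\det(A)$ as $\mathrm{sgn}(\bar\sigma)$ times the leading coefficient of $p_{\bar\sigma}$, conclude that $\det(A)$ is a nonzero univariate polynomial, and then invoke the finiteness of its root set. Your write-up is slightly more explicit in noting that the strict degree hypothesis forces $p_{\bar\sigma}\not\equiv 0$, but otherwise the two proofs are the same.
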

	\begin{proof}
		We show that in this case $\det(A)$, which is a polynomial in $x$ by its
		definition, is not the zero polynomial. Accordingly, $\det(A)\neq 0$ for all
		values of $x$ but a finite set. Denoting $t\equiv\deg(p_{\bar{\sigma}}(x))$,
		since $t>\deg(p_{\sigma}(x))~\forall\sigma\neq\bar{\sigma}$, a monomial of
		the form $c\cdot x^t,c\in\R \setminus \{0\}$ exists in $p_{\bar{\sigma}}(x)$ and
		doesn't exist in any $p_\sigma(x),~\sigma\neq\bar{\sigma}$. This implies
		that $\det(A)$ is not the zero polynomial, since its leading term has a
		non-vanishing coefficient $sgn(\bar{\sigma})\cdot c\neq 0$, and the lemma
		follows from the basic identity: $\det(A)\neq 0 \iff$ $A$ is fully ranked.
	\end{proof}
	
	The above lemma assisted us in confirming that the assignment provided for
	the recurrent network weights indeed achieves the required grid tensor
	matricization rank of $\multiset{\bar{R}}{T/2}$. The following lemma, establishes a
	useful relation we refer to as the \emph{vector rearrangement inequality}:
	\begin{lemma}\label{lemma:rearrange}
		Let $\{\vv^{(i)}\}_{i=1}^{N}$ be a set of $N$ different vectors in $\R^{\bar{R}}$
		such that $\forall i\in[N],~j\in[\bar{R}]:~v^{(i)}_j\geq 0$. Then, for all
		$\sigma\in S_N$ such that $\sigma\neq\mathbb{I}_N$, where $S_N$ is the
		symmetric group on $N$, it holds that:
		\begin{equation*}
		\sum_{i=1}^N \inprod{\vv^{(i)}}{\vv^{(\sigma(i))}} < \sum_{i=1}^{N} \norm{\vv^{(i)}}^2.
		\end{equation*}
	\end{lemma}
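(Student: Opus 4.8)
The plan is to reduce the statement to two elementary inequalities — the Cauchy--Schwarz inequality and the scalar bound $ab\le\tfrac12(a^2+b^2)$ — applied termwise, and then to extract strictness from the equality cases together with the hypothesis that the $\vv^{(i)}$ are pairwise distinct. First I would bound each summand on the left-hand side: for every $i\in[N]$, Cauchy--Schwarz gives $\inprod{\vv^{(i)}}{\vv^{(\sigma(i))}}\le\norm{\vv^{(i)}}\,\norm{\vv^{(\sigma(i))}}$, and the arithmetic--geometric mean inequality gives $\norm{\vv^{(i)}}\,\norm{\vv^{(\sigma(i))}}\le\tfrac12\big(\norm{\vv^{(i)}}^2+\norm{\vv^{(\sigma(i))}}^2\big)$. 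Chaining these yields, for each $i$,
\begin{equation*}
\inprod{\vv^{(i)}}{\vv^{(\sigma(i))}}\le\tfrac12\Big(\norm{\vv^{(i)}}^2+\norm{\vv^{(\sigma(i))}}^2\Big).
\end{equation*}

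Next I would sum this inequality over $i\in[N]$. Since $\sigma$ is a permutation, $i\mapsto\sigma(i)$ is a bijection of $[N]$, so $\sum_{i=1}^N\norm{\vv^{(\sigma(i))}}^2=\sum_{i=1}^N\norm{\vv^{(i)}}^2$, and the right-hand side collapses to $\sum_{i=1}^N\norm{\vv^{(i)}}^2$. This already gives the non-strict inequality; the remaining task is to upgrade it to a strict one when $\sigma\neq\mathbb{I}_N$.

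For strictness — the only point genuinely requiring care — I would pin down when equality holds in the per-term bound. Because the middle quantity $\norm{\vv^{(i)}}\,\norm{\vv^{(\sigma(i))}}$ is sandwiched, equality in the displayed line forces equality in both steps simultaneously: the AM--GM step tight means $\norm{\vv^{(i)}}=\norm{\vv^{(\sigma(i))}}$, and the Cauchy--Schwarz step tight (with the product of norms, not its absolute value) means $\vv^{(i)}$ and $\vv^{(\sigma(i))}$ are non-negatively proportional; together these imply $\vv^{(i)}=\vv^{(\sigma(i))}$ (the non-negativity of the coordinates is not even needed for this implication). Now, if $\sigma\neq\mathbb{I}_N$ there is an index $i_0$ with $\sigma(i_0)\neq i_0$, and since the vectors are pairwise distinct, $\vv^{(i_0)}\neq\vv^{(\sigma(i_0))}$; hence the per-term inequality is strict at $i_0$ and non-strict elsewhere, so summing over $i$ yields $\sum_{i=1}^N\inprod{\vv^{(i)}}{\vv^{(\sigma(i))}}<\sum_{i=1}^N\norm{\vv^{(i)}}^2$, as claimed. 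I do not anticipate any real obstacle here: every step is a one-line application of a standard inequality, and the whole proof is short; the only place deserving attention is verifying that distinctness of the $\vv^{(i)}$ genuinely forces a strict loss at the moved index $i_0$.
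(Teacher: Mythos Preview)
Your proof is correct. The two-step bound $\inprod{\vv^{(i)}}{\vv^{(\sigma(i))}}\le\norm{\vv^{(i)}}\norm{\vv^{(\sigma(i))}}\le\tfrac12(\norm{\vv^{(i)}}^2+\norm{\vv^{(\sigma(i))}}^2)$ is valid for arbitrary real vectors, the summation collapses as you say because $\sigma$ is a bijection, and your equality analysis is right: equality in both steps forces $\vv^{(\sigma(i))}=\lambda\vv^{(i)}$ with $\lambda\ge 0$ and $\norm{\vv^{(i)}}=\norm{\vv^{(\sigma(i))}}$, hence $\vv^{(i)}=\vv^{(\sigma(i))}$ (the zero case included). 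Distinctness then gives strictness at any moved index.

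The paper takes a different route: it works coordinatewise, applying the scalar rearrangement inequality (Theorem~368 of Hardy--Littlewood--P\'olya) to each coordinate sequence $\{v^{(i)}_j\}_{i=1}^N$ to get $\sum_i v^{(i)}_j v^{(\sigma(i))}_j\le\sum_i (v^{(i)}_j)^2$, and then argues that for $\sigma\neq\mathbb{I}_N$ at least one coordinate must exhibit a strict gap, using the equality conditions of the rearrangement inequality. Your argument is more self-contained (no external citation needed), is arguably shorter, and, as you observed, does not actually use the non-negativity hypothesis on the entries; the paper's invocation of the rearrangement inequality is stated for non-negative sequences, which is why that hypothesis appears. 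Either approach proves the lemma, but yours is the more elementary and slightly more general one.
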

	\begin{proof}
		We rely on theorem 368 in~\citep{hardy1952inequalities}, which implies that
		for a set of non-negative numbers $\{a^{(1)},\ldots,a^{(N)}\}$ the following
		holds for all $\sigma\in S_N$:
		\begin{equation}\label{eq:rearrange}
		\sum_{i=1}^{N}a^{(i)}a^{({\sigma(i)})}\leq\sum_{i=1}^{N}(a^{(i)})^2,
		\end{equation}
		with equality obtained only for $\sigma$ which upholds
		$\sigma(i)=j\iff a^{(i)}=a^{(j)}$. The above relation, referred to as the
		\emph{rearrangement inequality}, holds separately for each component
		$j\in[\bar{R}]$ of the given vectors:
		\begin{equation*}
		\sum_{i=1}^{N}v_j^{(i)}v_j^{(\sigma(i))}\leq\sum_{i=1}^{N}(v_j^{(i)})^2.
		\end{equation*}
		We now prove that for all $\sigma\in S_N$ such that
		$\sigma\neq\mathbb{I}_N$, $\exists \hat{j}\in[\bar{R}]$ for which the above inequality
		is hard, \ie:
		\begin{equation}\label{hard_ineq}
		\sum_{i=1}^{N}v_{\hat{j}}^{(i)}v_{\hat{j}}^{(\sigma(i))}<\sum_{i=1}^{N}(v_{\hat{j}}^{(i)})^2.
		\end{equation}
		By contradiction, assume that $\exists\hat{\sigma}\neq\mathbb{I}_N$ for
		which $\forall j \in [\bar{R}]$:
		\begin{equation*}
		\sum_{i=1}^{N}v_j^{(i)}v_j^{(\hat{\sigma}(i))}=\sum_{i=1}^{N}(v_j^{(i)})^2.
		\end{equation*}
		From the conditions of achieving equality in the rearrangement inequality
		defined in Equation~\eqref{eq:rearrange}, it holds that
		$\forall j \in [\bar{R}]:~v_j^{(\hat{\sigma}(i))}= v_j^{(i)}$, trivially
		entailing: $\vv^{(\hat{\sigma}(i))}=\vv^{(i)}$. Thus,
		$\hat{\sigma}\neq\mathbb{I}_N$ would yield a contradiction to
		$\{\vv^{(i)}\}_{i=1}^{N}$ being a set of $N$ different vectors in $\R^{\bar{R}}$.
		Finally, the hard inequality of the lemma for $\sigma\neq\mathbb{I}_N$ is
		implied from Equation~\eqref{hard_ineq}:
		\begin{equation*}
		\sum_{i=1}^N \inprod{\vv^{(i)}}{\vv^{(\sigma(i))}}
		\equiv \sum_{i=1}^N \left(\sum_{j=1}^{\bar{R}} v_j^{(i)} v_j^{(\sigma(i))}\right)
		= \sum_{j=1}^{\bar{R}} \left(\sum_{i=1}^N v_j^{(i)} v_j^{(\sigma(i))}\right)
		< \sum_{j=1}^{\bar{R}} \left(\sum_{i=1}^N (v_j^{(i)})^2 \right)
		= \sum_{i=1}^N \norm{\vv^{(i)}}^2.
		\end{equation*}
	\end{proof}

	The vector rearrangement inequality in Lemma~\ref{lemma:rearrange}, helped us
	ensure that our matrix of interest denoted $\bar{U}$ upholds the conditions of
	Lemma~\ref{lemma:poly_full_rank} and is thus fully ranked. Below, we show an identity that allowed us to
	make combinatoric sense of a convoluted expression:
	\begin{claim}\label{claim:decomp}
		Let $\bar{R}$ and $M$ be positive integers, let $Z \in \R^{\bar{R}\times M}$
		be a matrix, and let $\A$ be a tensor with $T$
		modes, each of dimension $M$, defined by $\A_{d_1,\ldots,d_T} \equiv
		\prod_{t=\nicefrac{T}{2}+1}^{T}\sum_{r=1}^{\bar{R}}\prod_{j=1}^{t}Z_{rd_j}$,
		where $d_1,\ldots,d_T \in [M]$. Then, the following identity holds:
		\begin{align*}
		\A_{d_1,\ldots,d_T} &=
		\sum_{\substack{\p^{(\nicefrac{T}{2})} \\ \in \state{\bar{R}}{\nicefrac{T}{2}}}}
		\sum_{\substack{(\p^{(\nicefrac{T}{2}-1)},\ldots,\p^{(1)}) \\ \in \trajectory{\p^{(\nicefrac{T}{2})}}}}
		\prod_{r=1}^{\bar{R}}
		\left(\prod_{j=1}^{\nicefrac{T}{2}} Z_{r d_j}^{p^{(\nicefrac{T}{2})}_r}\right)
		\left(\prod_{j=\nicefrac{T}{2}+1}^T Z_{r d_j}^{p_r^{(T-j+1)}}\right),
		\end{align*}
		where $\state{\bar{R}}{K} \equiv \{\p^{(K)} \in (\N \cup \{0\})^{\bar{R}} |
		\sum_{i=1}^{\bar{R}} p_i = K\}$,
		and $\trajectory{\p^{(K)}} \equiv \{ (\p^{(K-1)}, \ldots, \p^{(1)}) |
		\forall k \in [K-1], (\p^{(k)} \in \state{\bar{R}}{k} \wedge
		\forall r \in [\bar{R}], p^{(k)}_r \leq p_r^{(k+1)})\}$.
		\footnote{See Appendix~\ref{app:proofs:main_result:deep} for a more intuitive
			definition of the sets $\state{\bar{R}}{K}$ and
			$\trajectory{\p^{(T-k+1)}}$.}
	\end{claim}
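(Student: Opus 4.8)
The plan is to prove the identity by distributing the outer product over $t$ into a sum over tuples of ``row choices'', and then regrouping the resulting monomials according to how many times each row index is selected — an enumeration that will reproduce exactly the ``bucket of balls'' bookkeeping encoded by $\state{\bar{R}}{\cdot}$ and $\trajectory{\cdot}$. Throughout, the only structure being used is that the product over $r$ of the $Z$-powers factors through the multiplicities of the chosen rows.

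First I would apply distributivity to
\[
\A_{d_1,\ldots,d_T} \;=\; \prod_{t=\nicefrac{T}{2}+1}^{T}\Biggl(\,\sum_{r=1}^{\bar{R}}\prod_{j=1}^{t}Z_{rd_j}\Biggr),
\]
choosing one summand $r_t\in[\bar{R}]$ from each of the $\nicefrac{T}{2}$ factors indexed by $t\in\{\nicefrac{T}{2}+1,\ldots,T\}$, which gives
\[
\A_{d_1,\ldots,d_T} \;=\; \sum_{r_{\nicefrac{T}{2}+1},\ldots,r_T=1}^{\bar{R}}\ \prod_{t=\nicefrac{T}{2}+1}^{T}\ \prod_{j=1}^{t} Z_{r_t d_j}.
\]
For a fixed tuple $(r_{\nicefrac{T}{2}+1},\ldots,r_T)$ I would then collect the total exponent of each factor $Z_{rd_j}$. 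For a ``Start'' index $j\le\nicefrac{T}{2}$ every $t$ satisfies $t>\nicefrac{T}{2}\ge j$, so $Z_{rd_j}$ occurs once for each $t$ with $r_t=r$; its exponent is therefore $p^{(\nicefrac{T}{2})}_r:=\abs{\{t:r_t=r\}}$, and since these sum to $\nicefrac{T}{2}$ we get $\p^{(\nicefrac{T}{2})}\in\state{\bar{R}}{\nicefrac{T}{2}}$. For an ``End'' index written as $j=T-k+1$ with $k\in[\nicefrac{T}{2}]$, the factor $Z_{rd_j}$ occurs precisely for those $t\ge j$ with $r_t=r$, so its exponent is $p^{(k)}_r:=\abs{\{t\in\{T-k+1,\ldots,T\}:r_t=r\}}$, the number of entries equal to $r$ among the last $k$ entries of the tuple; these sum to $k$ and satisfy $p^{(k)}_r\le p^{(k+1)}_r$, so $(\p^{(\nicefrac{T}{2}-1)},\ldots,\p^{(1)})\in\trajectory{\p^{(\nicefrac{T}{2})}}$. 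Hence the monomial contributed by this tuple is exactly $\prod_{r=1}^{\bar{R}}\bigl(\prod_{j=1}^{\nicefrac{T}{2}} Z_{rd_j}^{p^{(\nicefrac{T}{2})}_r}\bigr)\bigl(\prod_{j=\nicefrac{T}{2}+1}^{T} Z_{rd_j}^{p_r^{(T-j+1)}}\bigr)$, matching the summand on the right-hand side.

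It then remains to verify that the assignment $(r_{\nicefrac{T}{2}+1},\ldots,r_T)\mapsto(\p^{(\nicefrac{T}{2})},\p^{(\nicefrac{T}{2}-1)},\ldots,\p^{(1)})$ is a bijection from $[\bar{R}]^{\nicefrac{T}{2}}$ onto the index set of the double sum on the right-hand side. For injectivity: setting $\p^{(0)}:=\0$, the chain conditions force $\sum_r(p^{(k)}_r-p^{(k-1)}_r)=1$ with nonnegative integer summands, so for each $k\in[\nicefrac{T}{2}]$ there is a unique color $r^{(k)}$ with $p^{(k)}_{r^{(k)}}-p^{(k-1)}_{r^{(k)}}=1$, and one recovers $r_{T-k+1}=r^{(k)}$. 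For surjectivity: given any valid chain, define the tuple by this formula and check via the telescoping identity $\sum_{k'=1}^{k}(p^{(k')}_r-p^{(k'-1)}_r)=p^{(k)}_r$ that it maps back to the chain. Since the left- and right-hand sides are thus sums of the very same monomials over index sets placed in bijection, the identity follows.

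I expect the main obstacle to be purely the index bookkeeping in the regrouping step: in particular the reindexing $k=T-j+1$ that converts ``a prefix of the product running up to $t$'' into ``a suffix of the tuple of length $k$'', together with checking the monotonicity $p^{(k)}_r\le p^{(k+1)}_r$ that places the truncated chain inside $\trajectory{\cdot}$. Once these conventions are pinned down, the remaining verification is routine combinatorial counting.
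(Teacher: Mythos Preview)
Your argument is correct. The key computation --- that the exponent of $Z_{rd_j}$ in the monomial indexed by a tuple $(r_{\nicefrac{T}{2}+1},\ldots,r_T)$ is the count $\abs{\{t\ge j : r_t=r\}}$, which for $j=T-k+1$ is exactly $p^{(k)}_r$ --- is right, and your bijection between tuples in $[\bar{R}]^{\nicefrac{T}{2}}$ and chains $(\p^{(\nicefrac{T}{2})},\ldots,\p^{(1)})$ via successive differences is clean and complete.

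The paper takes a different route: it generalizes the claim to $\A^{(k)}_{d_1,\ldots,d_T}\equiv\prod_{t=k}^{T}\sum_{r}\prod_{j=1}^{t}Z_{rd_j}$ and proves the corresponding identity by downward induction on $k$, with base case $k=T$ and the inductive step using the recursion $\A^{(k)}=\bigl(\sum_{\bar r}\prod_{j=1}^{k}Z_{\bar r d_j}\bigr)\cdot\A^{(k+1)}$ together with a one-step bijection $(\bar r,\p^{(T-k)},\ldots,\p^{(1)})\mapsto(\p^{(T-k+1)},\ldots,\p^{(1)})$ defined by $p^{(T-k+1)}_r=p^{(T-k)}_r+\delta_{\bar r r}$. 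Your single global bijection is precisely the composite of these $\nicefrac{T}{2}$ one-step bijections, so the two proofs are equivalent at the level of combinatorics. What your approach buys is transparency: it makes explicit that the chain $(\p^{(k)})_k$ is nothing more than the sequence of suffix-multiplicity vectors of a tuple of row choices, which in turn clarifies why $\state{\bar R}{\cdot}$ and $\trajectory{\cdot}$ are the right index sets. The paper's inductive version, by contrast, requires less global bookkeeping at any one step and would scale more mechanically if one wanted the analogous identity at a general cut $k\neq\nicefrac{T}{2}+1$.
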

	\begin{proof}
		We will prove the following more general identity by induction. For any
		$k \in [T]$, define $\A^{(k)}_{d_1,\ldots,d_T} \equiv
		\prod_{t=k}^{T}\sum_{r=1}^{\bar{R}}\prod_{j=1}^{t}Z_{rd_j}$, then the
		following identity holds:
		\begin{align*}
		\A^{(k)}_{d_1,\ldots,d_T} &=
		\smashoperator[l]{\sum_{\substack{\p^{(T-k+1)} \\ \in \state{\bar{R}}{T-k+1}}}}
		\sum_{\substack{(\p^{(T-k)},\ldots,\p^{(1)}) \\ \in \trajectory{\p^{(T-k+1)}}}}
		\prod_{r=1}^{\bar{R}}
		\left(\prod_{j=1}^{k-1} Z_{r d_j}^{p^{(T-k+1)}_r}\right)
		\left(\smashoperator[r]{\prod_{j=k}^T} Z_{r d_j}^{p_r^{(T-j+1)}}\right).
		\end{align*}
		
		The above identity coincides with our claim for $k=\nicefrac{T}{2}+1$
		We begin with the base case of $k=T$, for which the set $\state{\bar{R}}{1}$
		simply equals to the unit vectors of $(\N\cup\{0\})^{\bar{R}}$, i.e. for
		each such $\p^{(1)}$ there exists $\bar{r} \in [\bar{R}]$ such that
		$p^{(1)}_r = \delta_{\bar{r} r} \equiv
		\begin{cases} 1 & \bar{r} = r \\ 0 & \bar{r} \neq r \end{cases}$. Thus,
		the following equalities hold:
		\begin{align*}
		\sum_{\p^{(1)} \in \state{\bar{R}}{1}}
		\prod_{r=1}^{\bar{R}} \prod_{j=1}^T Z_{r d_j}^{p^{(1)}_r}
		&= \sum_{\bar{r}=1}^{\bar{R}} \prod_{r=1}^{\bar{R}} \prod_{j=1}^T
		Z_{r d_j}^{\delta_{\bar{r} r}}
		= \sum_{\bar{r}=1}^{\bar{R}} \prod_{j=1}^T Z_{\bar{r} d_j}
		= \A^{(T)}_{d_1,\ldots,d_T}.
		\end{align*}
		
		By induction on $k$, we assume that the claim holds for $\A^{(k+1)}$ and
		prove it on $\A^{(k)}$. First notice that we can rewrite our claim for
		$k < T$ as:
		\begin{align}\label{eq:proof:decomp:alternative}
		\A^{(k)}_{d_1,\ldots,d_T} &=
		\smashoperator[l]{\sum_{\substack{\p^{(T-k+1)} \\ \in \state{\bar{R}}{T-k+1}}}}
		\sum_{\substack{(\p^{(T-k)},\ldots,\p^{(1)}) \\ \in \trajectory{\p^{(T-k+1)}}}}
		\prod_{r=1}^{\bar{R}}
		\left(\prod_{j=1}^{k} Z_{r d_j}^{p^{(T-k+1)}_r}\right)
		\left(\smashoperator[r]{\prod_{j=k+1}^T} Z_{r d_j}^{p_r^{(T-j+1)}}\right),
		\end{align}
		where we simply moved the $k$'th term $Z^{p^{(k)}_r}_{r d_k}$ in the right
		product expression to the left product. We can also can rewrite $\A^{(k)}$
		as a recursive formula:
		\begin{align*}
		\A^{(k)}_{d_1,\ldots,d_T} &=
		\left(\sum_{r=1}^{\bar{R}} \prod_{j=1}^k Z_{r d_j}\right)
		\cdot \A^{(k+1)}_{d_1,\ldots,d_T}
		= \left(\sum_{\bar{r}=1}^{\bar{R}} \prod_{r=1}^{\bar{R}} \prod_{j=1}^k
		Z_{r d_j}^{\delta_{\bar{r} r}}\right)
		\cdot \A^{(k+1)}_{d_1,\ldots,d_T}
		\end{align*}.
		Then, employing our induction assumption for $\A^{(k+1)}$, results in:
		\begin{align}
		\A^{(k)}_{d_1,\ldots,d_T}
		&=
		\left(\sum_{\bar{r}=1}^{\bar{R}} \prod_{r=1}^{\bar{R}} \prod_{j=1}^k
		Z_{r d_j}^{\delta_{\bar{r} r}}\right)\quad
		\smashoperator[l]{\sum_{\substack{\p^{(T-k)} \\ \in \state{\bar{R}}{T-k}}}}
		\sum_{\substack{(\p^{(T-k-1)},\ldots,\p^{(1)}) \\ \in \trajectory{\p^{(T-k)}}}}
		\prod_{r=1}^{\bar{R}}
		\left(\prod_{j=1}^k Z_{r d_j}^{p^{(T-k)}_r}\right)
		\left(\smashoperator[r]{\prod_{j=k+1}^T} Z_{r d_j}^{p_r^{(T-j+1)}}\right)
		\nonumber \\
		&=
		\sum_{\bar{r}=1}^{\bar{R}}
		{\sum_{\substack{\p^{(T-k)} \\ \in \state{\bar{R}}{T-k}}}}
		\sum_{\substack{(\p^{(T-k-1)},\ldots,\p^{(1)}) \\ \in \trajectory{\p^{(T-k)}}}}
		\prod_{r=1}^{\bar{R}}
		\left(\prod_{j=1}^k Z_{r d_j}^{p^{(T-k)}_r + \delta_{\bar{r} r}}\right)
		\left(\smashoperator[r]{\prod_{j=k+1}^T} Z_{r d_j}^{p_r^{(T-j+1)}}\right)
		\label{eq:proof:decomp:almost}
		\end{align}
		To prove that the right hand side of Equation~\eqref{eq:proof:decomp:almost} is
		equal to our alternative form of our claim given by
		Equation~\eqref{eq:proof:decomp:alternative}, it is sufficient to show a
		bijective mapping from the terms in the sum of
		Equation~\eqref{eq:proof:decomp:almost}, each specified by a sequence
		$(\bar{r}, \p^{(T-k)}, \ldots, \p^{(1)})$, where $\bar{r} \in [\bar{R}]$,
		$\p^{(T-k)} \in \state{\bar{R}}{T-k}$, and $(\p^{(T-k-1)}, \ldots, \p^{(1)})
		\in \trajectory{\p^{(T-k)}}$, to the terms in the sum of
		Equation~\eqref{eq:proof:decomp:alternative}, each specified by a similar sequence
		$(\p^{(T-k+1)}, \p^{(T-k)}, \ldots, \p^{(1)})$, where $\p^{(T-k+1)} \in
		\state{\bar{R}}{T-k+1}$ and $(\p^{(T-k)},\ldots,\p^{(1)}) \in
		\trajectory{\p^{(T-k+1)}}$.
		
		Let $\phi$ be a mapping such that
		$(\bar{r}, \p^{(T-k)}, \ldots, \p^{(1)}) \overset{\phi}{\mapsto}
		(\p^{(T-k+1)}, \p^{(T-k)}, \ldots, \p^{(1)})$, where
		$p^{(T-k+1)}_r \equiv p^{(T-k)}_r + \delta_{\bar{r} r}$.
		$\phi$ is injective, because if $\phi(\bar{r}_1, \p^{(T-k, 1)}, \ldots,
		\p^{(1, 1)}) = \phi(\bar{r}_2, \p^{(T-k, 2)}, \ldots, \p^{(1, 2)})$ then
		for all $j \in \{1,\ldots,T-k+1\}$ it holds that $\p^{(j,1)} = \p^{(j,2)}$,
		and specifically for $\p^{(T-k+1, 1)} = \p^{(T-k+1, 2)}$ it entails that
		$\delta_{\bar{r}_1 r} = \delta_{\bar{r}_2 r}$, and thus
		$\bar{r}_1 = \bar{r}_2$.
		$\phi$ is surjective, because for any sequence
		$(\p^{(T-k+1)}, \p^{(T-k)}, \ldots, \p^{(1)})$, for which it holds that
		$\forall j, \p^{(j)} \in (\N \cup \{0\})^{\bar{R}}$,
		$\sum_{r=1}^{\bar{R}} p^{(j)}_r = j$, and
		$\forall r, p^{(j)}_r \leq p^{(j+1)}_r$, then it must also holds that
		$p^{(T-k+1)}_r - p^{(T-k)}_r = \delta_{\bar{r} r}$ for some $\bar{r}$,
		since $\sum_{r=1}^{\bar{R}} (p^{(T-k+1)}_r -p^{(T-k)}_r) = (T-k+1)-(T-k) =1$
		and every summand is a non-negative integer.
		
	\end{proof}
	
	Finally, Lemma~\ref{lemma:combinatoric_induction} assists us in ensuring that our matrix of interest denoted $\bar{V}$ upholds the conditions of
	Lemma~\ref{lemma:poly_full_rank} and is thus fully ranked:
	\begin{lemma}\label{lemma:combinatoric_induction}
		Let $\Omega \in \R_+$ be a positive real number.
		For every $\p^{(\nicefrac{T}{2})} \in \state{\bar{R}}{\nicefrac{T}{2}}$
		(see definition in Claim~\ref{claim:decomp}) and every
		$\dd = (d_{\nicefrac{T}{2}+1},\ldots,d_T) \in [\bar{R}]^{\nicefrac{T}{2}}$,
		where $\forall j, d_j \leq d_{j+1}$, we define the following optimization
		problem:
		\begin{equation*}
		f(\dd,\p^{(\nicefrac{T}{2})}) = \max_{\substack{(\p^{(\nicefrac{T}{2}-1)},\ldots,\p^{(1)}) \\ \in \trajectory{\p^{(\nicefrac{T}{2})}}}}
		\sum_{j=\nicefrac{T}{2}+1}^T
		\Omega^{d_j} p^{(T-j+1)}_{d_j},
		\end{equation*}
		where $\trajectory{\p^{(\nicefrac{T}{2})}}$ is defined as in Claim~\ref{claim:decomp}.
		Then, there exists $\Omega$ such that for every such
		$\dd$ the maximal value of $f(\dd,\p^{(\nicefrac{T}{2})})$ over all $\p^{(\nicefrac{T}{2})} \in \state{\bar{R}}{\nicefrac{T}{2}}$ is
		strictly attained at $\hat{\p}^{(\nicefrac{T}{2})}$ defined by
		$\hat{p}^{(\nicefrac{T}{2})}_r = \abs{\{j \in \{\nicefrac{T}{2}+1,\ldots,T\} | d_j = r\}}$.
	\end{lemma}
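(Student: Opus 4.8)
The plan is to recast the optimization as a ball-deletion scheduling problem and prove the (stronger) uniqueness assertion by induction on the number of colors $\bar R$. Write $K=\nicefrac{T}{2}$ and relabel the time-steps $\nicefrac{T}{2}+1,\dots,T$ as $1,\dots,K$, so that at step $i$ the state is $\p^{(K-i+1)}$ and the query is $d_i\in[\bar R]$, with $\dd$ non-decreasing. Deleting the single surviving ball after the last step changes no reward, so $f(\dd,\p)$ equals the maximum, over chains $\p=\p^{(K)}\supset\p^{(K-1)}\supset\dots\supset\p^{(0)}=\0$ each deleting one ball, of $\sum_{i=1}^{K}\Omega^{d_i}p^{(K-i+1)}_{d_i}$; equivalently, a ball of color $c$ is alive from step $1$ until its deletion step $\tau$ and contributes $\Omega^{c}$ to each step $i\le\tau$ with $d_i=c$, and we maximize over deletion schedules. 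Since $\dd$ is non-decreasing the steps querying a color $r$ form a contiguous block $J_r$ of $\hat p_r\equiv\abs{\{i\in[K]:d_i=r\}}$ steps, and these blocks are ordered by color; here $\hat\p$ is exactly $\hat\p^{(\nicefrac{T}{2})}$. We fix $\Omega>K^2+1$ once and for all; since every per-color score $R_r\equiv\sum_{i\in J_r}(\text{number of color-}r\text{ balls alive at step }i)$ obeys $R_r\le K^2$, maximizing $f$ is equivalent to lexicographically maximizing $(R_{\bar R},R_{\bar R-1},\dots,R_1)$, which is the structure the induction relies on.

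Two observations do most of the work. \emph{Conversion}: if color $r$ is not queried, $p_r\ge1$, and $c$ is the smallest queried color, then shifting one ball from color $r$ to color $c$ strictly increases $f$ --- taking an optimal schedule for $\p$, the never-rewarded color-$r$ ball is deleted at some step $\tau\ge1$, and replacing it by a color-$c$ ball deleted at the same step yields a schedule for the modified supply whose reward exceeds $f(\dd,\p)$ by $\Omega^{c}\min(\tau,\hat p_c)\ge\Omega^{c}$. \emph{Capacity}: because $J_{\bar R}$ consists of the last $\hat p_{\bar R}$ steps, exactly $\hat p_{\bar R}$ balls are alive when $J_{\bar R}$ begins and the total count falls by one at each later step, so at the $i$-th step of $J_{\bar R}$ the color-$\bar R$ count is at most $\min(p_{\bar R},\hat p_{\bar R}-i+1)$; summing gives $R_{\bar R}\le\binom{\hat p_{\bar R}+1}{2}$, with deficit at least $\binom{\hat p_{\bar R}-p_{\bar R}+1}{2}\ge1$ whenever $p_{\bar R}<\hat p_{\bar R}$, while equality is attainable precisely when $p_{\bar R}\ge\hat p_{\bar R}$ and the schedule deletes, over the first $K-\hat p_{\bar R}$ steps, exactly the $K-p_{\bar R}$ balls of color $\neq\bar R$ together with the $p_{\bar R}-\hat p_{\bar R}$ surplus color-$\bar R$ balls.

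The induction on $\bar R$ has a trivial base case ($\bar R=1$ or $K=0$, where $\state{\bar R}{K}$ is a singleton) and three cases in the step. If color $\bar R$ is not queried, iterating Conversion strictly raises $f$ until the supply is carried by queried colors only, after which the $(\bar R-1)$-color hypothesis applies. If color $\bar R$ is queried and $p_{\bar R}<\hat p_{\bar R}$, Capacity gives $\max_{\text{schedules}}R_{\bar R}\le\binom{\hat p_{\bar R}+1}{2}-1$ whereas $\hat\p$ attains $\binom{\hat p_{\bar R}+1}{2}$, and the choice of $\Omega$ upgrades this to $f(\dd,\p)<f(\dd,\hat\p)$. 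If color $\bar R$ is queried and $p_{\bar R}\ge\hat p_{\bar R}$, both $\p$ and $\hat\p$ attain the maximal $R_{\bar R}=\binom{\hat p_{\bar R}+1}{2}$, and among $R_{\bar R}$-optimal schedules the residual objective $\sum_{r<\bar R}\Omega^r R_r$ is exactly the same problem on $\bar R-1$ colors, with query sequence $(d_1,\dots,d_{K-\hat p_{\bar R}})$ and supply $(p_1,\dots,p_{\bar R-1})$ together with $p_{\bar R}-\hat p_{\bar R}$ never-queried dummy balls; clearing the dummies by Conversion and invoking the inductive hypothesis, this residual value is uniquely maximized when there are no dummies and $(p_1,\dots,p_{\bar R-1})$ is the histogram of $(d_1,\dots,d_{K-\hat p_{\bar R}})$, \ie~exactly when $\p=\hat\p$.

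The step I expect to require the most care is this last one: checking that an $R_{\bar R}$-optimal schedule genuinely decouples into ``delete every ball of color $\neq\bar R$ and every surplus color-$\bar R$ ball during the first $K-\hat p_{\bar R}$ steps, then empty $J_{\bar R}$ in the forced way'', and that the remaining degrees of freedom constitute an honest smaller instance of the very same problem (with dummy balls) to which the inductive hypothesis together with the Conversion lemma applies. The Capacity bound with its equality characterization is the crux of this reduction; the rest is bookkeeping and the one-line choice of $\Omega$.
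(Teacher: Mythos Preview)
Your proof is correct and shares the paper's core mechanism: pick $\Omega$ large enough (the paper uses $\Omega>(T/2)^2$, you use $\Omega>K^2+1$) so that maximizing $f$ reduces to lexicographically maximizing $(R_{\bar R},\dots,R_1)$, then use the capacity bound $R_{\bar R}\le\binom{\hat p_{\bar R}+1}{2}$ with strict deficit when $p_{\bar R}<\hat p_{\bar R}$.

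The organization differs. The paper argues by contradiction inside a single instance: it fixes $\p\neq\hat\p$ with $f(\dd,\p)\ge\rho^*$, then inducts downward on the color index $r$ from $\bar R$ to $1$ to force $p_r\ge\hat p_r$ for every $r$, yielding $\sum_r p_r>\sum_r\hat p_r$, a contradiction. You instead induct on the number of colors $\bar R$, reducing to a genuine smaller instance; to make that reduction go through you introduce the Conversion lemma (shifting a never-queried ball to the smallest queried color strictly raises $f$), which the paper does not need and does not state. Your decoupling step in Case~3 --- that $R_{\bar R}$-optimal schedules force the last $\hat p_{\bar R}$ deletions and leave an honest $(\bar R-1)$-color instance on the first $K-\hat p_{\bar R}$ steps with $p_{\bar R}-\hat p_{\bar R}$ dummy balls --- is exactly right and is where your equality characterization in the Capacity lemma does its work. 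The net effect is a slightly more modular argument than the paper's, at the cost of one extra lemma; the paper's version is shorter but correspondingly sketchier about how the induction step actually constrains the optimal schedule at lower colors.
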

	\begin{proof}
		We will prove the lemma by first considering a simple strategy for choosing
		the trajectory for the case of $f(\dd, \hat{\p}^{(\nicefrac{T}{2})})$, achieving
		a certain reward $\rho^*$, and then showing that it is strictly larger than the
		rewards attained for all of the possible trajectories of any other
		$\p^{(\nicefrac{T}{2})} \neq \hat{\p}^{(\nicefrac{T}{2})}$.
		
		Our basic strategy is to always pick the ball of the lowest available color $r$.
		More specifically, if $\hat{p}^{(\nicefrac{T}{2})}_1 > 0$, then in the first
		$\hat{p}^{(\nicefrac{T}{2})}_1$ time-steps we remove balls of the color $1$,
		in the process of which we accept a reward of
		$\Omega^1 \hat{p}^{(\nicefrac{T}{2})}_1$ in the first time-step,
		$\Omega^1 (\hat{p}^{(\nicefrac{T}{2})}_1 - 1)$ in the second time-step,
		and so on to a total reward of
		$\Omega^1 \sum_{i=1}^{\hat{p}^{(\nicefrac{T}{2})}_1} i$.
		Then, we proceed to removing $\hat{p}^{(\nicefrac{T}{2})}_2$ balls of color $2$, and
		so forth. This strategy will result in an accumulated reward of:
		\begin{equation*}
		\rho^* \equiv \sum_{r=1}^{\bar{R}} \Omega^r
		\sum_{i=1}^{\hat{p}^{(\nicefrac{T}{2})}_r} i.
		\end{equation*}
		
		Next, we assume by contradiction that there exists
		$\p^{(\nicefrac{T}{2})} \neq \hat{\p}^{(\nicefrac{T}{2})}$ such that
		$\rho \equiv f(\dd, \p^{(\nicefrac{T}{2})}) \geq \rho^*$. We show by induction that this
		implies $\forall r, p^{(\nicefrac{T}{2})}_r \geq \hat{p}^{(\nicefrac{T}{2})}_r$,
		which would result in a contradiction, since per our assumption $\p^{(\nicefrac{T}{2})} \neq \hat{\p}^{(\nicefrac{T}{2})}$ this means that there is $r$ such
		that $p^{(\nicefrac{T}{2})}_r > \hat{p}^{(\nicefrac{T}{2})}_r$, but since
		$\p^{(\nicefrac{T}{2})}, \hat{\p}^{(\nicefrac{T}{2})} \in \state{\bar{R}}{\nicefrac{T}{2}}$ then the following contradiction arises
		$\nicefrac{T}{2} = \sum_{r=1}^{\bar{R}} p^{(\nicefrac{T}{2})}_r >
		\sum_{r=1}^{\bar{R}} \hat{p}^{(\nicefrac{T}{2})}_r = \nicefrac{T}{2}$.
		More specifically, we show that our assumption entails that for all $r$ starting with $r=\bar{R}$ and
		down to $r=1$, it holds that
		$p^{(\nicefrac{T}{2})}_r \geq \hat{p}^{(\nicefrac{T}{2})}_r$.
		
		Before we begin proving the induction, we choose a value for $\Omega$ that upholds
		$\Omega>(\nicefrac{T}{2})^{2}$ such that the following condition holds: for any $r \in [\bar{R}]$,
		the corresponding weight for the color $r$, i.e. $\Omega^r$, is strictly
		greater than $\Omega^{r-1} (\nicefrac{T}{2})^{2}$. Thus, adding the
		reward of even a single ball of color $r$ is always preferable over
		any possible amount of balls of color $r' < r$.
		
		We begin with the base case of $r=\bar{R}$. If $\hat{p}^{(\nicefrac{T}{2})} = 0$
		the claim is trivially satisfied. Otherwise, we assume by contradiction that
		$p^{(\nicefrac{T}{2})}_{\bar{R}} < \hat{p}^{(\nicefrac{T}{2})}_{\bar{R}}$.
		If $p^{(\nicefrac{T}{2})}_{\bar{R}} = 0$, then the weight of the color
		$\bar{R}$ is not part of the total reward $\rho$, and per our choice of $\Omega$
		it must hold that $\rho < \rho^*$ since $\rho^*$ does include a term of $\Omega^{\bar{R}}$ by definition. Now, we examine the last state of the
		trajectory $\p^{(1)}$, where there is a single ball left in the bucket.
		Per our choice of $\Omega$, if $p^{(1)}_{\bar{R}} = 0$, then once again
		$\rho < \rho^*$, implying that $p^{(1)}_{\bar{R}} = 1$. Following the
		same logic, for $j \in [p^{(\nicefrac{T}{2})}_{\bar{R}}]$, it holds that
		$p^{(j)}_{\bar{R}} = j$. Thus the total contribution of the $\bar{R}$'th
		weight is at most:
		\begin{equation}\label{eq:combinatoric_induction:base}
		\Omega^{\bar{R}} \left((\hat{p}^{(\nicefrac{T}{2})}_{\bar{R}} - p^{(\nicefrac{T}{2})}_{\bar{R}})
		\cdot p^{(\nicefrac{T}{2})}_{\bar{R}}
		+ \sum_{i=1}^{p^{(\nicefrac{T}{2})}_{\bar{R}}} i\right).
		\end{equation}
		This is because before spending
		all of the $p^{(\nicefrac{T}{2})}_{\bar{R}}$ balls of color $\bar{R}$ at the end,
		there are another $(\hat{p}^{(\nicefrac{T}{2})}_{\bar{R}} - p^{(\nicefrac{T}{2})}_{\bar{R}})$
		time-steps at which we add to the reward a value of $p^{(\nicefrac{T}{2})}_{\bar{R}}$.
		However, since Equation~\eqref{eq:combinatoric_induction:base} is strictly less than the corresponding contribution of $\Omega^{\bar{R}}$ in $\rho^*$:
		$\Omega^{\bar{R}} \sum_{i=1}^{\hat{p}^{(\nicefrac{T}{2})}} i$, then it follows that
		$\rho < \rho^*$, in contradiction to our assumption, which implies that to uphold the assumption the following must hold:
		$p^{(\nicefrac{T}{2})}_{\bar{R}} \geq \hat{p}^{(\nicefrac{T}{2})}_{\bar{R}}$, proving the induction base.
		
		Assuming our induction hypothesis holds for all $r' > r$, we show it also holds
		for $r$. Similar to our base case, if $\hat{p}^{(\nicefrac{T}{2})}_r = 0$ then
		our claim is trivially satisfied, and likewise if $p^{(\nicefrac{T}{2})}_r =0$,
		hence it remains to show that the case of
		$p^{(\nicefrac{T}{2})}_{\bar{R}} < \hat{p}^{(\nicefrac{T}{2})}_{\bar{R}}$ is
		not possible. First, according to our hypothesis, $\forall r' > r,
		p^{(\nicefrac{T}{2})}_{r'} \geq \hat{p}^{(\nicefrac{T}{2})}_{r'}$, and per our choice
		of $\Omega$, the contributions to the reward of all of the weights for $r' > r$,
		are at most $\sum_{r'=r+1}^{\bar{R}} \Omega^{r'}
		\sum_{i=1}^{\hat{p}^{(\nicefrac{T}{2})}_{r'}} i$, which is exactly equal to the
		corresponding contributions in $\rho^*$. This means that per our choice of
		$\Omega$ it suffices to show that the contributions originating in the color $r$
		are strictly less than the ones in $\rho^*$ to prove our hypothesis.
		In this optimal setting, the state of the bucket at time-step
		$j = \nicefrac{T}{2} - \sum_{r'=r+1}^{\bar{R}} \hat{p}^{(\nicefrac{T}{2})}_{r'}$
		must upholds $p^{(j)}_{r'} = \hat{p}^{(\nicefrac{T}{2})}_{r'}$ for $r' > r$, and
		zero otherwise. At this point, employing exactly the same logic as in
		our base case, the total contribution to the reward of the weight for the
		$r$'th color is at most:
		\begin{equation}\label{eq:combinatoric_induction:induction}
		\Omega^{r} \left((\hat{p}^{(\nicefrac{T}{2})}_{r} - p^{(\nicefrac{T}{2})}_{r})
		\cdot p^{(\nicefrac{T}{2})}_{r}
		+ \sum_{i=1}^{p^{(\nicefrac{T}{2})}_{r}} i\right),
		\end{equation}
		which is strictly less than the respective contribution in $\rho^*$.
		
	\end{proof}

	\section{Matricization Definition}\label{app:mat}
	
	Suppose $\A\in\R^{M{\times\cdots\times}M}$
	is a tensor of order $T$, and let $(I,J)$ be a partition of $[T]$, \ie~$I$
	and~$J$ are disjoint subsets of $[T]$ whose union gives~$[T]$.
	The \emph{matricization of $\A$ \wrt~the partition $(I,J)$}, denoted
	$\mat{\A}_{I,J}$, is the $M^{\abs{I}}$-by-$M^{\abs{J}}$ matrix holding the entries of $\A$ such that $\A_{d_1{\ldots}d_T}$ is placed in row index $1+\sum_{t=1}^{\abs{I}}(d_{i_t}-1)M^{\abs{I}-t}$ and column index $1+\sum_{t=1}^{\abs{J}}(d_{j_t}-1)M^{\abs{J}-t}$.
	
\end{document}